\def\eqref#1{equation~\ref{#1}}
\def\1{\bm{1}}
\DeclareMathAlphabet{\mathsfit}{\encodingdefault}{\sfdefault}{m}{sl}
\SetMathAlphabet{\mathsfit}{bold}{\encodingdefault}{\sfdefault}{bx}{n}
\newtheorem{proposition}{Proposition}
\newtheorem{theorem}{Theorem}
\newcommand{\s}[1]{\mathbf{#1}}
\newcommand{\prox}{\operatorname{Prox}}
\newcommand{\eat}[1]{}
\newcommand{\overbar}[1]{\mkern 1.5mu\overline{\mkern-1.5mu#1\mkern-1.5mu}\mkern 1.5mu}
\newcommand{\overhat}[1]{\mkern 1.5mu\widehat{\mkern-1.5mu#1\mkern-1.5mu}\mkern 1.5mu}
\title{Learning with Logical Constraints but without Shortcut Satisfaction}
\author{
  Zenan Li$^1$,  Zehua Liu$^2$, Yuan Yao$^1$, Jingwei Xu$^1$,  Taolue Chen$^3$, Xiaoxing Ma$^1$, Jian L\"{u}$^1$  \\
 \\
$^1$State Key Lab of Novel Software Technology, Nanjing University, China \\
%$^1$State Key Lab of Novel Software Technology and \\
%Department of Computer Science and Technology, Nanjing University, China \\
$^2$Department of Mathematics, The University of Hong Kong, Hong Kong \\
$^3$Department of Computer Science, Birkbeck, University of London, UK \\
\texttt{lizn@smail.nju.edu.cn}, \texttt{liuzehua@connect.hku.hk}, \\
\texttt{t.chen@bbk.ac.uk}, \texttt{\{y.yao,jingweix,xxm,lj\}@nju.edu.cn}  \\
}
\begin{document}

\maketitle

%%%%%%%%%%%%%%%%%%%%%%%%%%%%%%%%%%%%%%%%%%%%%%%%%%%%%%%%%%%%%%%%%%%%%%%%%%%%%%%%%%%%%%%%%%%%%%%%%%%%%%%%%%%%%%%%%%%%%%%%
\begin{abstract}
Recent studies have explored the integration of logical knowledge into deep learning via encoding logical constraints as an additional loss function. However, existing approaches tend to vacuously satisfy logical constraints through shortcuts, failing to fully exploit the knowledge. In this paper, we present a new framework for learning with logical constraints. Specifically, we address the shortcut satisfaction issue by introducing dual variables for logical connectives, encoding how the constraint is satisfied. We further propose a variational framework where the encoded logical constraint is expressed as a distributional loss that is compatible with the model's original training loss. The theoretical analysis shows that the proposed approach bears salient properties, and the experimental evaluations demonstrate its superior performance in both model generalizability and constraint satisfaction.
\end{abstract} 
%%%%%%%%%%%%%%%%%%%%%%%%%%%%%%%%%%%%%%%%%%%%%%%%%%%%%%%%%%%%%%%%%%%%%%%%%%%%%%%%%%%%%%%%%%%%%%%%%%%%%%%%%%%%%%%%%%%%%%%%%

% !TEX root = ./main.tex

\section{Introduction} \label{sec:introduction}
There have been renewed interests in equipping deep neural networks (DNNs) with symbolic knowledge such as logical constraints/formulas~\citep{harnessing2016, semantic2018, DL22019, primal2019, augment2019, awasthi2020learning,hoernle2021multiplexnet}. 
Typically, existing work first translates the given logical constraint into a differentiable loss function, and then incorporates it as a penalty term in the original training loss of the DNN. The benefits of this integration have been well-demonstrated: it not only improves the performance, but also enhances the interpretability via regulating the model behavior to satisfy particular logical constraints.

%Among existing work, a typical way is 
%to translate the logical constraints into a differentiable loss function, 
%and then add it as a regularization term to the original training loss of DNNs.
%~\citep{harnessing2016, semantic2018, DL22019, primal2019, augment2019, awasthi2020learning}. 

Despite the encouraging progress, existing approaches tend to suffer from the \emph{shortcut satisfaction problem}, i.e., the model overfits to a particular (easy) satisfying assignment of the given logical constraint. However, not all satisfying assignments are the truth, and different inputs may require different assignments to satisfy the same constraint. 
%\footnote{In this paper, it refers to that the model overfits to a particular truth assignment of a given logical constraint rather than distinguishes different truth assignments for different inputs.} 
An illustrative example is given in Figure~\ref{fig:motivate}. % gives a concrete motivation example 
%of a semi-supervised learning task 
%to illustrate the above limitation. 
% Consider a logic formula $P\rightarrow Q$ expressing some causality.  
Essentially, the example considers a logical constraint $P \rightarrow Q$, which 
holds when $(P,Q)=(\textbf{T},\textbf{T})$ or $(P,Q)=(\textbf{F},\textbf{F})/(\textbf{F},\textbf{T})$. 
% A model that vacuously satisfies the formula simply assigns $\textbf{F}$ to $P$ for all inputs. 
However, it is observed that existing approaches tend to simply satisfy the constraint via assigning $\textbf{F}$ to $P$ for all inputs, even when the real meaning of the logic constraint is arguably $(P, Q) = (\textbf{T}, \textbf{T})$ for certain inputs (e.g., class `6' in the example).  %impose the model to 
%because with the same logic loss a less discriminating model is easier to achieve in training. 
%However, the knowledge implied by logical constraints should be manifested in a more discriminating assignment where  $(P, Q) = (\textbf{T}, \textbf{T})$ for those inputs in which $P$ actually holds, and $P=\textbf{F}$ for others.
% The former is arguably more meaningful (e.g., better reflecting the causality),
% whereas the latter may simply be useless vacuous truth. %if the formula were to express causality.

To escape from the trap of shortcut satisfaction, 
we propose to consider
\emph{how} a logical constraint is satisfied
by distinguishing between different satisfying assignments of the constraint 
for different inputs. 
%\footnote{One can see that the problem may appear in the learning from any logic formula with multiple true assignments, not limited to those with the material conditional $\rightarrow$.} 
The challenge here is the lack of direct supervision information 
of how a constraint is satisfied other than its truth value.
However, our insight is that, by addressing this ``harder'' problem, 
we can make more room for the conciliation between logic information and training data, 
and achieve better model performance and logic satisfaction at the same time.
%\lizn{To avoid the less meaningful vacuous satisfaction, we propose 
%a finer-grained translation approach 
%can distinguish the different truth assignments for the entire logical formula, 
%and determine the most reasonable one implicitly implied by the data,
%in addition to the overall truth value of the logical formula.
%}
% consider \emph{how} a logical constraint is satisfied,
% in addition to its overall truth value.   
To this end, when translating a logical constraint into a loss function,  we introduce a \emph{dual variable} 
for each operand of the logical connectives in the conjunctive normal form (CNF) of the logical constraint.
The dual variables, together with the softened truth values for logical variables, 
provide a working interpretation for the satisfaction of the logical constraint.
Take the example in Figure~\ref{fig:motivate}: for the satisfaction of $P\rightarrow Q$, we consider its CNF $\neg P \vee Q$ and introduce  
two variables $\tau_1$ and $\tau_2$ %are trained 
to indicate 
the weights of the softened truth values of $\neg P$ and $Q$, respectively. 
The blue dashed lines in the right part of % of our approach in 
Figure~\ref{fig:motivate} (the $P$-Satisfaction and $Q$-Satisfaction subfigues) indicate 
that the dual variables gradually converge to the intended weights for class `6'.
%which encourages the correct classification of the images.

\begin{figure}[tb]
\begin{center}
\includegraphics[width=1.0\columnwidth]{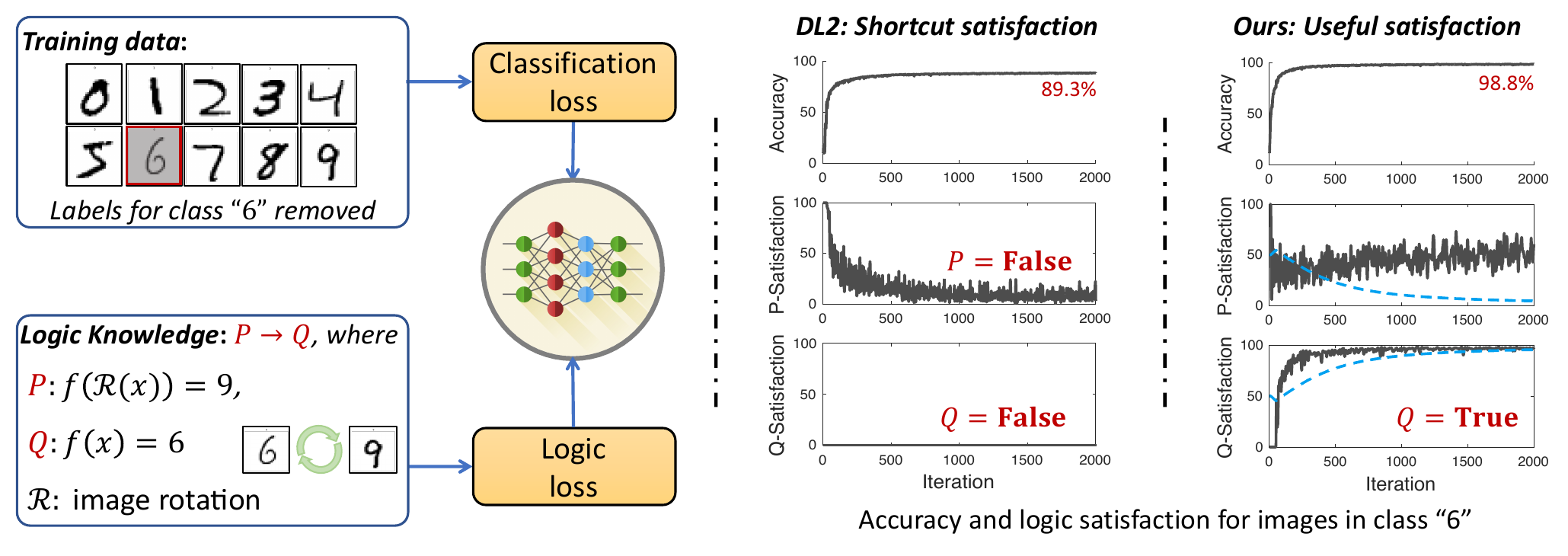}
\caption{
Consider a semi-supervised classification task %by removing the labels of ``6'' in 
of handwritten digit recognition. For the illustration purpose, 
we remove the labels of training images in class `6', but introduce a 
logical rule $P:=(f(R({\s{x}})) = 9) \rightarrow Q:= (f(\s{x}) = 6)$ 
%(which is rewritten in the conjunctive normal form) to predict ``6'' 
to predict `6',
where $R({\s{x}})$ stands for rotating the image $\s{x}$ by 180$^{\circ}$. 
The ideal satisfying assignments should be $(P,Q)=(\textbf{T},\textbf{T})$ for class `6'.
However, existing methods (e.g., DL2~\citep{DL22019}) tend to 
%$learn the vacuous truth (i.e., $f(R({\s{x}})) \neq  9$) for the logical constraint. 
vacuously satisfy the rule by 
discouraging the satisfaction of $P$ for all inputs, 
including those actually in class `6'.
%resulting in an inferior classifier.
%, i.e., the negation of the antecedent (i.e., $f(R({\s{x}})) \neq  9$) is satisfied. 
In contrast, our approach successfully learns to satisfy $Q$ when $P$ holds for class `6', 
even achieving comparable accuracy (98.8\%) to the fully supervised setting.}
%, while DL2~\cite{DL22019} learns to satisfy $P := f(R({\s{x}})) \neq  9$ (the P-satisfaction curve) which is easier but not supposed to be learned.
\label{fig:motivate}
\end{center}
\end{figure}

%Introducing dual variables also brings computational benefits.
Based on the dual variables, we then convert logical conjunction and disjunction 
into convex combinations of individual loss functions, which not only improves the training robustness, but also
ensures \emph{monotonicity} with respect to logical entailment, 
i.e., the smaller the loss, the higher the satisfaction.
Note that most existing logic to loss translations do not enjoy this property 
but only ensure that the logical constraint is fully satisfied when the loss is zero; 
however, it is virtually infeasible to make the logical constraint fully satisfied in practice,
rendering an unreliable training process towards constraint satisfaction.
%leading to a sub-optimal trained model. 

%Second, we avoid arbitrary weighting of \emph{incompatible} objectives.
Another limitation of existing approaches lies in the \emph{incompatibility} during joint training.
That is, existing work mainly treats the translated logic loss as a penalty under a multi-objective learning framework, whose effectiveness strongly relies on the weight selection of each objective, and may suffer when the objectives compete~\citep{kendall2018multi,sener2018multi}. 
In contrast, we introduce an additional random variable for the logical constraint to indicate its satisfaction degree, and formulate it %frame logical constraints 
as a distributional loss which is compatible with the neural network's original training loss under a variational framework. 
We 
% introduce an additional random variable for the logical constraints to indicate its satisfaction degree, 
cast the joint optimization of the prediction accuracy and constraint satisfaction as a game and propose a stochastic gradient descent ascent algorithm to solve it. Theoretical results show that the algorithm can successfully converge to a superset of local Nash equilibria, 
and thus settles the incompatibility problem to a large extent. 

\eat{
Our goal is to impose the logical constraints to the DNN model in a training manner,
while maintaining prediction accuracy.
% without the loss of the prediction accuracy. 
% in other words, existing approach does not achieve the Pareto optimality
Nevertheless, a reasonable logical rule actually partially implies the correct inference rule, 
% is not contradictory to the accuracy,
and so is often conducive 
to further improve the model accuracy. 
Moreover, 
the embedding of logical constraints can efficiently reduce the uninterpretability, 
and even boost the transferability of the trained model.
}
 
In summary, this paper makes the following main contributions: 1) a new logic encoding method that translates logical constraints to loss functions, considering how the constraints are satisfied, in particular, to avoid shortcut satisfaction; 2) a variational framework that jointly and compatibly trains both the translated logic loss and the original training loss with theoretically guaranteed convergence; 3) extensive empirical evaluations on various tasks demonstrating the superior performance in both accuracy and constraint satisfaction, confirming the efficacy of the proposed approach.
%\begin{itemize}[noitemsep, topsep=1pt, partopsep=1pt, listparindent=\parindent, leftmargin=*]
    % \item A new approach for training deep models with logical constraints, considering how the constraints are satisfied, in particular, to avoid shortcut satisfaction. % as well as their semantic loss.
%    \item A new logic encoding method that translates logical constraints to loss functions, considering how the constraints are satisfied, in particular, to avoid shortcut satisfaction. 
    % with guaranteed monotonicity of satisfaction.
%    \item A variational framework that jointly and compatibly trains both the translated logic loss and the original training loss with theoretically guaranteed convergence. %to carry out training 
%    \item Extensive empirical evaluations on various tasks demonstrating the superior performance in both accuracy and constraint satisfaction, confirming the efficacy of the proposed approach.
%\end{itemize}
% !TEX root = ./main.tex
%\vspace{-1.5ex}
\section{Logic to Loss Function Translation}
\label{sec:translation}

%In this section, we first present the definition of logical constraints, and then encode them into training losses.

\subsection{Logical Constraints\vspace{-0.5em}} \label{sec:notation}
For a given neural network, we denote the data point by $(\s{x}, \s{y}) \in \mathcal{X} \times \mathcal{Y}$, and use $\s{w}$ to represent the model parameters.
%A \emph{logical variable} $v$ refers to 
We use variable $v$ to denote the model's behavior of interest, which is represented as a function $f_{\s{w}}(\s{x}, \s{y})$ parameterized by $\s{w}$. For instance, we can define $p_{\s{w}}(y=1 \mid \s{x})$ as the (predictive) confidence of $y=1$ given $\s{x}$. % (e.g., the output, the representation)
We require $f_{\s{w}}(\s{x}, \s{y})$ to be differentiable with respect to $\s{w}$.

An {\em atomic formula} %\emph{logical atom} (i.e., an atomic logical constraint) 
$a$ is in the form of $v \bowtie c$ where $\bowtie \in \{\leq, <, \ge, >, =, \neq \}$ and 
%indicates the comparison relation (i.e.,  between the logical variable $v$ and 
$c$ is a constant. 
%
%For instance, suppose we are interested in %the logical variable $v=
%$p_{\s{w}}(y = 1 \mid \s{x})$ denoting the (predictive) confidence of $y=1$ given $\s{x}$. 
We express a logical constraint in the form of a \emph{logical formula},
%\footnote{We use logical constraint and logical formula interchangeably in this paper.} 
consisting of 
usual conjunction, disjunction, and negation of atomic formulas. %multiply atoms with logical connectives ($\wedge$ and $\vee$). 
For instance, we can use $p_{\s{w}}(y=1 \mid \s{x}) \ge 0.95 \vee p_{\s{w}}(y=1 \mid \s{x}) \le 0.05$ to specify that the confidence should be either no less than $95\%$ or no greater than $5\%$. %We can also define logical atoms that are related to multiple 
Note that $v = c$ can be written as $v \geq c \wedge v \leq c$, so henceforth for simplicity, we only have atomic formulas of the form $v \leq c$. 

We use $\hat{v}$ to indicate a {\em state} of $v$, which is a concrete instantiation of $v$ over the current model parameters $\s{w}$ and data point $(\s{x}, \s{y})$.
%Given a data point $d=(\s{x}, \s{y})$, 
We say a state $\hat{v}$ satisfies a logical formula $\alpha$, 
denoted by $\hat{v} \models \alpha$, %$\hat{v} \models \alpha$, 
if $\alpha$ holds under $\hat{v}$. %$\s{x}, \s{y}$.  %when its logical variables are concretized by $\hat{v}$.
%In other words,  $\overhat{\bm{\pi}} \models \alpha$ is equivalent to that $\alpha$ holds for the instantiation $\overhat{\bm{\pi}}$. %
% It is clear to prove that if $\overhat{\bm{\pi}} \models \alpha$, then $(\overhat{\s{x}}, \overhat{\s{y}}) \models \alpha$. 
For two logical formulas $\alpha$ and $\beta$, 
we say $\alpha \models \beta$ 
if any $\hat{v}$ that satisfies $\alpha$ also satisfies $\beta$.  %concretization $\hat{v} \in \mathcal{V}$
%that satisfies $\hat{v} \models \alpha$ also entails $\beta$, where $\mathcal{V}$ is the space of all concretizations.
Moreover,
we write $\alpha \equiv \beta$ if $\alpha$ is logically equivalent to $\beta$, i.e., they entail each other.  
%$\alpha \models \beta$ and $\beta \models \alpha$.

%We further use function $g$ to denote the transformation of a logical atom to its standard form, i.e., $g(\bm{\pi}) \leq 0$.
%we denote the logical atom $l:= t_1 \leq t_2$ by the inequality constraint $g(\bm{\pi}) \leq 0$ of the logical object $\bm{\pi}$. 

%$\bm{\pi} = f_{\s{w}}(\s{x}, \s{y}) \colon \mathcal{X} \times \mathcal{Y} \to \Pi$

%We use $\bm{\pi} = f_{\s{w}}(\s{x}, \s{y}) \colon \mathcal{X} \times \mathcal{Y} \to \Pi$ to represent an instantiation of variable $v$.
%For example, one simple instantiation is to set it as the original input-output pair, i.e., $\bm{\pi} = (\s{x}, \s{y})$.

\subsection{Logical Constraint Translation} \label{sec:logical_rule}

%%%%%%%%%%%%%%%%%%%%%%%%%%%%%%%%%%%%%%%%%%%%%%%%%%%%%%%%%%%%%%%%%%%%%%%%%%%%%%%%%%%%%%%%%%%%%
%\subsubsection{Atomic formulas}
{\bf (A) Atomic formulas}.
For an atomic formula $a:= v \leq c$, our goal is to learn the model parameters $\s{w}^*$ to encourage 
that the current state $\hat{v}^*=f_{\s{w}^*}(\s{x}, \s{y})$ can satisfy the formula %$\hat{v}^* = 
%$l := f_{\s{w}^*}(\s{x}, \s{y})\leq c$, 
% the formula is satisfied
for the given data point $(\s{x}, \s{y})$. 
For this purpose, we define a cost function %to soften the truth values
\begin{equation} \label{eqn:atomic}
S_{a}(v) := \max(v-c, 0),
\end{equation}
and switch to minimize it. 
It is not difficult to show that 
the atomic formula holds % $\hat{v}^* \leq c$
if and only if $\s{w}^*$ is an optimal solution of $\min_{\s{w}} S(\s{w}) = \max(f_{\s{w}}(\s{x}, \s{y}) -c, 0)$ %~\eqref{eqn:atomic_opt} 
with $S(\s{w}^*) = 0$.
%\tl{to be revisited, a bit of inconsistency here}

%The cost function $S(v)$ measures the degree of constraint satisfaction with respect to the logical variable. 
For example, for the predictive confidence $p_{\s{w}}(y=1 \mid \s{x}) \ge 0.95$ 
%\tl{before we say that in the atom we use $\leq$, but this example uses $\geq$, change?} 
introduced before, %i.e., $l := p_{\s{w}}(y=1 \mid \s{x}) \ge 0.95$.
the corresponding cost function is defined as
$\max(0.95-p_{\s{w}}(y = 1 \mid \s{x}), 0)$, %, which reflects the {\em satisfaction degree} of the constraint. 
%To illustrate this, we define {\em concretization} as a concrete computation of $v$ over the current model parameters and the data point. 
%The cost score 
which is actually the norm
distance between the current state $\hat{v}$ and the satisfied states of the atomic constraint. %optimal zero.
Such translation not only allows to find model parameters $\s{w}^*$ efficiently by optimization, but also paves the way to encode more complex logical constraints as discussed in the following.
%Next, we illustrate the translation of different logical connectives.

%%%%%%%%%%%%%%%%%%%%%%%%%%%%%%%%%%%%%%%%%%%%%%%%%%%%%%%%%%%%%%%%%%%%%%%%%%%%%%%%%%%%%%%%%%%%%%%%%%%%%%%%%%%%%%%%%%

%\subsubsection{Logical Conjunction} 
{\bf (B) Logical conjunction}.
%We next consider the translation of 
For the logical conjunction
$l:= v_1 \leq c_1 \wedge v_2\leq c_2$, where $v_i =f_{\s{w},i}(\s{x}, \s{y})$ for $i=1,2$, 
We first substitute the atomic formulas by their corresponding cost functions as defined in \eqref{eqn:atomic},
and rewrite the conjunction as $(S(v_1) = 0) \wedge (S(v_2) = 0)$.
% we can further formulated it as 
%Furthermore, by using the non-negativity of the cost function, 
Then, the conjunction can be equivalently converted into a maximization form, i.e., $\max(S(v_1), S(v_2)) = 0$. 
Based on this conversion, 
we may follow the existing work and define the cost function of logical conjunction as
$S_{\wedge}(l) := \max(S(v_1), S(v_2))$.
However, directly minimizing this cost function is less effective as it cannot well encode how the constraints are satisfied, and it is also not efficient since only one of $S(v_1)$ and $S(v_2)$ can be optimized in each iteration. 
Therefore, we introduce the dual variable $\tau_i$ for each atomic formula, and further extend it to the general conjunction case for $l := \wedge_{i=1}^k v_i \leq c_i$ as
\begin{equation} \label{eqn:conj}
S_{\wedge}(l) = \max_{\tau_1, \dots, \tau_k} \sum_{i=1}^k \tau_i S(v_i), \quad \tau_1, \dots, \tau_k \ge 0, \quad  \tau_1+\dots+\tau_k = 1.
% S_{l}(v) = \max_{\substack{\tau_1, \dots, \tau_k \ge 0, \\  \tau_1+\dots+\tau_k = 1}} \sum_{i=1}^k \tau_i S(v_i).
\vspace{-0.6em}
\end{equation}

%%%%%%%%%%%%%%%%%%%%%%%%%%%%%%%%%%%%%%%%%%%%%%%%%%%%%%%%%%%%%%%%%%%%%%%%%%%%%%%%%%%%%%%%%%%%%%%%%%%%%%%%%
%%%%%%%%%%%%%%%%%%%%%%%%%%%%%%%%%%%%%%%%%%%%%%%%%%%%%%%%%%%%%%%%%%%%%%%%%%%%%%%%%%%%%%%%%%%%%%%%%%%%%%%%%

%\subsubsection{Logical Disjunction} 
{\bf (C) Logical disjunction}.
Similar to conjunction, the logical disjunction  $l := \vee_{i=1}^k v_i \leq c_i$. %$a_1 \vee \dots \vee a_k$ 
can be equivalently encoded into a minimization form. % of $\min(S_1(v), \dots, S_k(v)) = 0$. 
Hence, the corresponding cost function is as follows,
\begin{equation} \label{eqn:disj}
S_{\vee}(l) = \min_{\tau_1, \dots, \tau_k} \sum_{i=1}^k \tau_i S(v_i), \quad \tau_1, \dots, \tau_k \ge 0, \quad  \tau_1+\dots+\tau_k = 1.
% S_{l}(v) = \min_{\substack{\tau_1, \dots, \tau_k \ge 0 \\ \tau_1+\dots+\tau_k = 1}} \sum_{i=1}^k \tau_i S(v_i).
\vspace{-0.6em}
\end{equation}

{\bf (D) Clausal Formulas}.
%Putting all the above together, we have the translation of a general logical rule. 
%Particularly, 
%considering a logical rule $\alpha$ in the conjunctive normal form,
In general, for logical formula
$\alpha = \wedge_{i \in \mathcal{I}} \vee_{j \in \mathcal{J}}  v_{ij} \leq c_{ij}$ in the conjunctive normal form (CNF),
the corresponding cost function can be defined as 
\begin{equation}\label{eqn:sentence}
\begin{split}
& S_{\alpha}(v) \! =  \max_{\mu_i} \min_{\nu_{ij}}   \sum_{i \in \mathcal{I}}  \sum_{j \in  \mathcal{J}^{(i)}} \mu_i \cdot\nu_{ij} \cdot \max(v_{ij}-c_{ij}, 0)  \\
& \qquad \qquad \text{s.t.} \; \sum_{i \in \mathcal{I}} \mu_i = 1,  \sum_{j \in \mathcal{J}^{(i)}} \nu_{ij} = 1, \; \mu_i, \nu_{ij} \ge 0,   
\end{split}
% \end{aligned}
\end{equation}
where $\mu_i (i \in \mathcal{I})$ and  $\nu_{ij} (j \in \mathcal{J}^{(i)}$) are the dual variables for conjunction and disjunction, respectively. 
%Note that all the dual variables are essentially functions on the input. % conditioned on the input. %, and we write it as $\tau_i$ to simply the notations.

The proposed cost function establishes an equivalence 
between the logical formula and the optimization problem. 
This is summarized in the following theorem, whose proof is included in Appendix~\ref{app:proof}. %which can be derived by combining Propositions~\ref{prop:conj} and~\ref{prop:disj}.
%\tl{to be adapted}
\begin{theorem} \label{theo:translation}
Given the logical formula $\alpha = \wedge_{i \in \mathcal{I}} \vee_{j \in \mathcal{J}}  v_{ij} \leq c_{ij}$,
%the cost function $S_{\alpha}(v)$ defined by Eq.~\eqref{eqn:sentence}.
if the dual variables $\{\mu_i, i \in \mathcal{I}\}$ and $\{\nu_{ij}, j \in \mathcal{J}\}$ of $S_{\alpha}(v)$ converge to $\{\mu_i^*, i \in \mathcal{I}\}$ and $\{\nu_{ij}^*, j \in \mathcal{J}\}$, 
then the cost function of $\alpha$ can be computed as
%\begin{equation*}
$S_{\alpha}(v) = \max_{i \in \mathcal{I}} \min_{j \in \mathcal{J}} \{S_{ij} :=  \max(v_{ij} - c_{ij}, 0) \}$. 
%\end{equation*}
Furthermore, the sufficient and necessary condition of $\hat{v}^* = f_{\s{w}^*}(\s{x}, \s{y}) \models \alpha$ is that $\s{w}^*$ is the optimal solution of $\min_{\s{w}} S_{\alpha}(\s{w})$, with the optimal value $S_{\alpha}(\s{w}^*) = 0$.
\end{theorem}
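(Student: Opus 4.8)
The plan is to evaluate the nested max-min program defining $S_\alpha(v)$ in closed form, and then read off the satisfaction characterization from the resulting expression. Throughout I write $S_{ij} := \max(v_{ij} - c_{ij}, 0)$, and record the elementary facts that each $S_{ij} \ge 0$ and that $S_{ij} = 0$ if and only if the atomic formula $v_{ij} \le c_{ij}$ holds.

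First I would handle the inner minimization over the disjunction weights. For fixed $\{\mu_i\}$, the objective $\sum_{i} \mu_i \sum_{j} \nu_{ij} S_{ij}$ decouples across $i$, since each block $\{\nu_{ij}\}_{j \in \mathcal{J}^{(i)}}$ ranges independently over its own probability simplex. Minimizing the linear functional $\sum_{j} \nu_{ij} S_{ij}$ over the simplex is attained at a vertex, placing all mass on the smallest coefficient, so $\min_{\nu_{i\cdot}} \sum_{j} \nu_{ij} S_{ij} = \min_{j} S_{ij}$. Hence the inner problem yields $\sum_{i} \mu_i \min_{j} S_{ij}$.

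Next I would perform the outer maximization over the conjunction weights $\{\mu_i\}$, which is again a linear functional over a probability simplex, now maximized; its optimum places all mass on the largest coefficient, giving $\max_{i} \min_{j} S_{ij}$. Therefore at the converged (optimal) dual variables $S_\alpha(v) = \max_{i \in \mathcal{I}} \min_{j \in \mathcal{J}} S_{ij}$, which is the first claim. One could alternatively invoke the von Neumann minimax theorem, since the objective is bilinear over compact convex simplices, but the direct evaluation above already pins down the saddle value and shows the converged $\{\mu_i^*\}, \{\nu_{ij}^*\}$ realize it.

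Finally, for the equivalence I would chain the nonnegativity facts. Since every $S_{ij} \ge 0$, we have $\min_{j} S_{ij} = 0$ if and only if some disjunct satisfies $v_{ij} \le c_{ij}$, i.e. the $i$-th clause holds; and since the outer operation is a maximum of nonnegative quantities, $S_\alpha(v) = \max_{i} \min_{j} S_{ij} = 0$ if and only if every clause holds, i.e. $\hat{v} \models \alpha$. As $S_\alpha \ge 0$ everywhere, the value $0$ is its global minimum, so $\s{w}^*$ minimizes $S_\alpha$ with $S_\alpha(\s{w}^*) = 0$ exactly when $\hat{v}^* = f_{\s{w}^*}(\s{x}, \s{y}) \models \alpha$, yielding both directions. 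I do not anticipate a serious obstacle; the only points needing care are justifying that optimizing a linear objective over the probability simplex selects the extreme coordinate (a standard LP vertex argument) and verifying that the inner minimization genuinely decouples across conjuncts because the disjunction constraints are separate for each $i$.
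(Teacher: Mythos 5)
Your proposal is correct, but it reaches the result by a genuinely different route than the paper. You treat $S_\alpha(v)$ directly as a bilinear program over a product of probability simplices and evaluate it in closed form: for fixed $\{\mu_i\}$ the inner minimization decouples across clauses and, being linear over each simplex, is attained at a vertex, giving $\sum_i \mu_i \min_j S_{ij}$; the outer maximization is again linear over a simplex, giving $\max_i \min_j S_{ij}$. The paper instead works backwards from the max/min composition: it starts from $\min_{\s{w}} \max_i S_i(\s{w})$ (resp.\ $\min_i S_i$), introduces a slack variable $t$, forms the Lagrangian, derives the dual-variable formulation as the Lagrangian dual, and then uses the KKT conditions (complementary slackness forces $\tau_j^* = 0$ off the active set) to show that at converged duals the weighted sum equals the max (Proposition~\ref{prop:conj}) or min (Proposition~\ref{prop:disj}), finally combining the two propositions for the CNF. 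The mathematical content overlaps --- vertex optimality of a linear functional over a simplex is essentially complementary slackness in disguise --- but your version is more elementary and self-contained (no slack variables, no KKT, and the full CNF handled in one pass), while the paper's derivation additionally \emph{explains} where the dual-variable encoding comes from, which motivates Equations~\ref{eqn:conj}--\ref{eqn:sentence} and connects naturally to the training algorithm that updates $\bm{\tau}_\wedge$ by ascent and $\bm{\tau}_\vee$ by descent. Your treatment of the satisfaction equivalence (chaining nonnegativity: $S_\alpha = 0$ iff every clause has a vanishing disjunct, and $0$ is the global minimum) matches the paper's argument in substance.
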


% \lizn{{\bf (E) Vector case}. Finally, we should note that, here $S_{\alpha}(v)$ is denoted as a scalar function for simplicity, but it can be easily extended to a more general vector function. 
% Particularly, considering the logical atom $\s{v} \leq \bm{c}$, where $\s{v}$ and $\bm{c}$ are both $n$-dimensional vector, and thus the involved inequality refers to the generalized inequality~\citep{boyd2004convex}. In this case, if $\s{v}=(v_1, \dots, v_n)$ is coupled, the atom will reduce to the conjunction $\wedge_{i=1}^n (v_i \leq c_i)$; Otherwise (e.g., $v_i:=f_{\s{w}}(\s{x}_i, \s{y}_i)$ is defined with respect to each i.i.d. example $(\s{x}_i, \s{y}_i)$), the cost function can be vectorized as $S_{a}(\s{v})=\max(\s{v}-\bm{c},\bm{0})$, where the maximum operator, and the following conjunction and disjunction is computed in a point-wise way. 
% Such vectorization makes the parallel computation feasible, and boost the efficiency of the batch-based training algorithm.
% }

%%%%%%%%%%%%%%%%%%%%%%%%%%%%%%%%%%%%%%%%%%%%%%%%%%%%%%%%%%%%%%%%%%%%%%%%%%%%%%%%%%%%%%%%%%%%%%%%%%%%%%%%%
%%%%%%%%%%%%%%%%%%%%%%%%%%%%%%%%%%%%%%%%%%%%%%%%%%%%%%%%%%%%%%%%%%%%%%%%%%%%%%%%%%%%%%%%%%%%%%%%%%%%%%%%%
\subsection{Advantages of Our Translation} \label{subsec:discussions}

%At first glance, replacing the logical conjunction and disjunction by $\max$ and $\min$ operators is a classic strategy in the fuzzy logic~\citep{zadeh1965fuzzy, elkan1994paradoxical, hajek2013metamathematics}. % , giannini2019relation
%Nevertheless, we do not adopt a probabilistic translation as that in fuzzy logic for the logical atom, and instead define the cost function with values in the rage of $[0, +\infty)$. 
%In this sense, it is unreasonable to use the fuzzy logic operators, which work with in a way that mimics Boolean logic, to encode the logical connectives. 
%{\em Isotonicity \yy{or?} Monotonicity guarantee}.
%As mentioned above, one advantage of our translation is the monotonicity guarantee, which is summarized in the theorem below.
%%To illustrate the rationale of the proposed translation, we provide the theorem below:
{\bf Monotonicity}.
As shown in Theorem~\ref{theo:translation}, the optimal $\s{w}^*$ ensures that 
model's behavior $\hat{v}^* = f_{\s{w}^*}(\s{x}, \s{y})$ 
can satisfy the target logical constraint $\alpha$. % if it achieves the minimum zero.
Unfortunately, 
$\s{w}^*$ usually cannot achieve the optimum due to the coupling between the cost function $S_{\alpha}(v)$ and the original training loss as well as the fact that $S_{\alpha}(v)$ is %\yy{always or usually} 
usually non-convex. 
This essentially reveals the main rationale of 
our logical encoding, 
and we conclude it in the following theorem with the proof given in Appendix~\ref{App:monotonicity}.

\begin{theorem} \label{theo:measurable}
For two logical formulas 
$\alpha = \wedge_{i \in \mathcal{I}} \vee_{j \in \mathcal{J}} a_{ij}^{(\alpha)}$ and $\beta= \wedge_{i \in \mathcal{I}} \vee_{j \in \mathcal{J}} a_{ij}^{(\beta)}$, 
% in the forms of 
% \begin{equation*}
% \begin{aligned}
% \alpha  =  \wedge_{i \in \mathcal{I}} \vee_{j \in \mathcal{J}} %a_{ij}^{(\alpha)},  \quad a_{ij}^{(\alpha)} := 
% v_{ij} \leq c_{ij}^{(\alpha)}, & &
% \beta  =  \wedge_{i \in \mathcal{I}} \vee_{j \in \mathcal{J}} %a_{ij}^{(\beta)},  \quad a_{ij}^{(\beta)} := 
% v_{ij} \leq c_{ij}^{(\beta)},
% \end{aligned}
% \end{equation*} 
% object $\bm{\pi} = f_{\s{w}}(\s{x}, \s{y})$, 
% whose cost functions $S_{\alpha}(\bm{\pi})$ and $S_{\beta}(\bm{\pi})$ are defined by Eq.~\eqref{eqn:sentence}, %
we have $\alpha \models \beta$ if and only if
%\begin{equation*}
$S_{\alpha}(v) \ge S_{\beta}(v)$
%\end{equation*}
holds for any state of $v$.% \in \mathcal{V}$. \tl{where is $\mathcal{V}$ defined?}
%Moreover, 
%if any logical term of $\alpha$ and $\beta$ is in the form of $l:=g(\bm{\pi}) \leq \bm{c}$ (or $l:=g(\bm{\pi}) \ge \bm{c}$), where $\bm{c}$ is a constant, 
%then 
%$\alpha \models \beta$ if and only if $S_{\alpha}(\bm{\pi}) \ge S_{\beta}(\bm{\pi})$ holds for any $\bm{\pi} \in \Pi$.
\end{theorem}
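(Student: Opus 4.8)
The plan is to reduce the logical statement $\alpha \models \beta$ to the pointwise cost inequality by exploiting the closed form supplied by Theorem~\ref{theo:translation}. After convergence of the dual variables I may treat $S_\alpha(v) = \max_{i \in \mathcal{I}} \min_{j \in \mathcal{J}} \max(v_{ij} - c_{ij}, 0)$ as given, and I will repeatedly lean on two elementary facts: $S_\alpha(v) \ge 0$ for every state, and $S_\alpha(v) = 0$ holds precisely when $\hat{v} \models \alpha$ (the $S_{\alpha}(\s{w}^*) = 0$ characterization). The backward implication is then immediate: assuming $S_\alpha(v) \ge S_\beta(v)$ for all states, any $\hat{v} \models \alpha$ gives $0 \le S_\beta(\hat{v}) \le S_\alpha(\hat{v}) = 0$, so $S_\beta(\hat{v}) = 0$ and hence $\hat{v} \models \beta$, which is exactly $\alpha \models \beta$.

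The forward implication will rest on a sublevel-set characterization that I would isolate as a lemma. For $t \ge 0$ let $\alpha_t := \wedge_{i} \vee_{j} (v_{ij} \le c_{ij} + t)$ denote the uniform $t$-relaxation of $\alpha$. Because $\max(v_{ij} - c_{ij}, 0) \le t$ is equivalent to $v_{ij} \le c_{ij} + t$ whenever $t \ge 0$, carrying this bound through the inner $\min_j$ and the outer $\max_i$ shows that $S_\alpha(v) \le t$ if and only if $\hat{v} \models \alpha_t$. In words, the sublevel sets of $S_\alpha$ are exactly the satisfying regions of the relaxed formulas, and the same statement holds for $\beta$.

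With this in hand I would argue the forward direction as follows. Assume $\alpha \models \beta$, fix an arbitrary state $\hat{v}$, and set $t := S_\alpha(\hat{v}) \ge 0$, so that $\hat{v} \models \alpha_t$ by the lemma. Consider the uniformly shifted state $\hat{u}$ obtained by decreasing every variable value by $t$; for each atom one has $\hat{u} \models (w \le c)$ iff $\hat{v} \models (w \le c + t)$, and since conjunction and disjunction are preserved under this correspondence, $\hat{u} \models \alpha$ iff $\hat{v} \models \alpha_t$, and likewise for $\beta$. Thus $\hat{u} \models \alpha$, the hypothesis $\alpha \models \beta$ forces $\hat{u} \models \beta$, and translating back yields $\hat{v} \models \beta_t$; the lemma then gives $S_\beta(\hat{v}) \le t = S_\alpha(\hat{v})$. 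As $\hat{v}$ was arbitrary, $S_\alpha(v) \ge S_\beta(v)$ everywhere.

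I expect the forward direction to be the main obstacle, and within it the decisive move is the sublevel-set lemma paired with the observation that a single uniform translation of the state by $t$ converts relaxed satisfaction of $\alpha_t$ into genuine satisfaction of $\alpha$. This is precisely what lets the entailment $\alpha \models \beta$ — which a priori only constrains exactly-satisfying states — be invoked on the shifted state $\hat{u}$. A secondary point I would check carefully is that the shift must use the \emph{same} $t$ across all variables, matching the uniform relaxation induced by the scalar threshold $t$ in $S_\alpha$; a non-uniform relaxation would break the correspondence, so the fact that $S_\alpha$ thresholds all clauses by one common value is what makes the argument go through.
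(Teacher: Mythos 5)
Your proof is correct, and its forward direction takes a genuinely different route from the paper's. The backward direction coincides with the paper's argument: non-negativity of the cost plus the Theorem~\ref{theo:translation} characterization that $S_{\alpha}(\hat v)=0$ iff $\hat v \models \alpha$. For the forward direction, the paper argues geometrically: it identifies $S_{\alpha}(v)$ with the Chebyshev ($L^\infty$) distance from $v$ to the satisfying region $A=\cap_{i}\cup_{j}\{v \le c^{(\alpha)}_{ij}\}$ and then applies an elementary fact about closed sets ($A \subseteq B$ iff $\mathrm{dist}(\cdot,A)\ge\mathrm{dist}(\cdot,B)$ pointwise); the non-obvious part --- that the max-min composition of atomic distances is again the distance to the CNF region --- is dispatched with ``a few calculations of linear algebra'' and a citation to an external result. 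Your sublevel-set lemma combined with the uniform shift $\hat u = \hat v - t\mathbf{1}$ is a self-contained, elementary substitute for exactly that step: the chain $S_{\alpha}(\hat v)\le t \iff \hat v \models \alpha_t \iff \hat v - t\mathbf{1} \models \alpha$ is precisely the statement that sublevel sets of $S_{\alpha}$ are uniform translates of the satisfying set, which is what makes $S_{\alpha}$ a Chebyshev distance to a downward-closed region; in effect you have proved the fact the paper outsources. The paper's route buys both directions in one stroke and a cleaner conceptual picture; yours buys a shorter, fully self-contained argument with no appeal to outside results. One tacit assumption you share with the paper and might flag explicitly: invoking $\alpha \models \beta$ on the shifted state $\hat u$ (respectively, quantifying the paper's distance proposition over all of $\mathcal{V}$) presumes that entailment ranges over all value assignments in a space closed under such shifts, not merely over states realizable as $f_{\s{w}}(\s{x},\s{y})$.
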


{\em Remarks}. Theorem~\ref{theo:measurable} essentially states that, when dual variables converge, we can progressively achieve a higher satisfaction degree of the logical constraints, as the cost function continues to decrease. This is especially important in practice since it is usually infeasible to make the logical constraint fully satisfied.
\eat{
Let the target logical rule be $\alpha$, 
Theorem~\ref{theo:measurable} illustrates that the optimal solution $v^*$ 
entails a relative looser logical rule 
compared with $\alpha$, 
even if the cost function $S_{\alpha}(v)$ cannot be minimized to zero.
%When $\alpha$ involves 
%a single particular attribute of the logical object (i.e., $g(\bm{\pi})$ in the theorem), 
In other words, 
%we can determine a sequence of logical constraints $\alpha, \alpha_k, \dots, \alpha_1$ ($\alpha \models \alpha_k \models \dots \models \alpha_1$) 
during optimization, %which are in the same formula, 
where the cost function successively increases from $S_{\alpha_1}(v)$ to $S_{\alpha_k}(v)$, 
in order to tighten the logical rule.}

%An interesting example of Theorem~\ref{theo:translation} and~\ref{theo:measurable} is the quantifier shift~\citep[Sec. 9.5]{gensler2017introduction}. 
%For details, let $\alpha$ and $\beta$ be
%\begin{equation*}
%\begin{aligned}
%& \alpha := \bigwedge_{i \in \mathcal{I}} \bigvee_{j \in \mathcal{J}} \left( g_{(i, j)}(\bm{\pi}) \leq 0 \right), \\
%& \beta := \bigvee_{j \in \mathcal{J}} \bigwedge_{i \in \mathcal{I}}  \left( g_{(i, j)}(\bm{\pi}) \leq 0 \right).
%\end{aligned}
%\end{equation*}
%By using Theorem~\ref{theo:translation}, the corresponding cost functions are
%\begin{equation*}
% S_{\alpha}(\bm{\pi}) = \max_{i \in \mathcal{I}} \min_{j \in \mathcal{J}}  S_{ij}, \quad
%S_{\beta}(\bm{\pi}) = \min_{j \in \mathcal{J}} \max_{i \in \mathcal{I}}  S_{ij}.
%\end{equation*}
%Since $\alpha \models \beta$, we can obtain that $S_{\alpha}(\bm{\pi}) \ge S_{\beta}(\bm{\pi})$ for any $\bm{\pi} \in \Pi$ via Theorem~\ref{theo:measurable},  
%and
%it is also a direct result of the classic min-max inequality~\citep[Sec. 5]{boyd2004convex}.
% naturalness of such translation
% other logical translation did not meet theorem 2

%In addition to the fact that dual variables can adapt the constraint satisfaction bias, 
%we make two comments as follows.

{\bf Interpretability}.
%During the training process, the dual variables are adapted with the goal of satisfying a given logical constraint. 
The introduced dual variables control the satisfaction degree of each individual atomic formula, 
and gradually converge towards the best valuation. %allocation, 
%making which logical atom to be hold for 
%encouraging the satisfaction of the entire logical constraint.
%In addition to boosting the training efficiency, the entire process of being satisfiable of given logical constraint is made transparent by $\tau$.
% In addition to boosting the training efficiency, 
Elaborately, the dual variables essentially learn how the given logical constraint can be satisfied for each data point, which resembles the structural parameter used in~\citet{bengio2019meta} to disentangle causal mechanisms. 
Namely, the optimal dual variables $\tau^*$ %keep track of the entailment for the logical constraint. 
%On the one hand, $\tau^*$ 
disentangle the satisfaction degree of the entire logical formula into individual atomic formulas,
%On the other hand, 
%the statistical results of the optimal dual variables $\tau^*$ % may help 
% disentangle the satisfaction degree of the entire logical constraint into the individual atoms. 
and reveal the contribution of each atomic constraint to the entire logical constraint in a discriminative way.
%probability of a causal relationship between atomic constraints and the entire logical constraint in a discriminative way. 
%Take a logical constraint $l := a_1 \vee  a_2$ as an example, whose 
Consider the cost function $S_{l}(v) = \max_{\tau \in [0,1]} \tau S(v_1) + (1-\tau)S(v_2)$ for $l := a_1 \vee  a_2$. The expectation $E_{v} [\tau^*]$ estimates the probability of $p(a_1 \to l \mid \s{x})$, % in a discriminative way, and
and a larger $E_{v} [\tau^*]$ indicates a greater probability that the first atomic constraint is met.

{\bf  Robustness improvement}. 
%\yy{to be revised, why robustness can be improved first} 
%\lizn{
The dual variables also improve the stability of numerical computations.
First, compared with some classic fuzzy logic operators (e.g., directly adopting $\max$ and $\min$ operators to replace the conjunction and disjunction~\citep{zadeh1965fuzzy, elkan1994paradoxical, hajek2013metamathematics}), the dual variables alleviate the sensitivity to the initial point. 
Second, compared with other commonly-used translation strategies (e.g., using addition and multiplication in DL2~\citep{DL22019}), the dual variables balance the magnitude of the cost function, and further avoid some bad stationary points. 
Some concrete examples are included in Appendix~\ref{sec:examples}.

\section{A Variational Learning Framework}
%In this section, we show how we jointly learn logical constraint with the original training loss. The goal of our learning framework is twofold: 
%(i) training the neural network model that achieves good performance in downstream tasks (e.g., classification), and 
%(ii) encouraging the model to satisfy the logical constraint implied by the domain knowledge.

\subsection{Distributional Loss for Logical Constraints}

% A straightforward way is simply imposing the logical constraint to the neural network training. 
% For example, given training data $\{(\s{x}_i, \s{y}_i)\}_{i=1}^N$, suppose our task is to training the neural network with logical inequality constraint $g(\s{x}, y) \leq 0$. 
%Then, the final training loss is 
%\begin{equation*}
%\begin{aligned}
%   \min_{\s{w}} \quad & \sum_{i=1}^N \ell(\s{w}; (\s{x}_i, y_i)), \\
%   \textrm{s.t.}~ \quad & g_{\s{w}}(\s{x}_i) \leq 0, \quad \forall i=1,\dots,N.
%\end{aligned}
%\end{equation*}
%where the function $\ell(\s{w}; (\s{x}_i, y_i))$ denotes the classification (or regression) loss of example $(\s{x}_i, y_i)$. % and the second term $\max(g(\s{x}_i, y_i), 0)$ is the cost function as defined in~\eqref{eqn:axiomatic_opt}.
% For example, in classification task, the model parameters $\s{w}$ are determined by minimizing the cross entropy, which is essentially the KL divergence between these two distributions. 
%and unfortunately such a distributional loss is in direct conflicting with the logical constraints. 
%On the one hand, 
%neural networks usually tend to learn these logical constraints by rote, 
%and cannot further admit these logical constraint in generalization domain.
%On the other hand, 
%this constrained optimization problem cannot be easily solved. 
%Even if we encode the inequality constraints into cost functions and add them as a penalty term to the training loss, 
%the trade-off coefficient is too sensitive to be well tuned.

Existing work mainly adopts a multi-objective learning framework to integrate logical constraints into DNNs, which is sensitive to the weight selection of each individual loss. 
In this work, we propose a variational framework~\citep{blei2017variational} to achieve better compatability between the two loss functions without the need of weight selection.
Generally speaking, the original training loss for a neural network is usually a distributional loss (e.g., cross entropy), which aims to construct a parametric distribution $p_{\s{w}}(\s{y} | \s{x})$ that is close to the target distribution $p(\s{y} | \s{x})$. 
To keep the compatibility, 
we extend the distributional loss to the logical constraints.
% we model the logical training as distributional approximation. 
% Therefore, we similarly design a distributional loss towards the logical constraints.
More concretely, we define an additional $m$-dimensional random variable $\s{z}$ for the target logical constraint $\alpha$
% to indicate the logical constraint satisfaction. 
and define $\s{z} = S_{\alpha}(\s{v})$, where $\s{v} = f_{\s{w}}(\s{x}, \s{y})$.
%We make two illustrations for this definition. (1) In general, $\s{z}$ should not be a vector if we strictly encode logical constraints into the loss via the translation rules in Sec~\ref{sec:logical_rule}. 
Here, we use vector $\s{z}$ to indicate the combination of multiple variables (e.g., $v_1, \cdots, v_m$) in the logical constraint for training efficiency.
\footnote{Note that $\s{z}$ can be considered as a random variable even when the data point $(\s{x}, \s{y})$ is given because there usually exists randomness when training the model (e.g., dropout, batch-normalization, etc.).}

Next, we frame the training as the distributional approximation 
between the parametric distribution and target distribution over $\s{y}$ and $\s{z}$,
and choose the KL divergence as the distributional loss,
\begin{equation} \label{eqn:orig_KL}
   \min_{\s{w}} \sum_{i=1}^N \mathrm{KL}(p(\s{y}_i, \s{z}_i | \s{x}_i) \| p_{\s{w}}(\s{y}_i, \s{z}_i | \s{x}_i) ).
\end{equation}
By using Bayes' theorem, $p(\s{y}, \s{z} | \s{x})$ can be decomposed as
$p(\s{y},\s{z} | \s{x}) = p(\s{z} | (\s{x}, \s{y})) \cdot p(\s{y} | \s{x})$. 
Therefore, we can reformulate \eqref{eqn:orig_KL} as:
\begin{equation} \label{eqn:decomposed_KL}
\begin{aligned}
   \min_{\s{w}} & \sum_{i=1}^N \mathrm{KL}(p(\s{y}_i | \s{x}_i)  \|  p_{\s{w}}(\s{y}_i | \s{x}_i))  + \mathrm{E}_{\s{y}_i | \s{x}_i}[\mathrm{KL}(p(\s{z}_i | \s{x}_i, \s{y}_i) \| p_{\s{w}}(\s{z}_i | \s{x}_i, \s{y}_i))].
\end{aligned}
\end{equation}
The detailed derivation can be found in Appendix~\ref{appendix:KL}. 
In \eqref{eqn:decomposed_KL}, the first term is the original training loss, and the second term is the distributional loss of the logical constraint. 
%For example, by removing the constant term, 
%the first term is reduced to the cross entropy, which is the classic training loss for the classification. 

The remaining question is how to model the conditional probability distribution of the random variable $\s{z}$. 
%
%\tl{it is a bit hard to read here for the introduction of v}
Note that $\s{z} = 0$ indicates that the target logical constraint $\alpha$ is satisfied.
Thus, 
the target distribution of $\s{z}$ can be defined as a Dirac delta distribution~\citep[Sec. 15]{dirac1981principles}, i.e., $p(\s{z} | \s{x}, \s{y})$ is a zero-centered Gaussian with variance tending to zero~\citep[Chap. 4.8]{morse1954methods}. 
Furthermore, considering the non-negativity of $\s{z}$, 
we form the Dirac delta distribution as the limit of the sequence of truncated normal distributions on $[0,+\infty)$:
\begin{equation*}
% p(\s{z} | \s{x}, \s{y}) = \lim_{\sigma \to 0} \mathcal{TN}(\s{z}; 0, \sigma) = \lim_{\sigma \to 0}\frac{2}{\sigma} \phi(\frac{\s{z}}{\sigma}),
p(\s{z} | \s{x}, \s{y}) = \lim_{\sigma \to 0} \mathcal{TN}(\s{z}; 0, \sigma^2 \s{I}) = \lim_{\sigma \to 0}(\frac{2}{\sigma})^m \phi(\frac{\s{z}}{\sigma}), % d is the dimension
\end{equation*}
where $\phi(\cdot)$ is the probability density function of the standard multivariate normal distribution. %\tl{m is the dimension?}
For the parametric distribution of $\s{z}$, 
we  model $p_{\s{w}}(\s{z} | \s{x}, \s{y})$ as the truncated normal distribution on $[0, +\infty)$ 
with mean $\bm{\mu} = S_{\alpha}(\s{w})$ and covariance $\mathrm{diag}(\bm{\delta}^2)$, %i.e., 
\begin{equation*}
p_{\s{w}}(\s{z} | \s{x}, \s{y}) = \frac{1}{\sqrt{|\mathrm{diag}(\bm{\delta}^2)|}} \frac{\phi(\frac{\s{z} - \bm{\mu}}{\bm{\delta}})}{1-\Phi(-\frac{\bm{\mu}}{\bm{\delta}})},
% p_{\s{w}}(z | \s{x}, y) = \frac{2}{\sqrt{2 \pi} \hat{\sigma}} \frac{\exp \left(-\frac{\left(z-\mathcal{S}_{\s{w}}(\s{x}, y)\right)^{2}}{2 \hat{\sigma}^{2}}\right)}{1-\operatorname{erf}\left(-\frac{\mathcal{S}_{\s{w}}(\s{x}, y)}{\sqrt{2} \hat{\sigma}}\right)},
\end{equation*}
where $\Phi(\cdot)$ is the cumulative distribution function of the standard multivariate normal distribution, and $\frac{\bm{\mu}}{\bm{\delta}}$ denotes element-wise division of vectors $\bm{\mu}$ and $\bm{\delta}$.
\eat{
The KL divergence between the target distribution $p(\s{z} | \s{x}, \s{y})$ and the parametric distribution $p_{\s{w}}(\s{z} | \s{x}, \s{y})$ is
% the truncated normal distribution on $[0, +\infty)$, i.e., 
\begin{equation*}
\begin{aligned}
\mathrm{KL}&(p(\s{z} | \s{x}, \s{y}) \| p_{\s{w}}(\s{z} | \s{x}, \s{y})) = \log |\mathrm{diag}(\bm{\delta})| + \frac{1}{2}\|\frac{\bm{\mu}}{\bm{\delta}}\|^{2}
 + \log( 1-\Phi(-\frac{\bm{\mu}}{\bm{\delta}}) ) + \mathrm{const}.
% \mathrm{KL}&\left(p(\s{z} | \s{x}) \| p_{\s{w}}(\s{z} | \s{x}) \right) = \log \hat{\sigma} + \frac{1}{2\hat{\sigma}^2}\mathcal{S}_{\s{w}}(\s{x}, y)^{2} \\
% & \quad \qquad + \log \left(1-\mathrm{erf} (-\frac{\mathcal{S}_{\s{w}}(\s{x}, y)}{\sqrt{2}\hat{\sigma}}) \right) + \mathrm{const}.
\end{aligned}
\end{equation*} 
}
Hence, the final optimization problem of our learning framework is 
\begin{equation} \label{eqn:final_opt}
\begin{aligned}
& \min_{\s{w}, \bm{\delta}}~ \sum_{i=1}^N  \mathrm{KL}(p(\s{y}_i | \s{x}_i)  \|  p_{\s{w}}(\s{y}_i | \s{x}_i)) + 
\log |\mathrm{diag}(\bm{\delta})| + \frac{1}{2} \|\frac{\bm{\mu}_i}{\bm{\delta}}\|^{2} + \log( 1-\Phi(-\frac{\bm{\mu}_i}{\bm{\delta}}) ),
%   & \quad + \frac{1}{2\hat{\sigma}^2}\left( \mathcal{S}_{\s{w}}(\s{x}_i, y_i) \right)^{2} + \log \left(1-\mathrm{erf} (-\frac{\mathcal{S}_{\s{w}}(\s{x}_i, y_i)}{\sqrt{2}\hat{\sigma}}) \right),
\end{aligned}
\end{equation}
where $\bm{\mu}_i=S_{\alpha}(\s{v}_i), \s{v}_i = f_{\s{w}}(\s{x}_i, \s{y}_i)$. 
% The variance $\bm{\delta}^2$ actually loosens the logical constraint in form of probability. 
Different from the probabilistic soft logic that estimates the probability for each atomic formula, 
we establish the probability distribution $\mathcal{TN}(\s{z}; \bm{\mu}, \mathrm{diag}(\bm{\delta}^2))$ of the target logical constraint. 
This allows to directly determine the variance $\bm{\delta}^2$ of the entire logical constraint rather than analyzing the correlation between atomic constraints. 

Moreover, the minimizations of $\s{w}$ and $\bm{\delta}$ can be viewed as a competitive game~\citep{roughgarden2010algorithmic, competitive2019Schaefer}.
Roughly speaking, they first cooperate to achieve both higher model accuracy and higher degree of logical constraint satisfaction, 
and then compete between these two sides to finally reach a (local) Nash equilibrium (even though it may not exist).
The local Nash equilibrium means that the model accuracy and logical constraint satisfaction cannot be improved at the same time, which shows a compatibility in the sense of game theory.

%%%%%%%%%%%%%%%%%%%%%%%%%%%%%%%%%%%%%%%%%%%%%%%%%%%%%%%%%%%%%%%%%%%%%%%%%%%%%%%%%%%%%%%%%%%%%%%%%%%%%%%%%%%%%%%%%%%%%%%%%%%%%%%
\subsection{Optimization Procedure}

Note that the dual variables $\bm{\tau}_{\wedge}$ and $\bm{\tau}_{\vee}$ of the cost function $\bm{\mu}_i=S_{\alpha}(\s{v}_i)$ also need to be optimized during training. 
To solve the optimization problem, we utilize a stochastic gradient descent ascent (SGDA) algorithm (with min-oracle)~\citep{lin2020gradient}, and the details of the training algorithm is summarized in Appendix~\ref{sec:algo}.
Specifically,
we update $\s{w}$ and $\bm{\tau}_{\vee}$ through gradient descent, update  $\bm{\tau}_{\wedge}$ through gradient ascent, 
and update $\bm{\delta}$ by its approximate min-oracle, i.e., %optimized 
via  its upper bound  
%\begin{equation*}
%\begin{aligned}
$\mathrm{KL}(p(\s{z} | \s{x}, \s{y}) \| p_{\s{w}}(\s{z} | \s{x}, \s{y}) ) \leq \log |\mathrm{diag}(\bm{\delta})| + \frac{1}{2} \|\frac{\bm{\mu}}{\bm{\delta}}\|^2 + \mathrm{const}$.
%\end{aligned}
%\end{equation*}
%
Thus, the update of $\bm{\delta}$ in each iteration is
%\begin{equation*} % \label{eqn:normal_delta}
$\bm{\delta}^2 = \frac{1}{N} \sum_{i=1}^N \bm{\mu}_i
= \frac{1}{N}\sum_{i=1}^N S_{\alpha}(\s{v}_i)$.
%\end{equation*}
%

% In practice, we set a lower bound of $\bm{\delta}$, and terminate the iteration when $\bm{\delta} \ge \bm{\delta}_{\mathrm{lower}}$. 

The update of $\bm{\delta}$ plays an important role in our algorithm, 
because the classic SGDA algorithm (i.e., direct alternating gradient descent on $\s{w}$ and $\bm{\delta}$) may converge to limit cycles. 
Therefore, updating $\bm{\delta}$ via its approximate min-oracle not only makes the iteration more efficient, but also ensures the convergence of our algorithm as summarized in Theorem~\ref{theo:convergent result}. 
%The convergent result is provided in 

\begin{theorem} \label{theo:convergent result}
Let $\upsilon(\cdot) = \min_{\s{\delta}} L(\cdot, \bm{\delta})$, and assume $L(\cdot)$ is $L$-Lipschitz. 
Algorithm~\ref{alg:logical_training} with step size $\eta_{\s{w}} = \gamma / \sqrt{T+1}$ ensures the output $\overbar{\bm{w}}$ of $T$ iterations satisfies
\begin{equation*}
\begin{aligned}
\mathbb{E}[\|\nabla e_{\upsilon/2 \kappa}(\overbar{\s{w}})\|^2]  \leq
O\left(\frac{ \kappa \gamma^2 L^2 + \Delta_0}{\gamma \sqrt{T+1}} \right), 
\end{aligned}
\end{equation*}
where $e_{\upsilon}(\cdot)$ is the Moreau envelop of $\upsilon(\cdot)$.
% for the case where $\rho \leq \varrho$.
\end{theorem}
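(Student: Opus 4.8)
The plan is to follow the standard analysis of stochastic gradient descent ascent for nonconvex--concave minimax problems (in the style of \citet{lin2020gradient}, together with the weakly-convex stochastic optimization machinery based on the Moreau envelope), treating $\upsilon(\s{w}) = \min_{\bm{\delta}} L(\s{w}, \bm{\delta})$ as the effective objective and measuring progress through $\|\nabla e_{\upsilon/2\kappa}(\s{w})\|$. The first step is to record the structural properties of $L$: under the $L$-Lipschitz/smoothness assumption one shows that $\upsilon$ is $\rho$-weakly convex for a $\rho$ tied to $\kappa$, so that with Moreau parameter $1/(2\kappa)$ the envelope $e_{\upsilon/2\kappa}$ is continuously differentiable and its gradient norm is a legitimate near-stationarity measure (a small value certifies that $\s{w}$ lies close to an approximately stationary point of $\upsilon$).

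The second step is to quantify the accuracy of the inner updates. Because $\bm{\delta}$ is not moved by a plain gradient step but by its approximate min-oracle $\bm{\delta}^2 = \frac{1}{N}\sum_{i=1}^N S_{\alpha}(\s{v}_i)$ --- the exact minimizer of the stated upper bound on $\mathrm{KL}(p \| p_{\s{w}})$ --- I would bound the gap $L(\s{w}_t, \bm{\delta}_t) - \upsilon(\s{w}_t)$ by the slack between the true KL and its upper bound, yielding a controllable oracle error $\varepsilon_{\mathrm{orac}}$. Simultaneously, since the dual variables $\bm{\tau}_{\wedge}, \bm{\tau}_{\vee}$ are advanced by single ascent/descent steps rather than solved exactly, I would invoke the (strong) concavity/convexity of the inner objective in these duals to bound their per-iteration tracking error.

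The third step is the per-iteration descent inequality. Writing $\hat{\s{w}}_t = \prox_{\upsilon/2\kappa}(\s{w}_t)$ and using the stochastic update $\s{w}_{t+1} = \s{w}_t - \eta_{\s{w}} \s{g}_t$ with $\s{g}_t$ an unbiased estimate of $\nabla_{\s{w}} L(\s{w}_t, \bm{\delta}_t)$, I expand $e_{\upsilon/2\kappa}(\s{w}_{t+1})$, take conditional expectations, and combine with the weak-convexity inequality to get
\begin{equation*}
\mathbb{E}[e_{\upsilon/2\kappa}(\s{w}_{t+1})] \leq e_{\upsilon/2\kappa}(\s{w}_t) - \frac{\eta_{\s{w}}}{4\kappa}\,\|\nabla e_{\upsilon/2\kappa}(\s{w}_t)\|^2 + O(\kappa \eta_{\s{w}}^2 L^2) + O(\eta_{\s{w}}\,\varepsilon_{\mathrm{orac}}).
\end{equation*}
Telescoping over $t = 0,\dots,T$, setting $\eta_{\s{w}} = \gamma/\sqrt{T+1}$, and taking $\overbar{\s{w}}$ as a uniformly sampled iterate then produces
\begin{equation*}
\mathbb{E}[\|\nabla e_{\upsilon/2\kappa}(\overbar{\s{w}})\|^2] \leq O\!\left( \frac{\kappa \gamma^2 L^2 + \Delta_0}{\gamma \sqrt{T+1}} \right),
\end{equation*}
with $\Delta_0 = e_{\upsilon/2\kappa}(\s{w}_0) - \min_{\s{w}} \upsilon(\s{w})$ the initial gap, as claimed.

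I expect the main obstacle to be the coupling of the three heterogeneous update rules: gradient descent on $\s{w}$ and $\bm{\tau}_{\vee}$, gradient ascent on $\bm{\tau}_{\wedge}$, and the min-oracle on $\bm{\delta}$. The crux is to show that, even though the inner variables are resolved only inexactly, the accumulated error $\varepsilon_{\mathrm{orac}}$ stays of lower order so that the descent on the Moreau envelope is preserved through telescoping. This is precisely where the exact min-oracle for $\bm{\delta}$ is indispensable: the naive alternating scheme may instead settle into limit cycles, which would destroy the monotone decrease and hence the telescoping argument.
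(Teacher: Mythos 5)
Your proposal follows essentially the same route as the paper's proof: the paper likewise casts the analysis as inexact stochastic gradient descent on the weakly convex function $\upsilon(\cdot)=\min_{\bm{\delta}}L(\cdot,\bm{\delta})$, measures progress via the Moreau envelope $e_{\upsilon/2\kappa}$, treats the closed-form $\bm{\delta}$-update as an approximate min-oracle whose error enters as a lower-order term, and telescopes the per-iteration envelope descent with $\eta_{\s{w}}=\gamma/\sqrt{T+1}$ to obtain the $O(1/\sqrt{T+1})$ bound. The only organizational difference is that the paper disposes of the dual variables $(\bm{\tau}_{\wedge},\bm{\tau}_{\vee})$ by a separate convex--concave/PL argument showing they reach a global minimax point, rather than folding their per-iteration tracking error into the oracle error as you propose.
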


{\em Remarks}. Theorem~\ref{theo:convergent result} states that the proposed algorithm with suitable stepsize can successfully converge to a point $(\s{w}^*, \bm{\delta}^*, \bm{\tau}_{\wedge}^*, \bm{\tau}_{\vee}^*)$, where $(\s{w}^*, \bm{\delta}^*)$ is an approximate stationary point,\footnote{The set of stationary points contains local Nash equilibria~\citep{jin2020local}. In other words, all local Nash equilibria are stationary points, but not vice versa.} and $(\bm{\tau}_{\wedge}^*, \bm{\tau}_{\vee}^*)$ is a global minimax point. 
\eat{
Theorem~\ref{theo:convergent result} states that $\overbar{\s{w}}$ can finally achieve the stationary point of $e_{\upsilon/2 \kappa}(\cdot)$, where $\kappa$ is a constant determined by the Lipschitz property of $L(\cdot)$.  
As proved in \citet[Theorem 31.5]{rockafellar2015convex}, a small gradient $\| \nabla e_{\upsilon}(\overbar{\s{w}})\|$ implies that $\overbar{\s{w}}$ is close to a nearly stationary point of $\upsilon(\cdot)$.}
More detailed analysis and proofs are provided in Appendix~\ref{sec:optimality}.
% !TEX root = ./main.tex

\section{Experiments and Results}
\label{sec:exp}

We carry out experiments on four tasks, i.e., handwritten digit recognition, handwritten formula recognition, shortest distance prediction in a weighted graph, and CIFAR100 image classification. For each task, we train the model with normal cross-entropy loss on the labeled data as the baseline result, and compare our approach with PD~\citep{primal2019} and DL2~\citep{DL22019}, which are the state-of-the-art approaches that incorporate logical constraints into the trained models. For PD, there are two choices (Choice 1 and Choice 2 named by the authors) to translate logical constraints into loss functions which are denoted by PD$_1$ and PD$_2$, respectively. We also compare with SL~\citep{semantic2018} and DPL~\citep{manhaeve2018deepproblog} in the first task. These two methods are intractable in the other three tasks as they both employ the knowledge compilation~\citep{darwiche2002knowledge} to translate logical constraint, which involves an implicit enumeration of all  satisfying assignments. % and is intractable in the other three experiments. 
%as it cannot be applied in the other three tasks, because it requires to enumerate all the cases that satisfy the logical constraints, which is intractable in all the three experiments. 
Each reported experimental result is derived by computing the average of five repeats. 
For each atomic formula $a:= v \leq c$,  we consider it is satisfied if $\hat{v} \leq c - \mathrm{tol}$ where $\mathrm{tol}$ is a predefined tolerance threshold to relax the strict inequalities. 
%and then compute the overall constraint accuracy.  
We set $\mathrm{tol}=0.01$ on the three classification tasks and $\mathrm{tol}=1$ on the regression task (shortest distance prediction), respectively.  More setup details can be found in Appendix~\ref{sec:setting}.
%We implemented our approach via the PyTorch DL framework. For PD and DL2, we use the code provided by the respective authors.
%The experiments were conducted on a GPU server with two Intel Xeon Gold 5118 CPU@2.30GHz, 400GB RAM, and 9 GeForce RTX 2080 Ti GPUs. The server ran Ubuntu 16.04 with GNU/Linux kernel 4.4.0.
The code, together with the experimental data, is available at \url{https://github.com/SoftWiser-group/NeSy-without-Shortcuts}.
%%%%%%%%%%%%%%%%%%%%%%%%%%%%%%%%%%%%%%%%%%%%%%%%%%%%%%%%%%%%%%%%%%%%%%%%%%%%%%%%%%%%%%%%%%%%%%%%%%%%%%%%%%%%%%%%%%%%%%%%%%
\subsection{Handwritten Digit Recognition}
%We first evaluate our approach by the rotation relation between ``6'' and ``9''. 
In the first experiment, we construct a semi-supervised classification task by removing the labels of `6' in the MNIST dataset~\citep{lecun1989handwritten} during training. 
%Elaborately, we use MNIST dataset as the training set, but typically remove the labels of class 6.
We then apply a logical rule to predict label `6' using the rotation relation between `6' and `9' as 
%Instead, we construct the logical rule, i.e., if the image is classified as digit 9 when rotating it with 180 degrees, then it should be classified as digit 6. Hence, the corresponding logical constraint is formulated as 
$f(\tilde{\s{x}}) = 9 \rightarrow f(\s{x}) = 6$, where $\s{x}$ is the input, and $\tilde{\s{x}}$ denotes the result of rotating $\s{x}$ by 180 degrees. 
We rewrite the above rule as the disjunction $(f(\tilde{\s{x}}) \neq 9) \vee (f(\s{x}) = 6)$. 
% \eat{
% Furthermore, for our approach, DL2, PD$_2$ that needs a differentiable $f$, we decompose the logical constraints as 
% \begin{equation*}
% \left\{
% \begin{array}{ll} 
% f(\tilde{\s{x}}) \neq 9 := \vee_i (p_9(\tilde{\s{x}}) \leq p_i(\tilde{\s{x}})+\tau), \quad i=0,\dots,8, \\
% f(\s{x}) = 6  := \wedge_i (p_6(\s{x}) \ge p_i(\s{x})+\tau), \quad i=0,\dots,5,7,\dots,9, 
% \end{array}
% \right.
% \end{equation*}
% where $p_i(\s{x})$ denotes the predictive probability (i.e., the Softmax output) that $x$ belongs to class $i$, and $\tau$ is a tolerance coefficient which relaxes strict inequalities.}
We train the LeNet model on the MNIST dataset, and further validate the transferability performance of the model on the 
%To illustrate the transferability, we also validate the performance of the model on the 
USPS dataset~\citep{hull1994database}. 

The classification accuracy and logical constraint satisfaction results of class `6' are shown in Table~\ref{tab:mnist-res1}. The  $\neg \text{P}$-Sat. and $\text{Q}$-Sat. in the table indicate the satisfaction degrees of $f(\tilde{\s{x}}) \neq 9$ and $f(\s{x}) = 6$, respectively.
At first glance, it is a bit strange that our approach significantly outperforms some alternatives on the accuracy, but achieves almost the same logic rule satisfaction. 
However, taking a closer look at how the logic rule is satisfied, we find that
%For this disjunction, we compute the satisfaction of $f(\tilde{\s{x}}) \neq 9$ and $f(\s{x}) = 6$, respectively. 
%As can be seen in Table~\ref{tab:mnist-res2}, 
alternatives are all prone to learn an {\em shortcut} satisfaction (i.e., $f(\tilde{\s{x}}) \neq 9$) for the logical constraint, and thus cannot further improve the accuracy. In contrast, our approach learns to satisfy $f({\s{x}}) = 6$ when $f(R({\s{x}})) = 9$, which is supposed to be learned from the logical rule.

On the USPS dataset, we additionally train a reference model with full labels, and it achieves 82.1\% accuracy. Observe that the domain shift deceases the accuracy of all methods, but our approach still obtains the highest accuracy (even comparable with the reference model) and constraint satisfaction. This is due to the fact that our approach better learns the prior knowledge in the logical constraint and thus yields better transferability. 
%As a reference, we additionally train a model on the USPS dataset with full labels, and it achieves 82.1\% accuracy. This means that our approach can achieve comparable performance even with the ground-truth model. 
An additional experiment of a transfer learning task on the CIFAR10 dataset showing the transferability of our approach can be found in Appendix~\ref{sec:transfer}.
%To further illustrate the efficacy of our logical training in transferability, we also conduct a transfer learning task on the CIFAR10 dataset (the results can be found in Appendix~\ref{sec:transfer}).
% \footnote{It worthy noting that SL does not perfectly support this disjunction since $f(\tilde{\s{x}}) \neq 9$ and $f(\s{x}) = 6$ are not mutually exclusive, and in this case SL impose the rules $f(\tilde{\s{x}}) \neq 9$ and $f(\s{x}) = 6$ equally onto the training loss, which is essentially equivalent to $(f(\tilde{\s{x}}) \neq 9) \wedge (f(\s{x}) = 6)$. Therefore, as the weight of the logical constraint increases, the trained model will be more satisfying the conjunction $(f(\tilde{\s{x}}) \neq 9) \wedge (f(\s{x}) = 6)$. Even though,
% it still is outperformed by our approach on both MNIST and USPS dataset.}

{\small
\begin{table}[t]
\centering
\caption{Results (\%) of the handwritten digit recognition task. The proposed approach learns how the logical constraint is satisfied (i.e., the Q-Sat.) while the existing methods fail.}% the competitors in both calculation accuracy and logical constraint satisfaction.}
%\resizebox{1\columnwidth}{!}{
\vspace{-1ex}
\begin{tabular}{crrrrrrrr}
  \toprule
  \multirow{2}{*}{Method} & \multicolumn{4}{c}{MNIST} & \multicolumn{4}{c}{USPS}\\
   \cmidrule(lr){2-5}\cmidrule(lr){6-9} & Acc. & Sat.  & $\neg \text{P}$-Sat. & $\text{Q}$-Sat. & Acc. & Sat. & $\neg \text{P}$-Sat. & $\text{Q}$-Sat.\\  
  \midrule
    Baseline & 89.4 & 27.4  & 27.4 & 0.0 & 75.4 & 81.1 & 81.1 & 0.0 \\
    SL  & 85.5 & 97.0 & 97.0 & 0.0 & 48.7 & 98.8 & 98.8 & 0.0 \\
    DPL & 88.0 & 78.0 & 78.0 & 0.0 & 25.6 & 8.8 & 8.8 & 0.0 \\
    PD$_1$ & 89.2 & 44.2 & 44.2 & 0.0 & 73.7 & 87.0 & 87.0 & 0.0 \\
    PD$_2$ & 89.3 & 63.1 & 63.1 & 0.0 & 74.3 & 87.6 & 87.6 & 0.0 \\    
    DL2 & 89.3 & 91.3 & 91.3 & 0.0 & 69.9 & 98.8 & 98.8 & 0.0 \\  
    \midrule
    Ours & {\bf 98.8} & {\bf 98.9} & {49.7} & {\bf 97.7} & {\bf 80.2} & {\bf 98.8} & {90.5} & {\bf 75.2} \\
  \bottomrule
\end{tabular}
%}
%  \begin{tablenotes}
%         \footnotesize
%         \item[1]*  It refers to the proportion of labeled and unlabeled data used from the training set. 
%  \end{tablenotes}
\label{tab:mnist-res1}
\end{table}
}

\eat{
\begin{table}[t]
\centering
\caption{Results of the handwritten digit recognition task. }
%\resizebox{1\columnwidth}{!}{
\vspace{1ex}
\begin{tabular}{crrrr}
  \toprule
  \multirow{2}{*}{Method} & \multicolumn{2}{c}{MNIST} & \multicolumn{2}{c}{USPS}\\
   \cmidrule(lr){2-3}\cmidrule(lr){4-5} & $\neg P$-Sat. (\%) & $Q$-Sat. (\%) & $\neg P$-Sat. (\%) & $Q$-Sat. (\%) \\  
  \midrule
    Baseline & 27.4 & 0.0 & 81.1 & 0.0 \\
    SL & 97.0 & 0.0 & 98.8 & 0.0 \\
    PD$_1$ & 44.2 & 0.0 & 87.0 & 0.0 \\
    PD$_2$ & 63.1 & 0.0 & 87.6 & 0.0 \\    
    DL2 & 91.3 & 0.0 & 98.8 & 0.0 \\  
    \midrule
    Ours & {\bf 49.7} & {\bf 97.7} & {\bf 90.5} & {\bf 75.2} \\
  \bottomrule
\end{tabular}
%}
%  \begin{tablenotes}
%         \footnotesize
%         \item[1]*  It refers to the proportion of labeled and unlabeled data used from the training set. 
%  \end{tablenotes}
\label{tab:mnist-res2}
\end{table}
}

\subsection{Handwritten Formula Recognition}
We next evaluate our approach on a handwritten formula (HWF) dataset, 
which consists of 10K training formulas and 2K test formulas~\citep{li2020closed}. 
The task is to predict the formula from the raw image, and then calculate the final result based on the prediction. 
We adopt a simple grammar rule in the mathematical formula, i.e.,
four basic operators ($+, -, \times, \div$) cannot appear in adjacent positions. 
For example, ``$3+ \! \times9$'' is not a valid formula according to the grammar. % and should not occur in the model's prediction.
Hence, 
given a formula with $k$ symbols, the target logical constraint is formulated as $l := \wedge_{i=1}^{k-1} (a_{i1} \vee a_{i2})$, with
%\begin{equation*}
%\begin{cases} 
$a_{i1} := p_{\mathrm{digits}}(\s{x}_i) + p_{\mathrm{digits}}(\s{x}_{i+1}) = 2$ and 
$a_{i2} := p_{\mathrm{ops}}(\s{x}_i) + p_{\mathrm{ops}}(\s{x}_{i+1}) = 1$,  
%\end{cases}
%
% (p_{\mathrm{digit}}(\s{x}_i) + p_{\mathrm{digit}}(\s{x}_{i+1}) = 2) \vee (p_{\mathrm{ops}}(\s{x}_i) + p_{\mathrm{ops}}(\s{x}_{i+1}) = 1),
%\end{equation*}
where $p_{\mathrm{digits}}(\s{x})$ and $p_{\mathrm{ops}}(\s{x})$ represent the predictive probability that $\s{x}$ is a digit or an operator, respectively. 
This logical constraint emphasizes that either any two adjacent symbols are both numbers (encoded by $a_{i1}$), or only one of them is a basic operator (encoded by $a_{i2}$).

To show the efficacy of our approach, %the logical constraint translation and the algorithm, 
we cast this task in a semi-supervised setting, in which
only a very small part of labeled data is available, but 
%To illustrate the efficacy of our training loss and algorithm, 
%with some particular losses. 
%Concretely, we choose the model trained with normal cross-entropy loss on the labeled data as the baseline result,
%and use results of primal-dual (PD) loss and DL2 loss as comparisons.
the logical constraints of unlabeled data can be used during the training process.
%It should be noted that the semantic loss can not be applied in this task 
%because it requires to enumerate all the cases that satisfy the constraints of logic, which is actually intractable.
The calculation accuracy and logical constraint satisfaction results 
are provided in Table~\ref{tab:hwf-res}. In the table, we consider two settings of data splits, i.e., using $2\%$ labeled data and $80\%$ unlabeled data, as well as using $5\%$ labeled data and $20\%$ unlabeled data in the training set.
%of the whole training set and relatively small unlabeled data ($80\%$ and $20\%$ of the entire training set), 
It is observed that our approach achieves the highest accuracy and logical constraint satisfaction in both cases, demonstrating the effectiveness of the proposed approach. 
%For example, in the first data split, our approach improves the best competitor by 
%\yy{discussions?}

{\small
\begin{table}[t]
\parbox{.52\linewidth}{
\centering
\caption{Results (\%) of the handwritten formula recognition task. The proposed approach achieves the best results.} % The proposed approach outperforms the competitors in both calculation accuracy and logical constraint satisfaction.
%\resizebox{1\columnwidth}{!}{
\vspace{-1ex}
\begin{tabular}{crrrr}
  \toprule
  \multirow{2}{*}{Method} & \multicolumn{2}{c}{2/80$^*$ Split} & \multicolumn{2}{c}{5/20$^*$ Split}\\
   \cmidrule(lr){2-3}\cmidrule(lr){4-5} & Acc. & Sat. & Acc. & Sat. \\  
  \midrule
    Baseline & 57.4 & 64.0 & 75.6 & 81.5 \\
    PD$_1$ & 62.1 & 63.2 & 76.5 & 78.6 \\
    PD$_2$ & 62.5 & 62.5 & 76.4 & 78.4 \\    
    DL2 & 62.9 & 57.5 & 79.3 & 86.6 \\  
    \midrule
    Ours & {\bf 65.1} & {\bf 82.4} & {\bf 79.5} & {\bf 92.8} \\
  \bottomrule
\end{tabular}
%}
\label{tab:hwf-res}
}
\hfill
\parbox{.45\linewidth}{
\centering
\caption{Results on the shortest distance prediction task. The proposed approach achieves the best results.} % The proposed approach outperforms the competitors in both prediction accuracy and constraint satisfaction.
\vspace{-1ex}
%\resizebox{1\columnwidth}{!}{
\begin{tabular}{crr}
  \toprule
  Method & {MSE / MAE} & Sat. (\%) \\
  \midrule
    Baseline & 8.90 / 2.07 & 43.9 \\
    PD$_2$ & 9.11 / 2.11 & 46.1 \\
    DL2 & 14.21 / 2.68 & 65.3 \\  
    \midrule
    Ours & {\bf 7.09 / 1.87} & {\bf 69.0} \\
  \bottomrule
\end{tabular}
%}
\label{tab:srp-res}
}
 \begin{tablenotes}
        \footnotesize
        \item[1]*  It refers to the proportion of labeled and unlabeled data used from the training set. 
 \end{tablenotes}
\end{table}
}

%%%%%%%%%%%%%%%%%%%%%%%%%%%%%%%%%%%%%%%%%%%%%%%%%%%%%%%%%%%%%%%%%%%%%%
\subsection{Shortest Distance Prediction}
We next consider a regression task, i.e., to predict the length of the shortest path between two vertices in a weighted connected graph $\mathcal{G} = (V, E)$. 
Basic properties %of a metric are equipped to 
should be respected by the model's prediction $f(\cdot)$, i.e., for any $v_i, v_j, v_k \in V$, the prediction should obey 
(1) symmetry: $f(v_i, v_j) = f(v_j, v_i)$ and (2) triangle inequality: $f(v_i, v_j) \leq f(v_i, v_k) + f(v_k, v_j)$. 
% \vspace{-1ex}
% \begin{itemize}
% \item Symmetry: $f(v_i, v_j) = f(v_j, v_i)$;
% %\vspace{-1ex}
% \item Triangle inequality: $f(v_i, v_j) \leq f(v_i, v_k) + f(v_k, v_j)$. 
% \end{itemize}
% \vspace{-1ex}
We train an MLP %(multilayer perceptron) 
that takes the adjacency matrix of the graph as the input, 
and outputs the predicted shortest distances from the source node to all the other nodes. 
In the experiment, the number of vertices is fixed to $15$, and the weights of edges are uniformly sampled among $\{1, 2, \dots, 9\}$.

The results are shown in Table~\ref{tab:srp-res}. We do not include the results of PD$_1$ as it does not support this regression task (PD$_1$ only supports the case when the softened truth value belongs to $[0,1]$). It is observed that our approach achieves the lowest MSE/MAE with the highest constraint satisfaction. %\tl{more discussion?}
We further investigated the reason of the performance gain, and found that models trained by the existing methods easily %{severely}
overfit to the training set (e.g., training MSE/MAE nearly vanished).
In contrast, the model trained by our method is well regulated by the required logical constraints, making the test error more consistent with the training error. % MSE/MAE is consistent with test MSE/MAE.

%%%%%%%%%%%%%%%%%%%%%%%%%%%%%%%%%%%%%%%%%%%%%%%%%%%%%%%%%%%%%%%%%%%%%%%%%

\subsection{Image Classification}

We finally evaluate our method on the classic CIFAR100 image classification task. 
The CIFAR100 dataset contains 100 classes which can be further grouped into 20 superclasses~\citep{krizhevsky2009learning}.
Hence, we use the logical constraint proposed by~\citet{DL22019}, i.e.,
if the model classifies the image into any class, the predictive probability of corresponding superclasses should first achieve full mass. 
For example, the \emph{people} superclass consists of five classes (\emph{baby}, \emph{boy}, \emph{girl}, \emph{man}, and \emph{woman}), 
and thus $p_{\mathrm{people}}(\s{x})$ should have $100\%$ probability if the input $\s{x}$ is classified as girl. 
%Although our approach does not support the translation of such implication rule, we can equivalently rewrite 
We can formulate this logical constraint as 
%\[
% (p_{\mathrm{people}}(\s{x})=0) \vee (p_{\mathrm{people}}(\s{x}) = 1) \wedge \dots \wedge (p_{\mathrm{fish}}(\s{x})=0) \vee (p_{\mathrm{fish}}(\s{x}) = 1), 
$\bigwedge_{s \in \mathrm{superclasses}}\left( p_{s}(\s{x})=0.0\%  \vee  p_{s}(\s{x}) = 100.0\% \right)$, 
%\]
where the probability of the superclass is the sum of its corresponding classes’ probabilities (for example, $p_{\mathrm{people}}(\s{x}) = p_{\mathrm{baby}}(\s{x}) + p_{\mathrm{boy}}(\s{x}) + p_{\mathrm{girl}}(\s{x}) + p_{\mathrm{man}}(\s{x}) + p_{\mathrm{woman}}(\s{x})$). 

\eat{
\begin{table}[htb]
\begin{center}
    \begin{tabular}{|c|c|c|c|}
    \hline
    Method & Vgg & Resnet & Densenet \\
    \hline 
    Baseline & 51.0/64.6 & 46.6/59.8 & 36.2/50.2  \\
    PD(choice1) & &  &  \\
    PD(choice2) & 46.9/61.1 & 54.2/67.1 & 43.0/56.8 \\
    DL2(original) & 32.6/48.9 & 45.3/58.7 & 40.8/54.2 \\    
    DL2(tuned) &  39.4/55.1 & 37.6/49.3 & 27.8/36.0 \\   
    \hline 
    Ours & 53.4/68.5 & 56.0/69.7 & 51.4/66.4 \\
    \hline
    \end{tabular}
 \begin{tablenotes}
        \footnotesize
        \item[]  `$a/b$' are classification accuracies of class/superclass. 
 \end{tablenotes}
\caption{Classification accuracy on CIFAR100 dataset.}
\label{tab:cifar100-acc}
\end{center}
\end{table}

\begin{table}[htb]
\begin{center}
    \begin{tabular}{|c|c|c|c|}
    \hline
    Method & Vgg & Resnet & Densenet \\
    \hline 
    Baseline & 63.0 & 48.5 & 39.7  \\
    PD(choice1) & &  &  \\
    PD(choice2) & 57.5 & 69.2 &  \\
    DL2(original) & 31.2 & 78.4 & 62.2 \\    
    DL2(tuned) &  38.9 & 55.1 &  52.1 \\   
    \hline 
    Ours & 81.1 & 87.2 & 88.2 \\
    \hline
    \end{tabular}
\caption{Constraint accuracy on CIFAR100 dataset.}
\label{tab:cifar100-cons}
\end{center}
\end{table}
}

\begin{figure}[t]
\centering
 \subfloat[Accuracy of class classification]{ %[]
    \includegraphics[width=0.48\columnwidth]{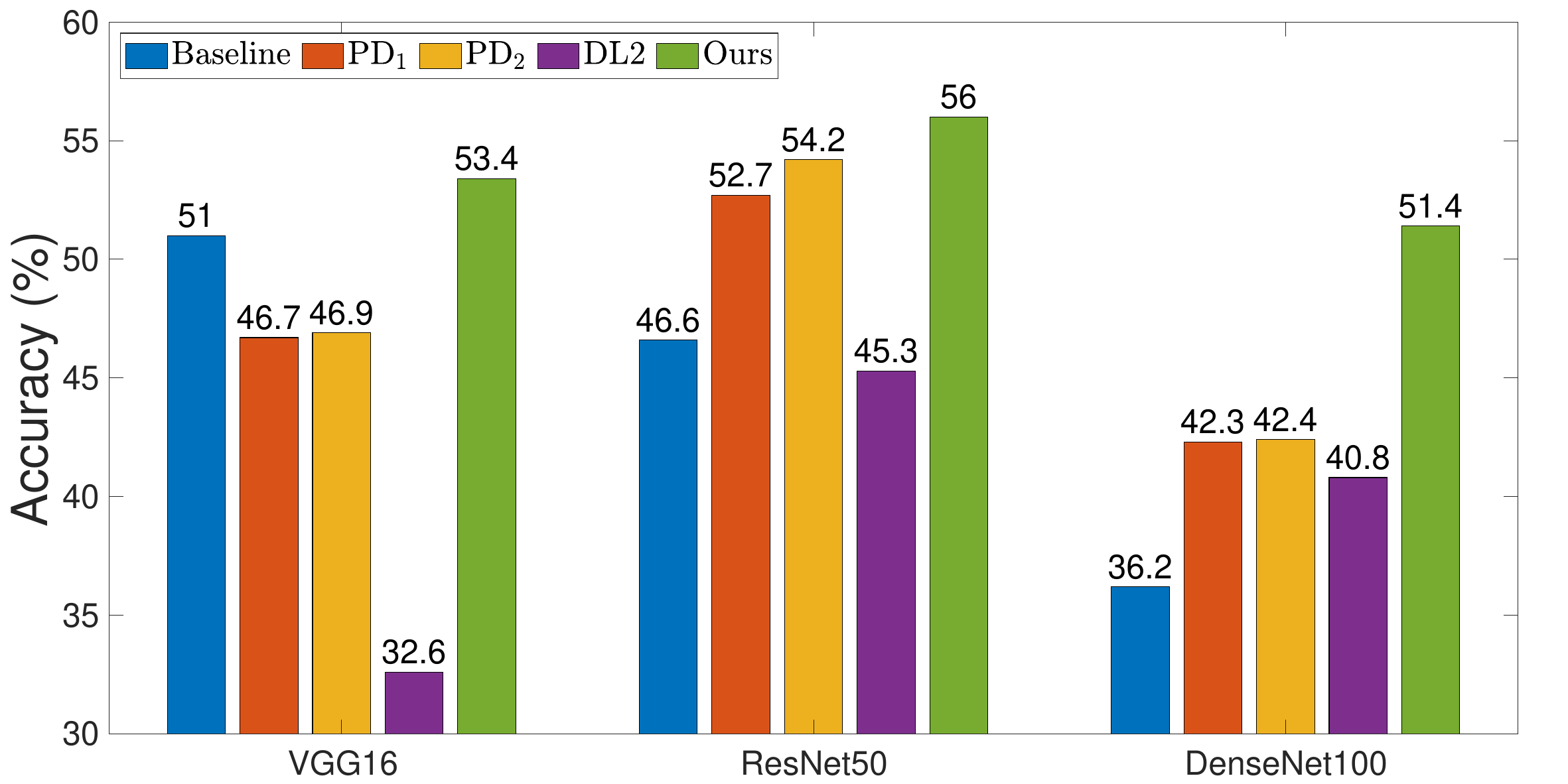}}
%\newline
\hspace{0.1in}
\subfloat[Accuracy of superclass classification]{ %[]
\includegraphics[width=0.48\columnwidth]{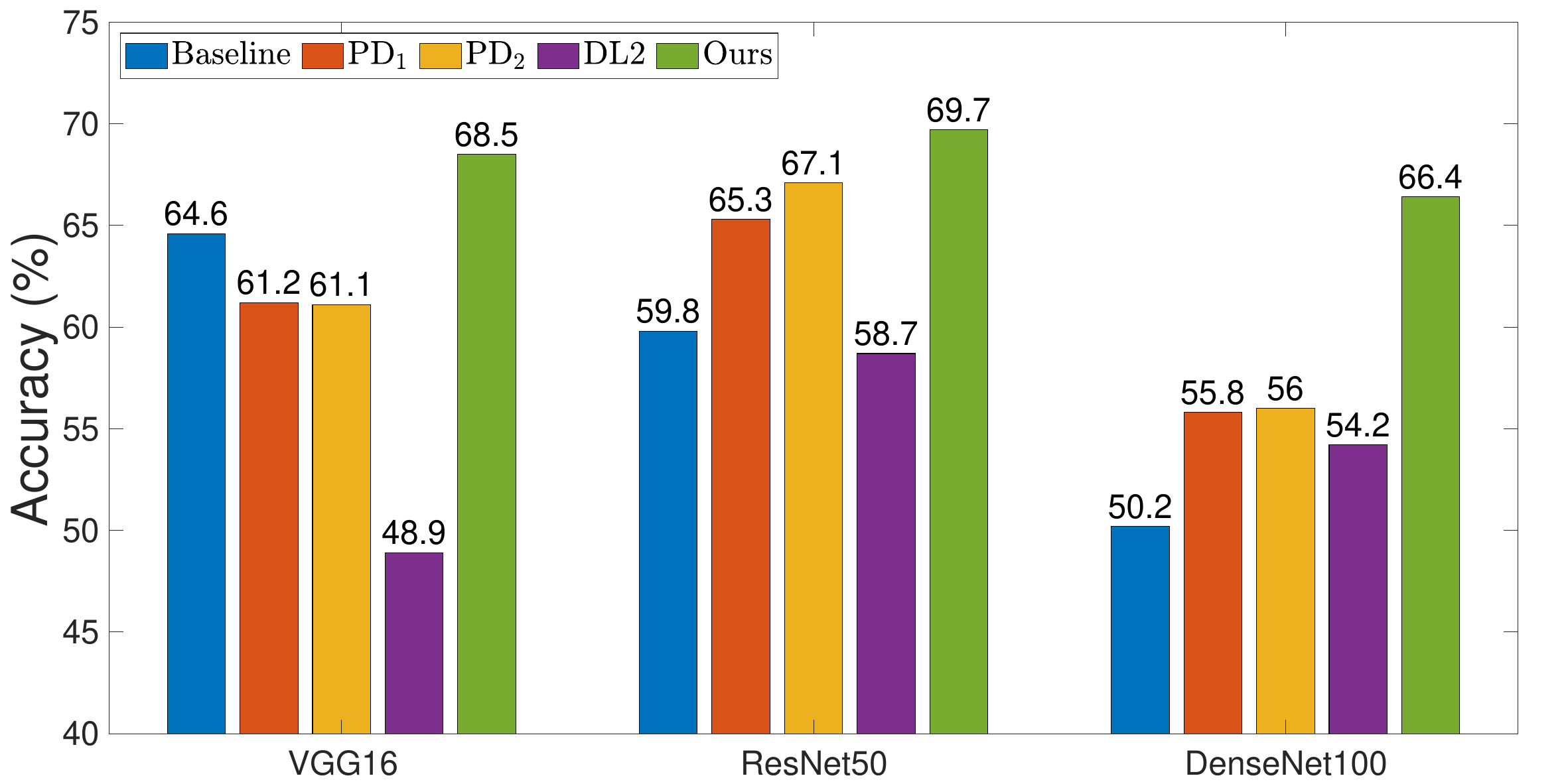}}
\caption{The accuracy results (\%) of image classification on the CIFAR100 dataset. The proposed approach outperforms the competitors in all the three cases for both class and superclass classification.}
\label{fig:cifar100_acc}
\end{figure}

We construct a labeled dataset of 10,000 examples and an unlabeled dataset of 30,000 examples by randomly sampling from the original training data, and then train three different models (VGG16, ResNet50, and DenseNet100) to evaluate the performance of different methods.  
% We split the training set into labeled, unlabeled and validation set in the ratio 20/60/20. 
%We find that the DL2 loss is extremely sensitive to the penalty coefficient of logical loss, and thus we try to tune it (set the coefficient by a very small value 1e-5). 
The results of classification accuracy and constraint satisfaction are shown in Figure~\ref{fig:cifar100_acc}
%Table~\ref{tab:cifar100-acc} 
and Table~\ref{tab:cifar100-cons}, respectively. We can observe significant enhancements in both metrics of our approach. 

% \input{tables/table_cifar100_cons}
% !TEX root = ./main.tex

\section{Related Work}
%Training neural networks with logical constraints is generally related to 
%neuron-symbolic learning, fuzzy logic, and multi-objective learning. 
%We briefly review the related work in these directions. 
%and highlight the connections and differences between them and our work. 

% giunchiglia2021multi,
%\subsection{Learning and Logic Integration}
{\bf Learning with constraints.} 
Research on linking learning and logical reasoning has emerged for years~\citep{roth2004linear, chang2008learning, cropper2020inductive}.
Early research mostly concentrated on mining well-generalized logical relations~\citep{muggleton1992inductive}, i.e., 
inducing a set of logical (implication) rules based on given logical atoms and examples. 
Recently, % rather than learning statistical logical rules, 
several work switch to training the model with deterministic logical constraints 
that are explicitly provided~\citep{psl2012, giannini2019relation, primal2019, van2022analyzing,giunchiglia2022deep}. 
They usually use an interpreter to soften the truth value,
and %borrow the operators in 
utilize fuzzy logic~\citep{%zadeh1965fuzzy, 
wierman2016introduction} to encode the logical connectives. 
However, such conversion is usually quite costly, 
as they necessitate the use of an interpreter, which should be additionally constructed by the most probable explanation~\citep{bach2017hinge}. 
Moreover, the precise logical meaning is lost due to the introduction of the interpreter and 
it is also unclear if such encoding bears important properties such as monotonicity. 
To address this issue, 
\citet{DL22019} abandon the interpreter, and use addition/multiplication to encode logical conjunction/disjunction; \citet{yang2022injecting} apply straight-through estimators to encode CNF into a loss function.
However, both of them still lack the monotonicity property. 
%\lizn{ 
\citet{semantic2018} propose a semantic loss to impose Boolean constraints on the output layer to ensure the monotonicity property. However, its encoding rule does not discriminate different satisfying assignments, causing the shortcut satisfaction problem.
%(e.g., it fails in the handwritten digit recognition task). 
%easily leading to intractability~\citep{DL22019}. 
\citet{hoernle2021multiplexnet} aim at training a neural network that fully satisfies logical constraints, and hence directly restrict the model’s output to the constraint.
Although they introduce a variable to choose which constraint should be satisfied, the variable is categorical and thus limited to mutually exclusive disjunction.
%}

%friendly to optimization~\cite{DL22019}. \citet{diligenti2017semantic} and \citet{DL22019} use fuzzy logic operators such as addition/multiplication and $\max$/$\min$. However, it is unclear if such encodings bear important properties such as monotonicity.

% Existing work usually encodes logical constraints as a differentiable function using operators in fuzzy logic~\cite{zadeh1965fuzzy,giannini2019relation,van2022analyzing}. \citet{psl2012} and \citet{mach2017hinge} convert the truth value of the logical constraint into a soft probability value. 

{\bf Neuron-symbolic learning.} 
Our work %falls into the broader area of 
is related to neuro-symbolic computation~\citep{garcez2019neural,marra2021statistical}. 
In this area, integrating learning into logic has been explored in several directions including neural theorem proving~\citep{rocktaschel2017end,minervini2020learning}, 
extending logic programs with neural predicates~\citep{manhaeve2018deepproblog,ijcai2020p243}, encoding algorithmic layers (e.g., satisfiability solver) into DNNs~\citep{wang2019satnet,chen2020understanding},
using neural models to approach executable logical programs~\citep{augment2019,badreddine2022logic,ahmed2022semantic},
as well as incorporating neural networks into logic reasoning~\citep{yang2017differentiable,evans2018learning,dong2019neural}.
Along this direction, there are other ways to integrate logical knowledge into learning, e.g., knowledge distillation~\citep{harnessing2016}, learning embeddings for logical rules~\citep{xie2019embedding}, treating rules as noisy labels~\citep{awasthi2020learning}, and abductive reasoning~\citep{dai2019bridging,zhou2019abductive}.

%which can be roughly divided into two categories,  {\em differentiable logic} and {\em logic as constraints}.

{\bf Multi-objective learning.} 
Training model with logical constraints is essentially a multi-objective learning task, % i.e., improve the performance of predictions and the satisfaction of logical constraints. 
where two typical solutions exist~\citep{marler2004survey}, i.e., the {\em $\epsilon$-constraint} method and the {\em weighted sum} method.
%(1) $\epsilon$-Constraint Method: 
The former rewrites objective functions into constraints, i.e., solving $\min_{x} f(x), \text{s.t.}, g(x) \leq \epsilon$ instead of the original problem $\min_x (f(x), g(x))$. 
E.g., \citet{donti2021dc3} directly solve the learning problem with (hard) logical constraints via constraint completion and correction. 
However, this method may not be sufficiently efficient for deep learning tasks considering the high computational cost of Hessian-vector computation and the ill-conditionedness of the problem. 
The latter method is relatively more popular, which minimizes a proxy objective, i.e., a weighted average $\min_{x} w_1 f(x) + w_2 g(x)$. 
%The local Pareto optima are guaranteed if the optimization algorithm converges. 
%Hence, the existing work mainly formulates a multi-objective learning framework by incorporating logical constraints as a penalty with the original training loss. 
Such method strongly depends on the weights of the two terms, and may be highly ineffective when the two losses conflict~\citep{kendall2018multi,sener2018multi}. 
%A primal-dual framework~\citep{primal2019} has been proposed to scale the two losses using dual variables, but the convergence of the algorithm to solve the corresponding minimax problem is not guaranteed. 

%~\cite{dai2019bridging,li2020closed,tsamoura2021neural}

%\subsection{Fuzzy Logic}
%\yy{put it into the first part?}

%~\cite{giannini2019relation}
%~\cite{van2022analyzing}

%\subsection{Multi-Objective Learning}

%~\cite{kendall2018multi,sener2018multi,lin2019pareto} 

%\yy{mention the existing multi-objective learning (Pareto optimal solutions?),  and explain that our solution is relatively better.}

%~\cite{augment2019} add logic into network structure

%~\cite{awasthi2020learning}
%\cite{chamon2020probably}
%~\cite{donti2021dc3}

\begin{table}[t]
\centering
\caption{The constraint satisfaction results (\%) of image classification on the CIFAR100 dataset. The proposed approach outperforms the competitors in all the three cases.}
%\resizebox{1\columnwidth}{!}{
\vspace{-1ex}
\begin{tabular}{crrrr}
  \toprule
   {Method} & {VGG16} & {ResNet50} & DenseNet100 \\
  \midrule
    Baseline & 63.0 & 48.5 & 39.7  \\
    PD$_1$ & 57.7 & 63.9 & 53.5  \\
    PD$_2$ & 57.5 & 69.2 & 54.9 \\
    DL2 & 31.2 & 78.4 & 62.2 \\    
  %  DL2 (tuned) &  38.9 & 55.1 &  52.1 \\   
    \midrule
    Ours & {\bf 81.1} & {\bf 87.2} & {\bf 88.2} \\
  \bottomrule
\end{tabular}
%}
\label{tab:cifar100-cons}
\end{table}

%%%%%%%%%%%%%%%%%%%%%%%%%%%%%%%%%%%%%%%%%%%%%%%%%%%%%%%%%%%%%%%%%%%%%%%%%%%%%%%%%%%%%%%%%%%%%%%%%%%

\section{Conclusion} 
\label{sec:conclusion}
In this paper, we have presented a new approach for better integrating logical constraints into deep neural networks. The proposed approach encodes logical constraints into a distributional loss that is compatible with the original training loss, guaranteeing monotonicity for logical entailment,  significantly improving the interpretability and robustness, and avoiding shortcut satisfaction of the logical constraints at large. 
% Under a variational framework, we have put forward new training algorithms based on gradient descent ascent with theoretically guaranteed convergence. 
The proposed approach has been shown to be able to improve both model generalizability and logical constraint satisfaction. % by extensive experiments. % from four case studies of handwritten digit recognition, handwritten formula recognition, shortest path prediction, and image classification.
A limitation of the work is that we set the target distribution of any logical formula as the Dirac distribution, but further investigation is needed to decide when such setting could be effective and whether an alternative could be better. 
Additionally, our approach relies on the quality of the manually inputted logical formulas, and complementing it with automatic logic induction from raw data is an interesting future direction. 
%%%%%%%%%%%%%%%%%%%%%%%%%%%%%%%%%%%%%%%%%%%%%%%%%%%%%%%%%%%%%%%%%%%%%%%%%%%%%%%%%%%%%%%%%%
\newpage 

\section*{Acknowledgment}
We are thankful to the anonymous reviewers for their helpful comments. 
This work is supported by the National Natural Science Foundation of China (Grants \#62025202, % ma
\#62172199% xu
). 
T. Chen is also partially supported by
Birkbeck BEI School Project (EFFECT) and an overseas grant of the State Key Laboratory of Novel Software Technology under Grant \#KFKT2022A03.
Yuan Yao (\texttt{y.yao@nju.edu.cn}) and Xiaoxing Ma (\texttt{xxm@nju.edu.cn}) are the corresponding authors. 

\bibliographystyle{iclr2023_conference}
\bibliography{bibfile}

% !TEX root = ./main.tex

\newpage

\onecolumn
\appendix
%%%%%%%%%%%%%%%%%%%%%%%%%%%%%%%%%%%%%%%%%%%%%%%%%%%%%%%%%%%%%%%%%%%%%%%%%%%%%%%%%%%%%%%%%%%%%%%%%%%%%%%%%%%

\section{Proof of Theorem~\ref{theo:translation}}\label{app:proof}

\subsection{Translation equivalence of logical conjunction}
The following proposition shows the equivalence of our translation and  the original expression of logical conjunction.  

\begin{proposition} \label{prop:conj}
Given $l:= \bigwedge_{i=1}^k v_i\leq c_i$ where $v_i =f_{\s{w},i}(\s{x}, \s{y})$ and $S_i := \max(v_i-c_i, 0)$ for $i=1,\dots,k$,
%, and the cost function $S_i(v)$ defined by Eq.~\eqref{eqn:atomic}, 
% The bi-level optimization~\eqref{eqn:bi_conj} and~\eqref{eqn:bi_disj} are convex w.r.t dual variable $\tau$.
% if the dual variables $\tau_i, i=1,\dots,k$, of the cost function $S_{\wedge}(\bm{\pi})$ converges to $\tau_i^*, i=1,\dots,k$, then
if $\{\tau_i^*\}_{i=1}^k$ are the optimal solution of the cost function $S_{\wedge}(l)$,  
\begin{equation*}
\sum_{i=1}^k \tau_i^* S(v_i)   = \max (S(v_1), \dots, S(v_k)).
\end{equation*}
Furthermore, $\hat{v}^* = f_{\s{w}^*}(\s{x}, \s{y}) \models l$
%\begin{equation*}
%\bm{\pi}^* \models (g_1(\bm{\pi}) \leq 0) \wedge \dots \wedge (g_k(\bm{\pi}) \leq 0),
%\end{equation*}
if and only if $\s{w}^*$ is the optimal solution of $\min_{\s{w}} S_{\wedge}(\s{w})$, %~\eqref{eqn:bi_conj}, 
with the optimal value
\begin{equation*}
S_{\wedge}(\s{w}^*) = \max (S_1(\s{w}^*), \dots, S_k(\s{w}^*)) = 0.
\end{equation*}
%for $\s{w}^*$. 
\end{proposition}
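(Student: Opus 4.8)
The plan is to split the proposition into its two assertions---the value identity for the optimal dual variables, and the satisfiability characterization---and dispatch each with an elementary optimization argument, using the proposition's own definitions $S_i = S(v_i) = \max(v_i - c_i, 0)$.

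For the first assertion, I would note that $S_{\wedge}(l)$ is by definition the maximum of the linear functional $\tau \mapsto \sum_{i=1}^k \tau_i S(v_i)$ over the probability simplex $\{\tau : \tau_i \ge 0,\ \sum_i \tau_i = 1\}$. This domain is compact and the objective is continuous (indeed linear), so the maximum is attained; moreover a linear functional over a bounded polytope attains its maximum at a vertex, and the vertices of the simplex are exactly the one-hot vectors. For any feasible $\tau$ one has the bound $\sum_i \tau_i S(v_i) \le \big(\sum_i \tau_i\big)\max_j S(v_j) = \max_j S(v_j)$, and concentrating all mass on an index $i_0$ that maximizes $S(v_{i_0})$ makes the bound tight. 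Hence the optimal value equals $\max_j S(v_j)$, i.e. $\sum_i \tau_i^* S(v_i) = \max(S(v_1), \dots, S(v_k))$, as claimed.

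For the second assertion I would first invoke the identity just proved to rewrite $S_{\wedge}(\s{w}) = \max_i S_i(\s{w})$, and then exploit that every $S_i = \max(v_i - c_i, 0) \ge 0$. Consequently $S_{\wedge}(\s{w}) \ge 0$ for all $\s{w}$, so $0$ is a global lower bound on the objective. For the forward direction, if $\hat{v}^* \models l$ then $v_i^* \le c_i$ for every $i$, whence $S_i(\s{w}^*) = 0$ for all $i$ and $S_{\wedge}(\s{w}^*) = \max_i S_i(\s{w}^*) = 0$; since this meets the lower bound, $\s{w}^*$ is a global minimizer attaining value $0$. For the converse, if $\s{w}^*$ minimizes $S_{\wedge}$ with $S_{\wedge}(\s{w}^*) = 0$, then $\max_i S_i(\s{w}^*) = 0$ forces each nonnegative term $S_i(\s{w}^*)$ to vanish, i.e. $v_i^* \le c_i$ for all $i$, which is precisely $\hat{v}^* \models l$.

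There is no deep obstacle here: the argument reduces to the observation that maximizing a linear function over the simplex returns its largest coefficient, combined with the fact that a maximum of nonnegative quantities vanishes if and only if each quantity does. The only points requiring care are justifying that the maximum over $\tau$ is actually attained (compactness of the simplex), so that the ``optimal $\tau^*$'' is well-defined, and noting that the equivalence ``$\max_i S_i = 0$ iff all $S_i = 0$'' relies critically on the nonnegativity built into the clamping $\max(\cdot, 0)$. The same two ingredients handle the disjunction case by replacing $\max$ with $\min$ over the simplex (smallest coefficient), so this proposition assembles directly into the max--min cost function $S_\alpha$ of Theorem~\ref{theo:translation}.
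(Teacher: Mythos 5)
Your proof is correct, but it takes a different route from the paper's in the first half. The paper does not treat the simplex formulation in \eqref{eqn:conj} as the starting definition; instead it derives it, by rewriting $\min_{\s{w}} \max(S_1(\s{w}),\dots,S_k(\s{w}))$ with a slack variable $t$ as $\min_{\s{w},t}\, t$ subject to $t \ge S_i(\s{w})$, forming the Lagrangian, and obtaining the $\tau_i$ as Lagrange multipliers; the identity $\sum_i \tau_i^* S_i = \max_i S_i$ is then extracted from the KKT conditions, specifically complementary slackness ($\tau_j^* = 0$ for every $j$ outside the argmax set). You instead take \eqref{eqn:conj} at face value and evaluate it directly: a linear functional on the compact simplex attains its maximum at a vertex, and the bound $\sum_i \tau_i S(v_i) \le \max_j S(v_j)$ is tight at the corresponding one-hot vector. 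Your argument is shorter, fully rigorous, and avoids a soft spot in the paper's proof --- the paper invokes the max--min inequality, which only gives weak duality (an inequality), and then asserts the equality $S_{\wedge}(\s{w}) = \max_{\tau} \sum_i \tau_i S_i(\s{w})$ without closing that gap; your vertex argument is exactly what closes it. What the paper's derivation buys in exchange is the provenance of the $\tau_i$: identifying them as Lagrange multipliers of the epigraph reformulation justifies calling them ``dual variables'' and underlies the interpretability discussion in Section~\ref{subsec:discussions}, and the KKT/complementary-slackness view is what the paper reuses verbatim for the disjunction case (Proposition~\ref{prop:disj}). The second halves of the two proofs --- satisfaction holds iff the minimum value $0$ is attained, via nonnegativity of each $S_i$ --- are essentially identical.
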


\begin{proof}
For the minimization of the cost function $S_{\wedge}(\s{w})$
\begin{equation} \label{eqn:opt_conj}
\min_{\s{w}} S_{\wedge}(\s{w}):= \max(S_1(\s{w}), \dots, S_k(\s{w})), 
\end{equation}
we introduce a slack variable $t$, and rewrite Eq.~\eqref{eqn:opt_conj} as
\begin{equation} \label{eqn:conj_slack}
    \min_{\s{w}, t}~ t, \quad \text{s.t.}, ~ t \ge S_i(\s{w}), ~i=1,\dots,k.
\end{equation}
The Lagrangian function of Eq.~\eqref{eqn:conj_slack} is 
\begin{equation*}
    L(\s{w}, t; \tau_1, \dots, \tau_k) = t + \sum_{i=1}^k \tau_i(S_i(\s{w}) - t), 
\end{equation*}
where $\tau_i \ge 0, i=1,\dots,k$. 
Let the gradient of $t$ vanish, we can obtain the dual problem:
\begin{equation*} \label{eqn:bi_conj}
\begin{aligned}
    \max_{\tau_1, \dots, \tau_k}\min_{\s{w}}~ 
    & \tau_1 S_1(\s{w}) + \dots + \tau_k S_k(\s{w}), \\
    \text{s.t.} \qquad & \tau_1 + \dots + \tau_k = 1, \\
    & 0 \leq \tau_i \leq 1, i=1,\dots,k. 
\end{aligned}
\end{equation*}
By using the max-min inequality, we have
%\begin{equation*}
%\begin{aligned}
%&\max_{\tau_1, \dots, \tau_k}\min_{\s{w}} \tau_1 S_1(\s{w}) + \dots + \tau_k S_k(\s{w}) \\
%& \qquad \qquad \leq \min_{\s{w}}\max_{\tau_1, \dots, \tau_k} \tau_1 S_1(\s{w}) + \dots + \tau_k S_k(\s{w}).
%\end{aligned}
%\end{equation*}
\begin{equation*}
\max_{\tau_1, \dots, \tau_k}\min_{\s{w}} \tau_1 S_1(\s{w}) + \dots + \tau_k S_k(\s{w}) 
 \leq \min_{\s{w}}\max_{\tau_1, \dots, \tau_k} \tau_1 S_1(\s{w}) + \dots + \tau_k S_k(\s{w}).
\end{equation*}
Therefore, the cost function $S_{\wedge}(\s{w})$ of the logical conjunction can be computed 
by introducing the dual variables $\tau_i, i=1,\dots,k$:
\begin{equation*}
S_{\wedge}(\s{w}) = \max_{\substack{\tau_1, \dots, \tau_k \in [0,1] \\ \tau_1+\dots+\tau_k = 1}} \tau_1 S_1(\s{w}) + \dots + \tau_k S_k(\s{w}).
\end{equation*}
The Karush Kuhn–Tucker (KKT) condition of Eq.~\eqref{eqn:conj_slack} is 
\begin{equation*}
\begin{aligned}
& \tau_1 \nabla S_1(\s{w}) + \dots + \tau_k \nabla S_k(\s{w}) = 0,  \\
& \tau_1 + \dots, \tau_k = 1, \\
& t \ge S_i(\s{w}), \tau_i \ge 0, \quad i=1,\dots,k, \\
& \tau_i (t - S_i(\s{w})) = 0, \quad i=1, \dots,k.
\end{aligned}
\end{equation*}
We denote the index set of the largest element in the set $\{S_i\}_{i=1}^k$ by $\mathcal{I}$.
Suppose dual variables $\{\tau_i\}_{i=1}^k$ converge to $\{\tau_i^*\}_{i=1}^k$, 
then $\tau_j^* = 0$ for any $j \notin \mathcal{I}$, 
and thus $\sum_{i \in \mathcal{I}} \tau_i^* = 1$. 
Since $S_i(\s{w}) = \max (S_1(\s{w}), \dots, S_k(\s{w}))$ for any $i \in \mathcal{I}$, 
we have 
\begin{equation*}
\begin{aligned}
S_{\wedge}(\s{w}) &= \tau_1^* S_1(\s{w}) + \dots + \tau_k^* S_k(\s{w}) \\
&= \sum_{i \in \mathcal{I}} \tau_i^* S_i(\s{w}) = \max (S_1(\s{w}), \dots, S_k(\s{w})).
\end{aligned}
\end{equation*}
Furthermore, if $\s{w}^*$ is the optimal solution and $S_{\wedge}(\s{w}^*) = 0$, we have
\begin{equation*}
S_1(\s{w}^*) = \dots = S_k(\s{w}^*) = 0, 
\end{equation*}
which implies that $\s{w}^*$ ensures the satisfaction of the constraint $\bigwedge_{i=1}^k v_i\leq c_i$. %$(g_1(\s{w}) \leq 0) \wedge \dots \wedge (g_k(\s{w}) \leq 0)$.

On the other hand, if $\s{v}^* = f_{\s{w}^*}(\s{x}, \s{y})$ entails the logical conjunction $\bigwedge_{i=1}^k v_i\leq c_i$, 
then we have $S_i(\s{w}^*) = 0$ for $i=1,\dots,k$. 
Therefore, 
%due to the non-negativity of the cost function, 
we have $\s{w}^*$ is the optimal solution of $S_{\wedge}(\s{w}^*)$ with $S_{\wedge}(\s{w}^*) = 0$. 
\end{proof}

%%%%%%%%%%%%%%%%%%%%%%%%%%%%%%%%%%%%%%%%%%%%%%%%%%%%%%%%%%%%%%%%%%%%%%%%%

\subsection{Translation equivalence of logical disjunction}
We also have the following proposition demonstrating the equivalence between the cost function $S_{\vee}(l)$ % optimization problem~\eqref{eqn:bi_disj} 
and the original expression of logical disjunction. %\tl{notations are inconsistent and to be adapted}
\begin{proposition} \label{prop:disj}
Given $l:= \bigvee_{i=1}^k v_i\leq c_i$, where $v_i =f_{\s{w},i}(\s{x}, \s{y})$ and $S_i := \max(v_i-c_i, 0)$ for $i=1,\dots,k$,
%, and the cost function $S_i(v)$ defined by Eq.~\eqref{eqn:atomic}, 
% The bi-level optimization~\eqref{eqn:bi_conj} and~\eqref{eqn:bi_disj} are convex w.r.t dual variable $\tau$.
% if the dual variables $\tau_i, i=1,\dots,k$, of the cost function $S_{\wedge}(\bm{\pi})$ converges to $\tau_i^*, i=1,\dots,k$, then
if $\{\tau_i^*\}_{i=1}^k$ are the optimal solution of the cost function $S_{\vee}(l)$, we have
\begin{equation*}
\sum_{i=1}^k \tau_i^* S(v_i)   = \min (S(v_1), \dots, S(v_k)).
\end{equation*}
%Given logical atoms $a_i, i=1,\dots,k$, and the cost function $S_i(v)$ defined by Eq.~\eqref{eqn:atomic}, 
% if the dual variables $\tau_i, i=1,\dots,k$, of the cost function $S_{\vee}(\bm{\pi})$ converges to $\tau_i^*, i=1,\dots,k$, then
%if $\{\tau_i^*\}_{i=1}^k$ is the optimal solution of the cost function $S_{\vee}(v)$, we have
%\begin{equation*}
%\tau_1^* S_1(v) + \dots + \tau_k^* S_k(v) = \min (S_1(v), \dots, S_k(v)).
%\end{equation*}
Furthermore, $\hat{v}^* = f_{\s{w}^*}(\s{x}, \s{y}) \models l$ %a_1 \vee \dots \vee a_k$
if and only if $\s{w}^*$ is the optimal solution of $\min_{\s{w}} S_{\vee}(\s{w})$, %~\eqref{eqn:bi_disj}, 
with the optimal value
\begin{equation*}
S_{\vee}(\s{w}^*) = \min (S_1(\s{w}^*), \dots, S_k(\s{w}^*)) = 0.
\end{equation*}
\end{proposition}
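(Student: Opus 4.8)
The plan is to follow the structure of the proof of Proposition~\ref{prop:conj}, but to exploit that the disjunction cost $S_\vee$ is \emph{defined} in~\eqref{eqn:disj} directly as a minimization of the affine objective $g(\tau) := \sum_{i=1}^k \tau_i S(v_i)$ over the probability simplex $\Delta := \{\tau : \tau_i \ge 0,\ \sum_{i=1}^k \tau_i = 1\}$, so that no Lagrangian dualization is needed. First I would establish the value formula $\sum_i \tau_i^* S(v_i) = \min_j S(v_j)$. Since $g$ is affine in $\tau$ on the polytope $\Delta$, its minimum is attained at a vertex, and the vertices are exactly the standard basis vectors $e_1,\dots,e_k$ with $g(e_j) = S(v_j)$; hence $\min_{\tau \in \Delta} g(\tau) = \min_j S(v_j)$. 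Equivalently, and without appealing to vertex optimality, I would prove both inequalities by hand: for any $\tau \in \Delta$ we have $\sum_i \tau_i S(v_i) \ge \big(\sum_i \tau_i\big)\min_j S(v_j) = \min_j S(v_j)$, while placing all mass on an index in $\mathcal{I} := \argmin_j S(v_j)$ attains the bound. Any optimal $\{\tau_i^*\}$ therefore gives $\sum_i \tau_i^* S(v_i) = \min_j S(v_j)$ and is supported on $\mathcal{I}$, i.e.\ $\tau_j^* = 0$ for $j \notin \mathcal{I}$.

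For the satisfaction equivalence I would use throughout that each $S_i = \max(v_i - c_i, 0) \ge 0$, so $S_\vee(\s{w}) = \min_j S_j(\s{w}) \ge 0$ for every $\s{w}$; this identifies $0$ as the global minimum of $\min_{\s{w}} S_\vee(\s{w})$ whenever it is attainable. For the forward direction, if $\hat{v}^* = f_{\s{w}^*}(\s{x},\s{y}) \models l$ then at least one disjunct holds, so $v_i^* \le c_i$ and $S_i(\s{w}^*) = 0$ for some $i$; together with $\min_j S_j(\s{w}^*) \ge 0$ this forces $S_\vee(\s{w}^*) = 0$, and the nonnegativity of $S_\vee$ makes $\s{w}^*$ optimal. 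Conversely, if $\s{w}^*$ is optimal with $S_\vee(\s{w}^*) = 0$, then $\min_j S_j(\s{w}^*) = 0$ forces $S_i(\s{w}^*) = 0$ for some $i$, i.e.\ $v_i^* \le c_i$, so the disjunction, and hence $l$, is satisfied.

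I do not expect a genuine obstacle: in contrast to the conjunction case, the only points needing care are that the simplex minimum is attained at a vertex rather than pinned down by a saddle/KKT condition, and that ties in $\min_j S(v_j)$ make $\tau^*$ nonunique --- though the optimal value, and thus $S_\vee$, is unaffected. I would also note explicitly that the nonsmoothness of $S_i = \max(v_i - c_i, 0)$ is irrelevant here, since the inner minimization over $\tau$ treats the quantities $S(v_i)$ as fixed nonnegative constants.
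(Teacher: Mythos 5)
Your proof is correct, but it takes a genuinely more elementary route than the paper's. The paper mirrors its conjunction argument: it starts from the natural encoding $\min_{\s{w}}\min_i S_i(\s{w})$, introduces a slack variable $t$ with constraints $t \le S_i(\s{w})$, passes to the Lagrangian dual to \emph{derive} the simplex formulation of \eqref{eqn:disj}, and then invokes KKT conditions (by analogy with Proposition~\ref{prop:conj}) to conclude $\sum_i \tau_i^* S_i = \min_i S_i$. You instead take \eqref{eqn:disj} as the definition and observe that the inner problem is a linear program over the probability simplex, so its value is attained at a vertex and equals $\min_j S(v_j)$; your two-inequality argument ($\sum_i \tau_i S(v_i) \ge \min_j S(v_j)$ for all feasible $\tau$, with equality at any $e_j$, $j \in \argmin$) makes this fully self-contained, with no dualization or KKT machinery. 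What the paper's route buys is the justification that the simplex form \emph{is} the dual of the natural min-encoding, keeping conjunction and disjunction structurally parallel; what your route buys is brevity and rigor at the step the paper treats loosely (the appeal to ``KKT conditions'' and ``convergence of dual variables'' is unnecessary here, since for disjunction the minimization over $\tau$ points in the same direction as the encoding, so the identity is exact and elementary). You also handle the nonuniqueness of $\tau^*$ under ties explicitly, which the paper glosses over. Your treatment of the satisfaction equivalence (both directions via nonnegativity of the $S_i$ and existence of a vanishing $S_i$) coincides with the paper's.
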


\begin{proof}
For the minimization of the cost function $S_{\vee}(\s{w})$
\begin{equation} \label{eqn:opt_disj}
\min_{\s{w}} S_{\vee}(\s{w}) := \min(S_1(\s{w}), \dots, S_k(\s{w})),
\end{equation}
we introduce a slack variable $t$, and rewrite Eq.~\eqref{eqn:opt_disj} as
\begin{equation} \label{eqn:disj_slack}
\min_{\s{w}} \max_{t}~t, \quad \text{s.t.},~ t \leq S_i(\s{w}), ~i=1,\dots,k.
\end{equation}
The corresponding dual problem is 
\begin{equation*}
\begin{aligned}
    \min_{\tau_1, \dots, \tau_k}\min_{\s{w}}~ 
    & \tau_1 S_1(\s{w}) + \dots + \tau_k S_k(\s{w}), \\
    \text{s.t.} \qquad & \tau_1 + \dots + \tau_k = 1, \\
    & 0 \leq \tau_i \leq 1, i=1,\dots,k. 
\end{aligned}
\end{equation*}
Therefore, the cost function $S_{\vee}(\s{w})$ of the logical disjunction can be computed 
by introducing the dual variables $\tau_i, i=1,\dots,k$:
\begin{equation*}
S_{\vee}(\s{w}) = \min_{\substack{\tau_1, \dots, \tau_k \in [0,1] \\ \tau_1+\dots+\tau_k = 1}} \tau_1 S_1(\s{w}) + \dots + \tau_k S_k(\s{w}).
\end{equation*}
Similar to the proof of Proposition~\ref{prop:conj}, by using the KKT condition of Eq.~\eqref{eqn:disj_slack}, 
and supposing dual variables $\{\tau_i\}_{i=1}^k$ converge to $\{\tau_i^*\}_{i=1}^k$, 
we can obtain that
\begin{equation*}
\begin{aligned}
S_{\vee}(\s{w}) &= \tau_1^* S_1(\s{w}) + \dots + \tau_k^* S_k(\s{w}) \\
&= \sum_{i \in \mathcal{I}} \tau_i^* S_i(\s{w}) = \min (S_1(\s{w}), \dots, S_k(\s{w})).
\end{aligned}
\end{equation*}
If $\s{w}^*$ is the optimal solution and $S_{\vee}(\s{w}^*) = 0$, there exists 
$S_i(\s{w}^*) = 0$, 
which implies that $\s{w}^*$ ensures the satisfaction of the constraint $\bigvee_{i=1}^k v_i\leq c_i$. %satisfies the constraint $(g_1(\s{w}) \leq 0) \vee \dots \vee (g_k(\s{w}) \leq 0)$.

On the other hand, if $\s{w}^*$ entails the logical disjunction $\bigvee_{i=1}^k v_i\leq c_i$, % $(g_1(\s{w}) \leq 0) \wedge \dots \wedge (g_k(\s{w}) \leq 0)$, 
then there exists $S_i(\s{w}^*) = 0$. 
Therefore, %the cost function $S_{\vee}(\s{w}^*) = 0$, 
%due to the non-negativity of the cost function, 
we have $\s{w}^*$ is the optimal solution of $S_{\vee}(\s{w}^*)$ with $S_{\vee}(\s{w}^*) = 0$. 
%we can obtain that $\s{w}^*$ is the optimal solution of $S_{\wedge}(\s{w}^*)$. 
\end{proof}

%%%%%%%%%%%%%%%%%%%%%%%%%%%%%%%%%%%%%%%%%%%%%%%%%%%%%%%%%%%%%%%%%%%%%%%%%%%%%%%%%%%%%%%%%%%%%%%%%%%%%%%%%%%

\subsection{Proof of Theorem~\ref{theo:translation}}
The proof can be directly derived from Proposition~\ref{prop:conj} and Proposition~\ref{prop:disj}.

%%%%%%%%%%%%%%%%%%%%%%%%%%%%%%%%%%%%%%%%%%%%%%%%%%%%%%%%%%%%%%%%%%%%%%%%%%%%%%%%%%%%%%%%%%%%%%%%%%%%%%%%%%%

\section{Proof of Theorem~\ref{theo:measurable}}\label{App:monotonicity}
% We first give the proof of Theorem~\ref{theo:measurable}.
\begin{proof}
We first denote all variables $v_1, \dots, v_k$ involved in the logical constraint by a vector $\bm{v}$, 
and in this sense the corresponding constants $c_i, i=1,\dots, k,$ in the logical constraint should be extended to $\mathbb{R} \cup \{+\infty\}$ such that 
the constraints $\alpha$ and $\beta$ can be written as 
\begin{equation*}
\alpha := \wedge_{i \in \mathcal{I}} \vee_{j \in \mathcal{J}} (\bm{v} \leq \bm{c}_{ij}^{(\alpha)}), \quad\beta := \wedge_{i \in \mathcal{I}} \vee_{j \in \mathcal{J}} (\bm{v} \leq \bm{c}_{ij}^{(\beta)}).
\end{equation*}

For the sufficient condition, suppose that $\bm{v}^* \models \alpha$. Then, we have $S_{\alpha}(\bm{v}^*) = 0$
by using Theorem~\ref{theo:translation}.
Since $S_{\alpha}(\bm{v}) \ge S_{\beta}(\bm{v})$ holds for any $\bm{v} \in \mathcal{V}$, 
and the cost function is non-negative, 
we can obtain that $S_{\beta}(\bm{v}^*) = 0$, which implies that $v^* \models \beta$. 

For the necessary condition of Theorem~\ref{theo:measurable}, we introduce the following proposition.
\begin{proposition} \label{prop:dist}
Given the underlying space $\mathcal{V}$, for any point $\bm{v} \in \mathcal{V}$ and subset $C \subseteq \mathcal{V}$, we define the distance of $\bm{v}$ from $C$ as
\begin{equation*}
\mathrm{dist}(\bm{v}, C) = \inf\{\mathrm{dist}(\bm{v}, \bm{u}) \mid \bm{u} \in C\}.
\end{equation*}
Moreover, 
let $A$ and $B$ be two closed subsets of $\mathcal{V}$, if $A$ is not an empty set, then the sufficient and necessary condition for $A \subseteq B$ is that
\begin{equation*}
\mathrm{dist}(\bm{v}, A) \ge \mathrm{dist}(\bm{v}, B), \quad \forall \bm{v} \in \mathcal{V}.
\end{equation*}
\end{proposition}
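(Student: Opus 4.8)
The plan is to treat Proposition~\ref{prop:dist} as a standard fact about the distance-to-set function on a metric space, isolating the two ingredients that drive it: monotonicity of $\mathrm{dist}(\cdot, C)$ under set inclusion, and the characterization that a point lies in a \emph{closed} set exactly when its distance to the set vanishes. Once these are in hand, both implications fall out almost immediately, and it is worth flagging at the outset which hypotheses each ingredient actually consumes.

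For the easy (``only if'') direction I would argue directly from the definition of the infimum. If $A \subseteq B$, then for any fixed $\bm{v}$ the set $\{\mathrm{dist}(\bm{v}, \bm{u}) : \bm{u} \in A\}$ is contained in $\{\mathrm{dist}(\bm{v}, \bm{u}) : \bm{u} \in B\}$, and the infimum of a subset can only be larger, so $\mathrm{dist}(\bm{v}, B) \le \mathrm{dist}(\bm{v}, A)$ for every $\bm{v}$. This direction uses neither closedness nor nonemptiness.

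For the (``if'') direction I would proceed pointwise. Fix any $\bm{v} \in A$, where nonemptiness of $A$ merely guarantees there is something to check. Since $\bm{v}$ itself belongs to $A$, we have $\mathrm{dist}(\bm{v}, A) = 0$; the hypothesis then forces $\mathrm{dist}(\bm{v}, B) \le \mathrm{dist}(\bm{v}, A) = 0$, and as distances are nonnegative, $\mathrm{dist}(\bm{v}, B) = 0$. At this point I invoke the key lemma: for a closed set $B$, $\mathrm{dist}(\bm{v}, B) = 0$ implies $\bm{v} \in B$. This lemma is proved by extracting a sequence $\bm{u}_n \in B$ with $\mathrm{dist}(\bm{v}, \bm{u}_n) \to 0$, so that $\bm{u}_n \to \bm{v}$, and then using that $B$ is closed to conclude $\bm{v} \in B$. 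Since $\bm{v} \in A$ was arbitrary, $A \subseteq B$.

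The only genuinely nontrivial step, and the place where the closedness hypothesis is indispensable, is this last lemma. The hard part will be nothing computational but rather the conceptual care of noting that closedness of $B$ cannot be dropped: taking $B$ an open ball and $A$ a singleton on its boundary, the triangle inequality makes the distance inequality hold everywhere while $A \not\subseteq B$. By contrast, closedness of $A$ is never used, and nonemptiness of $A$ only serves to sidestep the vacuous $\inf\emptyset = +\infty$ convention. Finally, I would record how the proposition discharges the necessity half of Theorem~\ref{theo:measurable}: setting $A = \{\bm{v} : \bm{v} \models \alpha\}$ and $B = \{\bm{v} : \bm{v} \models \beta\}$, both closed as finite intersections of unions of the closed half-spaces $\{\bm{v} \le \bm{c}_{ij}\}$, and identifying $S_{\alpha}(\bm{v}) = \mathrm{dist}(\bm{v}, A)$ and $S_{\beta}(\bm{v}) = \mathrm{dist}(\bm{v}, B)$ for the relevant ($\ell^\infty$-type) distance via the closed form from Theorem~\ref{theo:translation}, the chain ``$S_{\alpha} \ge S_{\beta}$ everywhere $\Leftrightarrow A \subseteq B \Leftrightarrow \alpha \models \beta$'' follows.
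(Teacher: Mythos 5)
Your proof is correct. There is nothing in the paper to compare it against at the level of this statement: Proposition~\ref{prop:dist} is asserted inside the proof of Theorem~\ref{theo:measurable} and used as a standard fact about distance-to-set functions, with no argument given, so your write-up supplies exactly the justification the paper leaves implicit. Your two ingredients are the right ones: set inclusion reverses infima (giving the ``only if'' direction, with no topological hypotheses), and for closed $B$ the condition $\mathrm{dist}(\bm{v},B)=0$ forces $\bm{v}\in B$ via a sequence $\bm{u}_n\in B$ converging to $\bm{v}$ (giving the ``if'' direction). Your bookkeeping of hypotheses is also accurate --- closedness of $A$ is never used, nonemptiness of $A$ (or the $\inf\emptyset=+\infty$ convention) only rules out vacuity, and the open-ball/boundary-point example shows closedness of $B$ cannot be dropped. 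One caution about your closing remark: the identification $S_{\alpha}(\bm{v})=\mathrm{dist}(\bm{v},A)$ for the CNF set $A=\cap_i\cup_j\{\bm{v}\le\bm{c}_{ij}\}$ does not follow from Theorem~\ref{theo:translation} alone, since the Chebyshev distance to an intersection is in general only bounded below by the maximum of the individual distances; this is precisely the step that the paper's proof of Theorem~\ref{theo:measurable} itself only sketches (deferring to a cited result), and it belongs to that proof rather than to the proposition you were asked to establish.
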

Therefore, let $A$ and $B$ be two sets implied by $\alpha$ and $\beta$, i.e., 
\begin{equation*}
    A = \cap_{i \in \mathcal{I}}\cup_{j \in \mathcal{J}}\{\bm{v} \mid \bm{v} \leq \bm{c}_{ij}^{(\alpha)}\}, \quad B = \cap_{i \in \mathcal{I}}\cup_{j \in \mathcal{J}}\{\bm{v} \mid \bm{v} \leq \bm{c}_{ij}^{(\beta)}\},
\end{equation*}
by using Proposition~\ref{prop:dist}, we can obtain that
$\alpha \models \beta$ if and only if $\mathrm{dist}(\bm{v}, A) \ge \mathrm{dist}(\bm{v}, B)$ for any $v \in \mathcal{V}$. 

Given atomic formulas $a_1 := \bm{v} \leq \bm{c}_1$ and $a_2 := \bm{v} \leq \bm{c}_2$, the corresponding cost functions are
$S_1(\bm{v}) = \max(\bm{v} - \bm{c}_1 ,0)$ and $S_2(\bm{v}) = \max(\bm{v} - c_2, 0)$, respectively. 
Then, the cost functions % \tau_1, \dots, \tau_k \in [0,1] 
are essentially the (Chebyshev) distances of $\bm{v}$ to the constraints $\{\bm{v} \mid \bm{v} \leq \bm{c}_1\}$ and $\{\bm{v} \mid \bm{v} \leq c_2\}$, respectively.
(It should be noted that $\bm{v} \leq \bm{c}$ can be decomposed into the conjunction of $v_i \leq c_i, i=1,\dots,k$). 
Hence, through Proposition~\ref{prop:dist}, 
the sufficient and necessary condition of $a_1 \models a_2$ is that $S_{a_1}(\bm{v}) \ge S_{a_2}(\bm{v})$ for any $\bm{v} \in \mathcal{V}$. 

Now, the rest is to prove that the cost functions of logical conjunction and disjunction are still the distance of $\bm{v}$ from the corresponding constraint,
and it is not difficult to derive the results through a few calculations of linear algebra. 
One can also obtain a direct result by using \citet[Theorem 1]{martinon2004distance}.
\end{proof}

%%%%%%%%%%%%%%%%%%%%%%%%%%%%%%%%%%%%%%%%%%%%%%%%%%%%%%%%%%%%%%%%%%%%%%%%%%%%%

\section{Concrete examples in robustness improvement} 
\label{sec:examples}
% The dual variables also alleviate the sensitivity to the initial point.
% % The dual variables $\tau_i, i=1,\dots,k,$ not only adapt the constraint satisfaction bias, 
% % but also alleviate the sensitivity of the initial point when iteratively optimizing the cost function. 
% %This is also mainly achieved by the introduced dual variables.
% Take the logical disjunction $(v^2 \leq -1) \vee (3 v \ge 2)$ as an example,  
(1) Let us consider a logical constraint $(v^2 \leq -1) \vee (3 v \ge 2)$, 
the corresponding cost function based on the $\min$ and $\max$ operators is $S(v) = \min(v^2+1, \max(2-3v, 0))$. 
If we directly minimize $S(v)$ and set the initial point by $v_0 = 0$, 
we will have $S(v_0) =v_0^2+1 = 1$ and $\nabla_{\bm{\pi}}S(v_0) = 2 v_0 = 0$.
Hence, 
$v_0$ has already been an optimal solution of $\min S(v)$.  
In this case, the conventional optimization technique is not effectual, 
and cannot find a feasible solution that entails the disjunction even though it exists. 
Nevertheless, with the dual variable $\tau$, the minimization problem is $\min_{v, \tau} \tau(v^2+1) + (1-\tau)(\max(2-v, 0))$. 
Given initial points $v_0 = 0$  and $\tau_0 = 0.5$, one can easily obtain a feasible solution $v^* = 1.5$ via the coordinate descent algorithm. 

% Replacing the logical conjunction and disjunction by $\max$ and $\min$ operators is a classic strategy in fuzzy logic~\citep{zadeh1965fuzzy, elkan1994paradoxical, hajek2013metamathematics}. Another commonly adopted strategy is to use addition and multiplication in DL2~\citep{DL22019}. 
(2) For translation strategy used in DL2, 
the cost functions of conjunction $a_1 \wedge a_2$ and disjunction $a_1 \vee a_2$
are defined by 
$S_{\wedge}(v) = S(v_1) + S(v_2)$ 
and $S_{\vee}(v) = S(v_1) \cdot S(v_2)$, respectively. 
The conjunction translation is essentially a special case of our encoding strategy (i.e., $\tau_1$ and $\tau_2$ are fixed to $0.5$). 
For the disjunction, the multiplication may ruin the magnitude of the cost function, making it no longer a reasonable measure of constraint satisfaction.
Moreover, this translation method also brings more difficulties to numerical calculations.
For example, considering the disjunction constraint $(v=1) \vee (v=2) \vee (v=3)$, 
$S_{\vee}(v) = S(v_1) \cdot S(v_2) \cdot S(v_3)$ induces 
two more bad stationary points (i.e., $v=1.5$ and $v=2.5$) compared with $\min_{v, \tau} S_{\vee}(v) = \tau_1 S(v_1) + \tau_2 S(v_2) + \tau_3 S(v_3)$.

%%%%%%%%%%%%%%%%%%%%%%%%%%%%%%%%%%%%%%%%%%%%%%%%%%%%%%%%%%%%%%%%%%%%%%%%%%%%%%%%%%%%%%%%%%%%%%%%%%%%%%%%%%%
%%%%%%%%%%%%%%%%%%%%%%%%%%%%%%%%%%%%%%%%%%%%%%%%%%%%%%%%%%%%%%%%%%%%%%%%%%%%%%%%%%%%%%%%%%%%%%%%%%%%%%%%%%%

\section{The computation of KL divergence} 
\label{appendix:KL}
For input-output pair $(\s{x}, \s{y})$, with the logical variable $\s{z}$, the KL divergence is
\begin{equation*}
\begin{aligned}
& \mathrm{KL}(p(\s{y}, \s{z} \mid \s{x}) \mid p_{\s{w}}(\s{y}, \s{z} \mid \s{x})) \\
& \quad = \int_{\s{y}, \s{z}} p(\s{y}, \s{z} \mid \s{x}) \log \frac{p(\s{y}, \s{z} \mid \s{x})}{p_{\s{w}}(\s{y}, \s{z} \mid \s{x})} \\
& \quad  = \int_{\s{y}, \s{z}} p(\s{y}, \s{z} \mid \s{x}) \left(\log \frac{p(\s{y} \mid \s{x})}{p_{\s{w}}(\s{y} \mid \s{x})}  + \log \frac{p(\s{z} \mid \s{x}, \s{y})}{p_{\s{w}}(\s{z} \mid \s{x}, \s{y})} \right) 
\end{aligned}
\end{equation*}
For the first term in RHS, 
\begin{equation*}
\int_{\s{y}, \s{z}} p(\s{y}, \s{z} \mid \s{x}) \log \frac{p(\s{y} \mid \s{x})}{p_{\s{w}}(\s{y} \mid \s{x})}
= \mathrm{KL}(p(\s{y} \mid \s{x}) \| p_{\s{w}}(\s{y} \mid \s{x})),
\end{equation*}
and for the second term in RHS, 
%\begin{equation*}
%\begin{aligned}
%& \int_{\s{y}, \s{z}} p(\s{y}, \s{z} \mid \s{x}) \log \frac{p(\s{z} \mid \s{x}, \s{y})}{p_{\s{w}}(\s{z} \mid \s{x}, \s{y})} \\
%& \qquad = \int_{\s{y}, \s{z}} p(\s{y} \mid \s{x}) p(\s{z} \mid \s{x}, \s{y})  \log\frac{ p(\s{z} \mid \s{x}, \s{y})}{p_{\s{w}}(\s{z} \mid \s{x}, \s{y})} \\
%& \qquad = \mathbb{E}_{\s{y} \mid \s{x}} \left[\int_{\s{z}} p(\s{z} \mid \s{x}, \s{y})  \log\frac{ p(\s{z} \mid \s{x}, \s{y})}{p_{\s{w}}(\s{z} \mid \s{x}, \s{y})} \right] \\
%& \qquad =  \mathbb{E}_{\s{y} \mid \s{x}}[\mathrm{KL}(p(\s{z} \mid \s{x}, \s{y}) \| p_{\s{w}}(\s{z} \mid \s{x}, \s{y}))]. 
%\end{aligned}
%\end{equation*}
\begin{equation*}
\begin{aligned}
\int_{\s{y}, \s{z}} p(\s{y}, \s{z} \mid \s{x}) \log \frac{p(\s{z} \mid \s{x}, \s{y})}{p_{\s{w}}(\s{z} \mid \s{x}, \s{y})} &= \int_{\s{y}, \s{z}} p(\s{y} \mid \s{x}) p(\s{z} \mid \s{x}, \s{y})  \log\frac{ p(\s{z} \mid \s{x}, \s{y})}{p_{\s{w}}(\s{z} \mid \s{x}, \s{y})} \\
& = \mathbb{E}_{\s{y} \mid \s{x}} \left[\int_{\s{z}} p(\s{z} \mid \s{x}, \s{y})  \log\frac{ p(\s{z} \mid \s{x}, \s{y})}{p_{\s{w}}(\s{z} \mid \s{x}, \s{y})} \right] \\
& =  \mathbb{E}_{\s{y} \mid \s{x}}[\mathrm{KL}(p(\s{z} \mid \s{x}, \s{y}) \| p_{\s{w}}(\s{z} \mid \s{x}, \s{y}))]. 
\end{aligned}
\end{equation*}
It follows that 
%\begin{equation*}
%\begin{aligned}
%& \mathrm{KL}(p(\s{y}, \s{z} \mid \s{x}) \| p_{\s{w}}(\s{y}, \s{z} \mid \s{x})) = \mathrm{KL}(p(\s{y} \mid \s{x}) \| p_{\s{w}}(\s{y} \mid \s{x})) \\
%& \qquad + \mathbb{E}_{\s{y} \mid \s{x}}[\mathrm{KL}(p(\s{z} \mid \s{x}, \s{y}) \| p_{\s{w}}(\s{z} \mid \s{x}, \s{y}))]. 
%\end{aligned}
%\end{equation*}
\begin{equation*}
\mathrm{KL}(p(\s{y}, \s{z} \mid \s{x}) \| p_{\s{w}}(\s{y}, \s{z} \mid \s{x})) = \mathrm{KL}(p(\s{y} \mid \s{x}) \| p_{\s{w}}(\s{y} \mid \s{x})) + \mathbb{E}_{\s{y} \mid \s{x}}[\mathrm{KL}(p(\s{z} \mid \s{x}, \s{y}) \| p_{\s{w}}(\s{z} \mid \s{x}, \s{y}))]. 
\end{equation*}

%%%%%%%%%%%%%%%%%%%%%%%%%%%%%%%%%%%%%%%%%%%%%%%%%%%%%%%%%%%%%%%%%%%%%%%%%%%%%%%%%%%%%%%%%%%%%%%%%%%%%%%%%%%
%%%%%%%%%%%%%%%%%%%%%%%%%%%%%%%%%%%%%%%%%%%%%%%%%%%%%%%%%%%%%%%%%%%%%%%%%%%%%%%%%%%%%%%%%%%%%%%%%%%%%%%%%%%

\section{KL divergence of truncated Gaussians}
Given two normal distributions truncated on $[0, +\infty)$ with means $\mu_1$ and $\mu_2$ and variances $\sigma_1^2$ and $\sigma_2^2$, the KL divergence can be computed as~\citep{choudrey2002variational}[Appendix A.5] 
%\begin{equation*}
%\begin{aligned}
%& \mathrm{KL}(\mathcal{TN}_1 \| \mathcal{TN}_2)  = \frac{1}{2}\left[\left(\frac{\sigma_1^2}{\sigma_2^2}-1\right)-\log \frac{\sigma_1^2}{\sigma_2^2}+\frac{(\mu_1 - \mu_2)^{2}}{\sigma_2^2}\right] \\
%    & + [(\frac{1}{\sigma_1^2} + \frac{1}{\sigma_2^2})\mu_1 - \frac{2\mu_2}{\sigma_2^2}] \frac{\sigma_1}{\sqrt{2\pi}} \frac{1}{\exp(\frac{\mu_1^2}{2\sigma_1^2})(1-\mathrm{erf} (-\frac{\mu_1}{\sqrt{2}\sigma_1}))} \\
%    & + \log \left(\frac{1-\mathrm{erf} (-\frac{\mu_2}{\sqrt{2}\sigma_2})}{1-\mathrm{erf} (-\frac{\mu_1}{\sqrt{2}\sigma_1})} \right),
%\end{aligned}
%\end{equation*}
\begin{equation*}
\begin{aligned}
\mathrm{KL}(\mathcal{TN}_1 \| \mathcal{TN}_2)  & = \frac{1}{2}\left[\left(\frac{\sigma_1^2}{\sigma_2^2}-1\right)-\log \frac{\sigma_1^2}{\sigma_2^2}+\frac{(\mu_1 - \mu_2)^{2}}{\sigma_2^2}\right] \\
    & \qquad + [(\frac{1}{\sigma_1^2} + \frac{1}{\sigma_2^2})\mu_1 - \frac{2\mu_2}{\sigma_2^2}] \frac{\sigma_1}{\sqrt{2\pi}} \frac{1}{\exp(\frac{\mu_1^2}{2\sigma_1^2})(1-\mathrm{erf} (-\frac{\mu_1}{\sqrt{2}\sigma_1}))} \\
    & \qquad + \log \left(\frac{1-\mathrm{erf} (-\frac{\mu_2}{\sqrt{2}\sigma_2})}{1-\mathrm{erf} (-\frac{\mu_1}{\sqrt{2}\sigma_1})} \right),
\end{aligned}
\end{equation*}
where $\mathrm{erf}(\cdot)$ is the Gauss error function. 

Let $\mu_1 = 0$ and $\sigma_1$ limit to zero. 
We can obtain that 
%\begin{equation*}
%\begin{aligned}
%& \lim_{\sigma_1 \to 0}\mathrm{KL}(\mathcal{TN}_1(0, \sigma_1) \| \mathcal{TN}_2(\mu_2, \sigma_2))  \\
%& \quad  = -\log{\sigma_2}+\frac{\mu_2^{2}}{2\sigma_2^2} + \log (1-\mathrm{erf} (-\frac{\mu_2}{\sqrt{2}\sigma_2})). 
%\end{aligned}
%\end{equation*}
\begin{equation*}
\lim_{\sigma_1 \to 0}\mathrm{KL}(\mathcal{TN}_1(0, \sigma_1) \| \mathcal{TN}_2(\mu_2, \sigma_2))    = -\log{\sigma_2}+\frac{\mu_2^{2}}{2\sigma_2^2} + \log (1-\mathrm{erf} (-\frac{\mu_2}{\sqrt{2}\sigma_2})). 
\end{equation*}

%%%%%%%%%%%%%%%%%%%%%%%%%%%%%%%%%%%%%%%%%%%%%%%%%%%%%%%%%%%%%%%%%%%%%%%%%%%%%%%%%%%%%%%%%%%%%%%%%%%%%%%%%%%
%%%%%%%%%%%%%%%%%%%%%%%%%%%%%%%%%%%%%%%%%%%%%%%%%%%%%%%%%%%%%%%%%%%%%%%%%%%%%%%%%%%%%%%%%%%%%%%%%%%%%%%%%%%
\section{The algorithm of logical training}
\label{sec:algo}

\begin{algorithm}[h!]
   \caption{Logical Training Procedure} \label{alg:logical_training}
\begin{algorithmic}
   %\STATE {\bfseries Input:} Batch size $N$; stepsizes $(\eta_{\s{w}}, \eta_{\wedge}, \eta_{\vee})$.
   \STATE  {\bfseries Initialize:} $\s{w}^0$ randomly; $\bm{\tau}_{\wedge}^0$ and $\bm{\tau}_{\vee}^0$ uniformly; $\bm{\delta}^0=1$.
   \FOR{$t=0,1,\dots,$}
   \STATE Draw a collection of i.i.d. data samples $\{(\s{x}_i, \s{y}_i)\}_{i=1}^N$.
   % \STATE Compute the loss $L$ in~\eqref{eqn:final_opt}. 
   \STATE $\s{w}^{t+1} \leftarrow \s{w}^t - \eta_{\s{w}} \cdot \nabla_{\s{w}} L(\s{w}, \bm{\delta}; \bm{\tau}_{\wedge}, \bm{\tau}_{\vee})$.
   \STATE $\bm{\delta}^{t+1} \leftarrow \sqrt{(\sum_{i=1}^N \bm{\mu}_i^t) / N}$.
   \STATE $\bm{\tau}_{\wedge}^{t+1} \leftarrow \bm{\tau}_{\wedge}^{t} + \eta_{\wedge} \cdot \nabla_{\bm{\tau}_{\wedge}} L(\s{w}, \bm{\delta}; \bm{\tau}_{\wedge}, \bm{\tau}_{\vee})$. 
   \STATE $\bm{\tau}_{\vee}^{t+1} \leftarrow \bm{\tau}_{\vee}^{t} - \eta_{\vee} \cdot \nabla_{\bm{\tau}_{\vee}} L(\s{w}, \bm{\delta}; \bm{\tau}_{\wedge}, \bm{\tau}_{\vee})$.
   \ENDFOR
\end{algorithmic}
\end{algorithm}

In practice, we set a lower bound (0.01) for the variance $\s{\delta}^2$ considering the numerical stability. 

\section{Optimality of Algorithm~\ref{alg:logical_training}}
\label{sec:optimality}

\subsection{Convergence of dual variables}
We first discuss the optimality of dual variables $(\bm{\tau}_{\wedge}^*, \bm{\tau}_{\vee}^*)$.
Since $\bm{\mu}$ is convex w.r.t. to $\bm{\tau}_{\wedge}$ and $\bm{\tau}_{\vee}$, 
and $L(\s{w}, \bm{\delta}; \bm{\tau}_{\wedge}, \bm{\tau}_{\vee})$ is strictly increasing and convex w.r.t. $\bm{\mu}_i$ on $[0, +\infty)$, 
we can derive that $L(\s{w}, \bm{\delta}; \bm{\tau}_{\wedge}, \bm{\tau}_{\vee})$ is convex w.r.t. $\bm{\tau}_{\wedge}$ and $\bm{\tau}_{\vee}$, via the convexity of composite functions~\citep[Sec. 3.2]{boyd2004convex}. 
Hence, $\min_{\bm{\tau}_{\vee}}\max_{\bm{\tau}_{\wedge}} L(\s{w}, \bm{\delta}; \bm{\tau}_{\wedge}, \bm{\tau}_{\vee})$ is in fact a convex-concave optimization, and the PL assumption is satisfied~\citep{yang2021faster}, 
% Moreover, % in the perspective of game theory, 
% since the dual variables establish a convex combination of logical atoms, 
% they also ensure the existence of the mixed strategy Nash equilibrium for this zero-sum % game~\citep{nash1951non}.
thus the GDA algorithm with suitable step size can achieve a global minimax point (i.e., saddle point) in $O(\varepsilon^{-2})$ iterations~\citep{nedic2009subgradient, adolphs2018non}.
% , which is also a pure strategy Nash equilibrium (if exists). 
Some more properties of $(\bm{\tau}_{\wedge}^*, \bm{\tau}_{\vee}^*)$ are also detailed in Proposition~\ref{prop:conj}, \ref{prop:disj}, and Theorem~\ref{theo:translation}.

Next, we confirm that, the PL condition holds when the logical constraints are not sufficiently satisfied. 
Since $L(\s{w}, \s{\delta}; \bm{\tau}_{\wedge}, \bm{\tau}_{\vee})$ is strictly increasing and convex w.r.t. $\bm{\mu}_i$ on $[0, +\infty)$, we instead analyze the cost function, i.e., $\bm{\mu}_i = S_{\alpha}(\s{v})$.
\begin{proposition}
Given logical constraint $\alpha$, assume its corresponding cost function is $S_{\alpha}(\s{v})$, and 
$ \max_{\bm{\tau}_{\wedge}}\min_{\bm{\tau}_{\vee}} S_{\alpha}(\s{v}) \ge \kappa$ with constant $\kappa > 0$.
Then, the PL property for any $\bm{\tau}_{\wedge}$, i.e.,
\begin{equation*}
    \|\nabla_{\bm{\tau}_{\wedge}} S_{\alpha}(\bm{v}; \bm{\tau}_{\wedge}, \bm{\tau}_{\vee})\|^2 \ge \kappa [\max_{\bm{\tau}_{\wedge}} S_{\alpha}(\bm{v}; \bm{\tau}_{\wedge}, \bm{\tau}_{\vee}) -  S_{\alpha}(\bm{v}; \bm{\tau}_{\wedge}, \bm{\tau}_{\vee}) ]
\end{equation*}
holds for any $\bm{\tau}_{\wedge}$.
\end{proposition}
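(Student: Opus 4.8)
The plan is to exploit the fact that, with the inner truth values $S_{ij} := \max(v_{ij}-c_{ij},0)$ held fixed, the cost function $S_{\alpha}$ is \emph{bilinear} in the dual variables, and in particular \emph{linear} in $\bm{\tau}_{\wedge}$. Writing the conjunction weights as $\mu_i$ and the disjunction weights as $\nu_{ij}$, I would first record the identity
\[
S_{\alpha}(\bm{v};\bm{\tau}_{\wedge},\bm{\tau}_{\vee}) = \sum_{i \in \mathcal{I}} \mu_i\, g_i, \qquad g_i := \sum_{j \in \mathcal{J}^{(i)}} \nu_{ij} S_{ij} \ge 0 .
\]
Because this is linear in $\mu$, the gradient $\nabla_{\bm{\tau}_{\wedge}} S_{\alpha} = g = (g_i)_{i \in \mathcal{I}}$ does not depend on $\bm{\tau}_{\wedge}$ at all, which is exactly why the inequality can be asked to hold ``for any $\bm{\tau}_{\wedge}$''. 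The maximization of a linear functional over the probability simplex is attained at a vertex, so $\max_{\bm{\tau}_{\wedge}} S_{\alpha} = \max_{i} g_i =: g_{\max}$.

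The quantitative heart of the argument is to convert the hypothesis $\max_{\bm{\tau}_{\wedge}}\min_{\bm{\tau}_{\vee}} S_{\alpha} \ge \kappa$ into a lower bound $g_{\max} \ge \kappa$ that is \emph{uniform in} $\bm{\tau}_{\vee}$. Here I would invoke Theorem~\ref{theo:translation}: the converged minimax value equals $\max_{i}\min_{j} S_{ij}$, so the hypothesis gives a clause $i^{*}$ with $S_{i^{*}j}\ge\kappa$ for every $j$. Since $\nu_{i^{*}}$ is a convex weighting ($\nu_{i^{*}j}\ge 0$, $\sum_j \nu_{i^{*}j}=1$), the convex combination dominates the minimum, so $g_{i^{*}} = \sum_j \nu_{i^{*}j} S_{i^{*}j} \ge \kappa$ \emph{for any} choice of $\bm{\tau}_{\vee}$, whence $g_{\max} \ge g_{i^{*}} \ge \kappa$.

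With these two facts in hand the conclusion follows by elementary chaining. On the one hand $\|\nabla_{\bm{\tau}_{\wedge}} S_{\alpha}\|^2 = \sum_i g_i^2 \ge g_{\max}^2$. On the other hand, for any $\bm{\tau}_{\wedge}$ on the simplex the suboptimality gap satisfies $g_{\max} - \sum_i \mu_i g_i \in [0, g_{\max}]$, the lower bound because $\sum_i \mu_i g_i \le \max_i g_i$ and the upper bound because $\sum_i \mu_i g_i \ge 0$ (all $g_i \ge 0$). Combining, and using $\kappa \le g_{\max}$,
\[
\|\nabla_{\bm{\tau}_{\wedge}} S_{\alpha}\|^2 \ge g_{\max}^2 \ge \kappa\, g_{\max} \ge \kappa\Big(g_{\max} - \textstyle\sum_i \mu_i g_i\Big) = \kappa\big[\max_{\bm{\tau}_{\wedge}} S_{\alpha} - S_{\alpha}\big],
\]
which is the desired PL inequality.

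I expect the main obstacle to be purely a matter of bookkeeping about \emph{which} gradient is meant: the statement writes $\nabla_{\bm{\tau}_{\wedge}} S_{\alpha}$, and the clean chain above uses the ambient Euclidean gradient $g$. If the intended object is instead the gradient \emph{projected} onto the tangent space of the simplex (the natural choice for constrained ascent), then $\nabla_{\bm{\tau}_{\wedge}} S_{\alpha}$ must be replaced by $g - \bar{g}\,\bm{1}$ and one has to verify that the norm of this reduced gradient still controls the gap; I would handle this by noting that when the $g_i$ are nearly equal both sides collapse to zero, and otherwise the spread of the $g_i$ (hence the projected norm) is bounded below in terms of $g_{\max}$ and the gap. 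A secondary point worth stating carefully is the fixed-$S_{ij}$ convention that makes $S_{\alpha}$ smooth in the dual variables, since $S_{\alpha}$ is only piecewise-smooth in $\bm{v}$; this is legitimate because the proposition differentiates only with respect to $\bm{\tau}_{\wedge}$.
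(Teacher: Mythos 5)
Your proposal is correct and follows essentially the same route as the paper's proof: linearity in $\bm{\tau}_{\wedge}$ gives $\|\nabla_{\bm{\tau}_{\wedge}} S_{\alpha}\|^2=\sum_i g_i^2$ and $\max_{\bm{\tau}_{\wedge}}S_{\alpha}=\max_i g_i$, the convex-combination-dominates-minimum bound converts the hypothesis into $\max_i g_i \ge \kappa$ uniformly in $\bm{\tau}_{\vee}$, and the chain $\sum_i g_i^2 \ge (\max_i g_i)^2 \ge \kappa\max_i g_i \ge \kappa[\max_{\bm{\tau}_{\wedge}}S_{\alpha}-S_{\alpha}]$ (using non-negativity) finishes the argument exactly as in the paper. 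Your side worry about the simplex-projected gradient is reasonable but moot here, since the paper also works with the ambient Euclidean gradient $\sum_{i}(\sum_j \nu_{ij}t_{ij})^2$.
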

\begin{proof}
Let $t_{ij} = \max(v_{ij}-c_{ij}, 0)$, we have
\begin{equation*}
    \|\nabla_{\bm{\tau}_{\wedge}} S_{\alpha}(\bm{v}; \bm{\tau}_{\wedge}, \bm{\tau}_{\vee})\|^2 = \sum_{i \in \mathcal{I}}(\sum_{j \in \mathcal{J}} \nu_{ij} t_{ij})^2, 
\end{equation*}
and 
\begin{equation*}
    \max_{\bm{\tau}_{\wedge}} S_{\alpha}(\bm{v}; \bm{\tau}_{\wedge}, \bm{\tau}_{\vee})
     = \max_{i \in \mathcal{I}} (\sum_{j \in \mathcal{J} }\nu_{ij} t_{ij}).
\end{equation*}
Since 
\begin{equation*}
\max_{i \in \mathcal{I}}\sum_{j \in \mathcal{J}} \nu_{ij} t_{ij} \ge \max_{i \in \mathcal{I}}\min_{j \in \mathcal{J}} t_{ij} \ge \kappa,
\end{equation*}
we can obtain that
\begin{equation*}
    \|\nabla_{\bm{\tau}_{\wedge}} S_{\alpha}(\bm{v}; \bm{\tau}_{\wedge}, \bm{\tau}_{\vee})\|^2 \ge (\max_{i \in \mathcal{I}}\sum_{j \in \mathcal{J}} \nu_{ij} t_{ij})^2 \ge \kappa \max_{i \in \mathcal{I}}\sum_{j \in \mathcal{J}} \nu_{ij} t_{ij} = \kappa \max_{\bm{\tau}_{\wedge}} S_{\alpha}(\bm{v}; \bm{\tau}_{\wedge}, \bm{\tau}_{\vee}). 
\end{equation*}
Now, we can complete the proof by using the non-negativity of $ S_{\alpha}(\bm{v}; \bm{\tau}_{\wedge}, \bm{\tau}_{\vee})$.
\end{proof}

\subsection{Convergence of model parameters}
For the minimization of $L(\s{w}, \bm{\delta}; \bm{\tau}_{\wedge}, \bm{\tau}_{\vee})$ w.r.t. $\s{w}$ and $\bm{\delta}$, 
it can be viewed as 
% a game with 
% an nonzero-sum game 
a competitive optimization with 
\begin{equation*}
\min_{\s{w}} \ell_{\mathrm{training}}(\s{w}) + \ell_{\mathrm{logic}}(\s{w}, \bm{\delta}), \quad \min_{\bm{\delta}} \ell_{\mathrm{logic}}(\s{w}, \bm{\delta}),
\end{equation*}
where $\ell_{\mathrm{training}}(\s{w})$ and $\ell_{\mathrm{logic}}(\s{w}, \bm{\delta})$ are training loss and logical loss, i.e.,
\begin{equation*}
\begin{aligned}
& \ell_{\mathrm{training}}(\s{w}) = \sum_{i=1}^N \mathrm{KL}(p(\s{y}_i \mid \s{x}_i)  \|  p_{\s{w}}(\s{y}_i \mid \s{x}_i)), \\
& \ell_{\mathrm{logic}}(\s{w}, \bm{\delta}) = \sum_{i=1}^N \mathbb{E}_{\s{y}_i \mid \s{x}_i}[\mathrm{KL}(p(\s{z}_i \mid \s{x}_i, \s{y}_i) \| p_{\s{w}}(\s{z}_i \mid \s{x}_i, \s{y}_i))].
\end{aligned}
\end{equation*}
A direct method to solve this problem is conducting the gradient descent on $(\s{w}, \bm{\delta})$ together. 
However, it is highly inefficient since we have to use a smaller step size to ensure the convergence~\citep{lu2019pa}. 
An alternative choice is alternating gradient descent on $\s{w}$ and $\bm{\delta}$, but it may converge to a limit cycle or a saddle point~\citep{powell1973search}. 
Algorithm~\ref{alg:logical_training} can indeed ensure $\s{w}^*$ to be an approximately stationary point (i.e., the norm of its gradient is small).
% , and $\bm{\delta}^*$ to be stationary point. 
Follow the proof of~\citet[Theorem 2.1]{davis2018stochastic}, we present the convergence guarantee as follows. 

We start with the weakly convexity of function $\min_{\s{y}}f(\cdot, \s{y})$. 
\begin{proposition}
Suppose $f \colon \mathcal{X} \times \mathcal{Y} \to \mathbb{R}$ is $\ell_f$-smooth, and $\psi(\cdot) = \arg\min_{\s{y}} f(\cdot, \s{y})$ is continuously differentiable with Lipschitz constant $L_{\psi}$, then $\upsilon(\cdot) = \min_{\s{y}} f(\cdot, \s{y})$ is $\varrho$-weakly convex, where $\varrho =\ell_f(1+L_{\psi})$.
\end{proposition}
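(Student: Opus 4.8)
The plan is to derive the weak convexity of $\upsilon$ from Lipschitz continuity of its gradient. Recall that a $C^1$ function $g$ is $\varrho$-weakly convex whenever $\nabla g$ is $\varrho$-Lipschitz: by Cauchy--Schwarz, $\langle \nabla g(x_1) - \nabla g(x_2), x_1 - x_2 \rangle \ge -\|\nabla g(x_1) - \nabla g(x_2)\|\,\|x_1 - x_2\| \ge -\varrho \|x_1 - x_2\|^2$, which is precisely the monotonicity condition guaranteeing that $g(\cdot) + \frac{\varrho}{2}\|\cdot\|^2$ is convex. Hence it suffices to prove that $\nabla \upsilon$ is $\ell_f(1 + L_\psi)$-Lipschitz.

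First I would obtain a closed form for $\nabla \upsilon$ via the envelope (Danskin) theorem. Writing $\upsilon(x) = f(x, \psi(x))$ and differentiating by the chain rule gives
\begin{equation*}
\nabla \upsilon(x) = \nabla_x f(x, \psi(x)) + [\nabla \psi(x)]^\top \nabla_{\s{y}} f(x, \psi(x)).
\end{equation*}
Since $\psi(x) = \arg\min_{\s{y}} f(x, \s{y})$ is an interior minimizer, the first-order optimality condition yields $\nabla_{\s{y}} f(x, \psi(x)) = 0$, so the second term vanishes and $\nabla \upsilon(x) = \nabla_x f(x, \psi(x))$. This step uses the continuous differentiability of $\psi$ to ensure $\upsilon \in C^1$ and to legitimize the chain rule.

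Next I would bound the Lipschitz constant of $\nabla \upsilon$. For arbitrary $x_1, x_2 \in \mathcal{X}$, adding and subtracting $\nabla_x f(x_2, \psi(x_1))$ and applying the triangle inequality gives
\begin{equation*}
\|\nabla \upsilon(x_1) - \nabla \upsilon(x_2)\| \le \|\nabla_x f(x_1, \psi(x_1)) - \nabla_x f(x_2, \psi(x_1))\| + \|\nabla_x f(x_2, \psi(x_1)) - \nabla_x f(x_2, \psi(x_2))\|.
\end{equation*}
The $\ell_f$-smoothness of $f$ bounds the first term by $\ell_f \|x_1 - x_2\|$ and the second by $\ell_f \|\psi(x_1) - \psi(x_2)\|$, and the Lipschitz property of $\psi$ gives $\|\psi(x_1) - \psi(x_2)\| \le L_\psi \|x_1 - x_2\|$. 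Combining yields $\|\nabla \upsilon(x_1) - \nabla \upsilon(x_2)\| \le \ell_f(1 + L_\psi)\|x_1 - x_2\|$, so $\nabla \upsilon$ is $\varrho$-Lipschitz with $\varrho = \ell_f(1 + L_\psi)$, and the reduction above delivers $\varrho$-weak convexity.

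The main obstacle I anticipate is the rigorous justification of the envelope identity $\nabla \upsilon(x) = \nabla_x f(x, \psi(x))$: one must verify that $\psi(x)$ is an interior stationary point so that $\nabla_{\s{y}} f(x, \psi(x)) = 0$, and that the stated hypotheses (smoothness of $f$, $C^1$ regularity of $\psi$) really render $\upsilon$ differentiable with the claimed gradient. Once this identity is secured, the Lipschitz estimate and the standard passage from a Lipschitz gradient to weak convexity are routine.
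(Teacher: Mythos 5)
Your proof is correct, but it follows a genuinely different route from the paper's. The paper argues in one shot: starting from the joint $\ell_f$-smoothness quadratic lower bound $f(\s{x}', \overbar{\s{y}}') \ge f(\s{x},\overbar{\s{y}}) + \langle \nabla_{\s{x}} f(\s{x},\overbar{\s{y}}), \s{x}'-\s{x}\rangle + \langle \nabla_{\s{y}} f(\s{x},\overbar{\s{y}}), \overbar{\s{y}}'-\overbar{\s{y}}\rangle - \frac{\ell_f}{2}(\|\s{x}'-\s{x}\|^2 + \|\overbar{\s{y}}'-\overbar{\s{y}}\|^2)$ with $\overbar{\s{y}} = \psi(\s{x})$ and $\overbar{\s{y}}' = \psi(\s{x}')$, it drops the $\s{y}$-gradient term by first-order optimality, identifies $\nabla_{\s{x}} f(\s{x},\overbar{\s{y}})$ with $\nabla \upsilon(\s{x})$ (the same envelope identity you invoke), and controls $\|\overbar{\s{y}}'-\overbar{\s{y}}\|$ by the Lipschitz property of $\psi$, directly producing the one-sided quadratic bound that defines weak convexity. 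You instead prove the stronger statement that $\nabla\upsilon$ is $\ell_f(1+L_\psi)$-Lipschitz, i.e., that $\upsilon$ is $\varrho$-smooth, and then descend to weak convexity via gradient monotonicity; both the triangle-inequality estimate and that final reduction are sound. Note that both arguments rest on the same two ingredients (interior first-order optimality of $\psi(\s{x})$, so that $\nabla_{\s{y}} f(\s{x},\psi(\s{x}))=0$, and the envelope identity), so the caveat you flag about interiority applies equally to the paper, which silently discards the $\langle \nabla_{\s{y}} f, \cdot\rangle$ term. One concrete advantage of your route: it yields exactly the claimed modulus $\varrho = \ell_f(1+L_\psi)$, whereas the paper's chain, carried out carefully, gives $\|\overbar{\s{y}}'-\overbar{\s{y}}\|^2 \le L_\psi^2\|\s{x}'-\s{x}\|^2$ and hence modulus $\ell_f(1+L_\psi^2)$, which matches the stated constant only when $L_\psi \le 1$. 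The trade-off is that two-sided smoothness of $\upsilon$ is more than the proposition needs, and the paper's single chain of inequalities is shorter when constants are tracked loosely.
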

\begin{proof}
% We first prove that $\upsilon(\s{x})$ is $(\varrho/2)$-weakly convex, where $\varrho=\ell_f+\ell_f L_{\psi}$. 
Let $\overbar{\s{y}}' = \psi(\s{x}')$ and $\overbar{\s{y}} = \psi(\s{x})$. 
Since $f$ is $\ell_f$-smooth, we can obtain that
\begin{equation} \label{eqn:wc-1}
\begin{aligned}
\upsilon(\s{x}') = f(\s{x}', \overbar{\s{y}}') &\ge f(\s{x}, \overbar{\s{y}}) + \langle \nabla_{\s{x}} f(\s{x}, \overbar{\s{y}}), \s{x}' - \s{x} \rangle \\
& \qquad + \langle \nabla_{\s{y}} f(\s{x}, \overbar{\s{y}}), \overbar{\s{y}}' - \overbar{\s{y}} \rangle - \frac{\ell_f}{2}(\|\s{x}' - \s{x}\|^2 + \|\overbar{\s{y}}' - \overbar{\s{y}}\|^2) \\
& \ge \upsilon(\s{x}) + \langle \nabla_{\s{x}} \upsilon(\s{x}), \s{x}' - \s{x} \rangle - \frac{\ell_f}{2} \| \s{x}-\s{x}'\|^2 - \frac{\ell_f L_\psi}{2} \| \s{x}-\s{x}'\|^2,
\end{aligned}
\end{equation}
which finishes the proof.
\end{proof}
The $\rho$-weakly convexity of $\upsilon(\cdot)$ implies that $\upsilon(\s{x}) + ({\varrho}/{2})\|\s{x}\|^2$ is a convex function of $\s{x}$. 
Next, we introduce the Moreau envelope, which plays an important role in our proof.
\begin{proposition}
% ~\citet[Theorem 31]{jin2020local}
For a given closed convex function $f$ from a Hibert space $\mathcal{H}$, the Moreau envelope of $f$ is defined by
\begin{equation*}
e_{t f} (x) = \min_{y \in \mathcal{H}}\left\{ f(y) + \frac{1}{2t}\|y-x\|^2\right\},
\end{equation*}
where $\|\cdot\|$ is the usual Euclidean norm. 
The minimizer of the Moreau envelope $e_{tf}(x)$ is called the proximal mapping of $f$ at $x$, and we denote it by 
\begin{equation*}
\prox_{tf}(x) = \mathop{\arg\min}_{y \in \mathcal{H}}\left\{f(y) + \frac{1}{2t}\|y-x\|^2 \right\}.
\end{equation*} 
It is proved in \citet[Theorem 31.5]{rockafellar2015convex} that the envelope function $e_{tf}(\cdot)$ is  convex and continuously differentiable with
\begin{equation} \label{eqn:gradient of envelope}
\nabla e_{tf}(x) = \frac{1}{t}\big(x - \prox_{tf}(x) \big). % = \prox_{f^*/t}(x/t).
\end{equation}
% $\prox_f(\cdot)$ is globally Lipschitz continuous with modulus $1$ and 
%The above equation is derived by the Moreau identity, which states that the following equation holds for any $t > 0$, 
% \begin{equation*}
% \prox_{tf}(x)+t \prox_{f^{*} / t}(x / t)=x.
% \end{equation*}
\end{proposition}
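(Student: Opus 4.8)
The plan is to prove the three assertions in turn: that $\prox_{tf}$ is well-defined and single-valued, that $e_{tf}$ is convex, and that it is continuously differentiable with the stated gradient; the last point is the substantive one. First I would fix $x$ and observe that the inner objective $y \mapsto f(y) + \frac{1}{2t}\|y-x\|^2$ is proper, lower semicontinuous, and \emph{strongly} convex with modulus $1/t$, where the quadratic term supplies the strong convexity and $f$ is only required to be convex. Strong convexity forces coercivity, so by the direct method in the Hilbert space $\mathcal{H}$ (weak lower semicontinuity together with coercivity) a minimizer exists, and strict convexity makes it unique. This defines $\prox_{tf}(x)$ unambiguously and yields a finite value $e_{tf}(x)$ at every $x$, since any point where $f$ is finite provides a finite upper bound.

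Next I would record two structural facts. Convexity of $e_{tf}$ follows because $e_{tf}(x) = \inf_y G(x,y)$ with $G(x,y) = f(y) + \frac{1}{2t}\|y-x\|^2$ jointly convex in $(x,y)$, and partial minimization of a jointly convex function is convex. Continuity of the prox map follows from firm nonexpansiveness: writing $p = \prox_{tf}(x)$ and $p' = \prox_{tf}(x')$, the optimality conditions $\frac{1}{t}(x-p)\in\partial f(p)$ and $\frac{1}{t}(x'-p')\in\partial f(p')$, combined with monotonicity of $\partial f$, give $\|p-p'\| \le \|x-x'\|$. Hence $x \mapsto \prox_{tf}(x)$ is $1$-Lipschitz and the candidate gradient $g(x) := \frac{1}{t}\big(x-\prox_{tf}(x)\big)$ is continuous.

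The heart of the argument is to show $g(x) \in \partial e_{tf}(x)$ for every $x$. Setting $p = \prox_{tf}(x)$, $p' = \prox_{tf}(x')$, and using the subgradient inequality for $f$ at $p$, namely $f(p') \ge f(p) + \langle \frac{1}{t}(x-p),\, p'-p\rangle$, I would expand $e_{tf}(x') - e_{tf}(x)$ and, with $a = p-x$ and $b = p'-x'$, reduce the required inequality $e_{tf}(x') \ge e_{tf}(x) + \langle g(x),\, x'-x\rangle$ to the identity that the residual equals $\frac{1}{2t}\|a-b\|^2 \ge 0$ after completing the square. This establishes that $g(x)$ is a subgradient of $e_{tf}$ at every point.

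Finally I would upgrade ``subgradient'' to ``gradient.'' Since $e_{tf}$ is finite and convex, it is differentiable at $x$ precisely when $\partial e_{tf}(x)$ is a singleton. Taking any $u \in \partial e_{tf}(x)$ and any direction $d$, I would sandwich $\langle u, d\rangle \le \tfrac{1}{\lambda}\big(e_{tf}(x+\lambda d)-e_{tf}(x)\big) \le \langle g(x+\lambda d), d\rangle$, using the subgradient $u$ at $x$ for the left bound and $g(x+\lambda d)$ at $x+\lambda d$ for the right bound; letting $\lambda \downarrow 0$ and invoking continuity of $g$ gives $\langle u, d\rangle \le \langle g(x), d\rangle$ for all $d$, hence $u = g(x)$. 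Thus $\partial e_{tf}(x) = \{g(x)\}$, so $e_{tf}$ is Gâteaux differentiable with $\nabla e_{tf}(x) = \frac{1}{t}\big(x-\prox_{tf}(x)\big)$, and continuity of $g$ promotes this to continuous (indeed Fréchet) differentiability. I expect the main obstacle to be exactly this last upgrade: in a general Hilbert space one cannot appeal to finite-dimensional compactness and must instead conclude uniqueness of the subgradient from the continuity of $g$, which is why establishing the $1$-Lipschitz continuity of $\prox_{tf}$ beforehand is essential.
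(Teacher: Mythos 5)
Your proposal is correct, and it is worth noting that it does something the paper never does: the paper offers no proof of this proposition at all, but simply cites \citet[Theorem 31.5]{rockafellar2015convex} and treats the statement as a known fact. Your argument is a complete, self-contained derivation, and each step checks out: existence and uniqueness of $\prox_{tf}(x)$ from strong convexity and the direct method; convexity of $e_{tf}$ by partial minimization of the jointly convex $G(x,y)$; the $1$-Lipschitz property of the prox map from monotonicity of $\partial f$; the key computation that, with $a = p-x$ and $b = p'-x'$, the residual in the subgradient inequality for $g(x) = \frac{1}{t}(x-\prox_{tf}(x))$ collapses to $\frac{1}{2t}\|a-b\|^2 \ge 0$ (this is exactly right); and the final sandwich $\langle u, d\rangle \le \frac{1}{\lambda}\bigl(e_{tf}(x+\lambda d)-e_{tf}(x)\bigr) \le \langle g(x+\lambda d), d\rangle$, which forces $\partial e_{tf}(x) = \{g(x)\}$ by continuity of $g$. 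What your route buys is twofold: it makes the appendix self-contained, and it actually proves the statement in the generality in which the paper asserts it --- an arbitrary Hilbert space --- whereas Rockafellar's Theorem 31.5 is stated for $\mathbb{R}^n$, so the citation technically under-covers the claim. What the paper's citation buys is brevity, which is reasonable for a standard result used only as a tool in the convergence analysis. One small streamlining you could make: once you have the two subgradient inequalities, the estimate $0 \le e_{tf}(x+h) - e_{tf}(x) - \langle g(x), h\rangle \le \langle g(x+h)-g(x), h\rangle = o(\|h\|)$ gives Fréchet differentiability directly, without passing through directional derivatives and the Gâteaux-to-Fréchet upgrade.
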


The following theorem bridges the Moreau envelope and the subdifferential of a weakly convex function~\citep[Lemma 30]{jin2020local}.
For details, a small gradient $\| \nabla e_{\upsilon}(\s{x})\|$ implies that $\s{x}$ is close to a nearly stationary point $\overhat{\s{x}}$ of $\upsilon(\cdot)$.
\begin{theorem}
Assume the function $\upsilon$ is $\varrho$-weakly convex. 
For any $\lambda < 1/\varrho$, let $\overhat{\s{x}} = \prox_{\lambda \upsilon}(\s{x})$. 
If $\|\nabla e_{\lambda \upsilon}(\s{x})\| \leq \epsilon$, 
then
\begin{equation*}
\|\overhat{\s{x}} - \s{x} \| \leq \lambda \epsilon, \quad \text{and} \quad \min_{\bm{g} \in \partial \upsilon(\overhat{\s{x}})} \|\bm{g}\| \leq \epsilon.
\end{equation*} 
\end{theorem}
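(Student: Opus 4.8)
The plan is to exploit the closed-form gradient of the Moreau envelope, \eqref{eqn:gradient of envelope}, which does almost all of the work once we verify it applies in the weakly convex setting. First I would observe that since $\lambda < 1/\varrho$, the penalized objective $y \mapsto \upsilon(y) + \frac{1}{2\lambda}\|y - \s{x}\|^2$ is $(\tfrac{1}{\lambda} - \varrho)$-strongly convex (its weakly convex part, of modulus $-\varrho$, is dominated by the quadratic penalty of modulus $\tfrac{1}{\lambda}$). Hence $\overhat{\s{x}}$ is its unique minimizer and the envelope gradient formula from the preceding proposition remains valid, giving $\nabla e_{\lambda \upsilon}(\s{x}) = \tfrac{1}{\lambda}(\s{x} - \overhat{\s{x}})$. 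Taking norms and invoking the hypothesis $\|\nabla e_{\lambda \upsilon}(\s{x})\| \le \epsilon$ yields $\|\overhat{\s{x}} - \s{x}\| = \lambda\,\|\nabla e_{\lambda \upsilon}(\s{x})\| \le \lambda \epsilon$, which is the first claim.

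For the second claim I would write down the first-order optimality condition at the minimizer $\overhat{\s{x}}$. Because $\lambda < 1/\varrho$ makes the subproblem genuinely convex and the quadratic penalty is smooth, the subdifferential sum rule applies, so stationarity reads
\begin{equation*}
\vzero \in \partial \upsilon(\overhat{\s{x}}) + \tfrac{1}{\lambda}(\overhat{\s{x}} - \s{x}),
\end{equation*}
equivalently $\tfrac{1}{\lambda}(\s{x} - \overhat{\s{x}}) \in \partial \upsilon(\overhat{\s{x}})$. Recognizing the left-hand side as $\nabla e_{\lambda \upsilon}(\s{x})$, I would set $\bm{g} := \nabla e_{\lambda \upsilon}(\s{x})$; this is then an explicit element of $\partial \upsilon(\overhat{\s{x}})$ satisfying $\|\bm{g}\| = \|\nabla e_{\lambda \upsilon}(\s{x})\| \le \epsilon$, whence $\min_{\bm{g} \in \partial \upsilon(\overhat{\s{x}})} \|\bm{g}\| \le \epsilon$, completing the proof.

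The main obstacle is not the arithmetic but the legitimacy of the optimality/sum-rule step for a function that is only weakly convex: one must argue that the (regular, equivalently Clarke) subdifferential $\partial \upsilon$ appearing in the conclusion is exactly the one for which the stationarity condition $\vzero \in \partial \upsilon(\overhat{\s{x}}) + \tfrac{1}{\lambda}(\overhat{\s{x}} - \s{x})$ holds. This is precisely where the assumption $\lambda < 1/\varrho$ is essential: it ensures the penalized objective is convex, so that the necessary condition for a minimizer is also expressible through the ordinary convex subdifferential of the penalized problem, and subtracting the smooth quadratic returns a subgradient of $\upsilon$ itself. I would therefore make explicit that $\partial \upsilon$ denotes the regular subdifferential, for which this calculus is valid for weakly convex $\upsilon$, and note that both conclusions follow from the single identity $\nabla e_{\lambda \upsilon}(\s{x}) = \tfrac{1}{\lambda}(\s{x} - \overhat{\s{x}}) \in \partial \upsilon(\overhat{\s{x}})$.
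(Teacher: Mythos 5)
Your proof is correct, and it is essentially the argument behind the result the paper itself does not prove but simply cites (it invokes Lemma~4.3 of \citet{drusvyatskiy2019efficiency} and the coincidence of stationary points of $\upsilon$ and $e_{\lambda\upsilon}$). Your two steps --- strong convexity of the prox subproblem under $\lambda < 1/\varrho$ giving uniqueness of $\overhat{\s{x}}$ and the identity $\nabla e_{\lambda\upsilon}(\s{x}) = \tfrac{1}{\lambda}(\s{x}-\overhat{\s{x}})$, followed by the first-order optimality condition $\tfrac{1}{\lambda}(\s{x}-\overhat{\s{x}}) \in \partial\upsilon(\overhat{\s{x}})$ --- are exactly the standard derivation in that reference, including the correct observation that the convexity of the penalized objective is what legitimizes the subdifferential sum rule for the merely weakly convex $\upsilon$.
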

This theorem is an immediate result of the fact that stationary points of $\upsilon(\cdot)$ coincide with those of the smooth function $e_{\lambda \upsilon}(\cdot)$, and one can refer to~\citet[Lemma 4.3]{drusvyatskiy2019efficiency} for more details. 
Now, we are ready to prove the convergence of Algorithm~\ref{alg:logical_training}.
\begin{theorem} \label{theo:convergence}
Suppose $f$ is $\ell_f$-smooth and $L_f$-Lipschitz, define $\upsilon(\cdot) = \min_{\s{y}} f(\cdot, \s{y})$ and $\psi(\cdot) = \arg\min_{\s{y}} f(\cdot, \s{y})$. 
The iterative updating of minimization $\min_{\s{x}, \s{y}} f(\s{x}, \s{y})$ is
\begin{equation*}
\begin{aligned}
\s{x}_{t+1} &= \s{x}_t - \eta \nabla_{\s{x}} f(\s{x}, \s{y}), \\
\s{y}_{t+1} &= \varphi(\s{x}_{t+1}), \quad \text{s.t.} \quad \|\s{y}_{t+1} - \psi(\s{x}_{t+1})\| \leq \epsilon. 
\end{aligned}
\end{equation*}
Assume that $\psi(\cdot)$ and $\varphi(\cdot)$ are Lipschitzian with modules $L_{\psi}$ and $L_{\varphi}$, % and $L_{\psi} \leq L_{\varphi}$. 
and let $\varrho = \ell_f(1+L_{\psi})$ and $\rho = \ell_f(1+2L_{\varphi}^2)$. 
Then, with step size $\eta = \gamma / \sqrt{T+1}$, the output $\overbar{\s{x}}$ of $T$ iterations satisfies
\begin{equation*}
\mathbb{E}[\|\nabla e_{\upsilon/2\rho}(\overbar{\s{x}})\|^2] \leq 
\frac{4\varrho^2}{\rho (\rho+\varrho)} \left( \frac{\left(e_{\upsilon/2\rho}(\s{x}_0) - \min_{\s{x}} \upsilon(\s{x})\right) + \rho \gamma^2 L_f^2}{\gamma \sqrt{T+1}} + \rho\ell_f \epsilon \right),
\end{equation*} 
when $\rho \ge \varrho$, and
\begin{equation*}
\mathbb{E}[\|\nabla e_{\upsilon/2\varrho}(\overbar{\s{x}})\|] \leq \frac{4\varrho^2}{\varrho (3\varrho-\rho)} \left( \frac{\left(e_{\upsilon/2\varrho}(\s{x}_0) - \min_{\s{x}} \upsilon(\s{x})\right) + \varrho \gamma^2 L_f^2}{\gamma \sqrt{T+1}} + \varrho\ell_f \epsilon \right),
\end{equation*}
otherwise.
% for the case where $\rho \leq \varrho$.
\end{theorem}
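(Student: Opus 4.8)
The plan is to use the Moreau envelope $e_{\upsilon/2\rho}$ as a Lyapunov (potential) function and track its decrease along the iterates, following the template of \citet[Theorem 2.1]{davis2018stochastic}. The weak-convexity proposition established above already gives that $\upsilon(\cdot)=\min_{\s{y}}f(\cdot,\s{y})$ is $\varrho$-weakly convex with $\varrho=\ell_f(1+L_\psi)$, so the Moreau-envelope identity applies: $e_{\lambda\upsilon}$ is convex and continuously differentiable whenever $\lambda<1/\varrho$, with $\nabla e_{\lambda\upsilon}(\s{x})=\frac{1}{\lambda}(\s{x}-\prox_{\lambda\upsilon}(\s{x}))$. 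I would first record two facts about the search direction $\s{v}_t:=\nabla_{\s{x}}f(\s{x}_t,\s{y}_t)$ actually used in the update. Since $f$ is $L_f$-Lipschitz, $\|\s{v}_t\|\le L_f$. Since $f$ is $\ell_f$-smooth, $\s{y}_t=\varphi(\s{x}_t)$, and $\|\s{y}_t-\psi(\s{x}_t)\|\le\epsilon$, Danskin's theorem gives $\nabla\upsilon(\s{x}_t)=\nabla_{\s{x}}f(\s{x}_t,\psi(\s{x}_t))$, so $\s{v}_t$ differs from a genuine (sub)gradient of $\upsilon$ by at most $\ell_f\epsilon$; the Lipschitz constant $L_\varphi$ of the oracle map $\varphi$ is what promotes this pointwise estimate into the effective modulus $\rho=\ell_f(1+2L_\varphi^2)$ governing the envelope parameter.

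The core is a one-step descent inequality. Write $\overhat{\s{x}}_t=\prox_{\upsilon/2\rho}(\s{x}_t)$, so that $e_{\upsilon/2\rho}(\s{x}_t)=\upsilon(\overhat{\s{x}}_t)+\rho\|\overhat{\s{x}}_t-\s{x}_t\|^2$. Plugging the suboptimal point $\overhat{\s{x}}_t$ into the definition of $e_{\upsilon/2\rho}(\s{x}_{t+1})$ and substituting $\s{x}_{t+1}=\s{x}_t-\eta\s{v}_t$ yields
\begin{equation*}
e_{\upsilon/2\rho}(\s{x}_{t+1})-e_{\upsilon/2\rho}(\s{x}_t)\le \rho\left(2\eta\langle \s{v}_t,\overhat{\s{x}}_t-\s{x}_t\rangle+\eta^2\|\s{v}_t\|^2\right).
\end{equation*}
I would then bound the inner product by combining (i) the $\varrho$-weak-convexity inequality $\langle\nabla\upsilon(\s{x}_t),\overhat{\s{x}}_t-\s{x}_t\rangle\le\upsilon(\overhat{\s{x}}_t)-\upsilon(\s{x}_t)+\tfrac{\varrho}{2}\|\overhat{\s{x}}_t-\s{x}_t\|^2$, (ii) the oracle error $\langle\s{v}_t-\nabla\upsilon(\s{x}_t),\overhat{\s{x}}_t-\s{x}_t\rangle\le\ell_f\epsilon\|\overhat{\s{x}}_t-\s{x}_t\|$, and (iii) the proximal optimality $\upsilon(\overhat{\s{x}}_t)-\upsilon(\s{x}_t)\le-\rho\|\overhat{\s{x}}_t-\s{x}_t\|^2$. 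This collapses the right-hand side into a negative multiple of $\|\overhat{\s{x}}_t-\s{x}_t\|^2$ plus the two residuals $\rho\eta^2 L_f^2$ and an $O(\eta\epsilon)$ term; the identity $\|\nabla e_{\upsilon/2\rho}(\s{x}_t)\|^2=4\rho^2\|\overhat{\s{x}}_t-\s{x}_t\|^2$ then converts the negative term into the quantity of interest. In the stochastic regime $\s{v}_t$ is a data-dependent estimate, so these bounds are taken conditionally on the past and $\|\s{v}_t\|^2$ is replaced by its second-moment bound $L_f^2$.

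Summing the one-step inequality over $t=0,\dots,T$, telescoping the envelope values, and using $e_{\upsilon/2\rho}(\s{x}_{T+1})\ge\min_{\s{x}}\upsilon(\s{x})$ gives a bound on $\frac{1}{T+1}\sum_t\|\nabla e_{\upsilon/2\rho}(\s{x}_t)\|^2$. Since $\overbar{\s{x}}$ is drawn uniformly from the iterates, this average equals $\mathbb{E}[\|\nabla e_{\upsilon/2\rho}(\overbar{\s{x}})\|^2]$; taking expectations over the sampled data handles the remaining stochasticity, and substituting $\eta=\gamma/\sqrt{T+1}$ produces the stated $O(1/\sqrt{T+1})$ rate plus the residual floor $\rho\ell_f\epsilon$. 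The two cases are separated by which modulus may legitimately serve as the envelope parameter: when $\rho\ge\varrho$ the choice $\lambda=1/(2\rho)\le1/(2\varrho)<1/\varrho$ is admissible and yields the first bound, whereas when $\rho<\varrho$ one must fall back to $\lambda=1/(2\varrho)$, giving the second.

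The main obstacle I anticipate is the precise bookkeeping that turns the raw descent coefficient into the stated constants $\tfrac{4\varrho^2}{\rho(\rho+\varrho)}$ and $\tfrac{4\varrho^2}{\varrho(3\varrho-\rho)}$: one must carry both moduli simultaneously — $\varrho$ from the weak convexity of $\upsilon$ and $\rho$ from the Lipschitz oracle $\varphi$ — and correctly absorb the approximation error $\epsilon$ via Young's inequality so that it surfaces only as the additive floor rather than degrading the rate. A naive combination of steps (i)--(iii) already yields a bound of the same shape but with a cruder prefactor, so the delicate point is reallocating the quadratic terms to recover the sharp dependence on $\rho$ and $\varrho$. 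Verifying that $3\varrho-\rho>0$ in the second regime (so the bound is meaningful) and that Danskin differentiability genuinely holds under the smoothness assumption are the supporting technical checks.
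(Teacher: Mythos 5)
Your proposal follows the same Davis--Drusvyatskiy template as the paper: the weak-convexity proposition giving $\varrho=\ell_f(1+L_\psi)$, the one-step expansion of the envelope at the test point $\overhat{\s{x}}_t=\prox_{\upsilon/2\rho}(\s{x}_t)$, telescoping, the identity $\nabla e_{\lambda\upsilon}(\s{x})=\tfrac{1}{\lambda}(\s{x}-\prox_{\lambda\upsilon}(\s{x}))$ to convert prox displacement into the envelope gradient, and the same case split ($\lambda=1/2\rho$ when $\rho\ge\varrho$, falling back to $\lambda=1/2\varrho$ otherwise). However, your two middle steps deviate from the paper's, and both deviations block you from the stated bound, not just from its sharp constants.

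First, the paper never invokes Danskin's theorem or the weak convexity of $\upsilon$ to bound $\langle\nabla_{\s{x}}f(\s{x}_t,\s{y}_t),\overhat{\s{x}}_t-\s{x}_t\rangle$. It instead proves a single ``inexact weak convexity'' inequality directly from smoothness: apply the descent lemma in $\s{y}$ around $\psi(\s{x}')$ (where $\nabla_{\s{y}}f$ vanishes by optimality), then in $\s{x}$, and control $\|\s{y}_t-\psi(\s{x}')\|^2\le 2\left(L_\varphi^2\|\s{x}_t-\s{x}'\|^2+\epsilon\right)$ via the triangle inequality, the Lipschitzness of $\varphi$, and the oracle accuracy. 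This yields $\upsilon(\s{x}')\ge\upsilon(\s{x}_t)-\ell_f\epsilon+\langle\nabla_{\s{x}}f(\s{x}_t,\s{y}_t),\s{x}'-\s{x}_t\rangle-\tfrac{\rho}{2}\|\s{x}'-\s{x}_t\|^2$, with modulus $\rho/2$ (not $\varrho/2$) and an \emph{additive} error $\ell_f\epsilon$ --- this is precisely where $\rho=\ell_f(1+2L_\varphi^2)$ enters the analysis, whereas in your scheme $\rho$ plays no role until you arbitrarily pick it as the envelope parameter. Your step (ii) instead produces the multiplicative error $\ell_f\epsilon\|\overhat{\s{x}}_t-\s{x}_t\|$, which after Young's inequality gives an $O(\epsilon^2)$ floor (or requires an extra prox-displacement bound); either way the result has a different form from the stated linear-in-$\epsilon$ floor $\rho\ell_f\epsilon$. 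Second, the paper does \emph{not} apply prox-optimality early as in your step (iii). It telescopes the quantity $\upsilon(\s{x}_t)-\upsilon(\overhat{\s{x}}_t)-\tfrac{\rho}{2}\|\s{x}_t-\overhat{\s{x}}_t\|^2$ intact, and only at the end lower-bounds it using strong convexity of the prox objective $\upsilon(\cdot)+\rho\|\cdot-\s{x}_t\|^2$ (inherited from $\varrho$-weak convexity of $\upsilon$), obtaining the coefficients $\tfrac{\rho+\varrho}{2}$ in the first case and $\tfrac{3\varrho-\rho}{2}$ in the second; these, after the gradient identity, are exactly the stated prefactors $\tfrac{4\varrho^2}{\rho(\rho+\varrho)}$ and $\tfrac{4\varrho^2}{\varrho(3\varrho-\rho)}$, and they also make $3\varrho-\rho>0$ automatic since that case has $\rho<\varrho$. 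Your combination (i)+(iii) collapses to the cruder coefficient $\rho-\varrho/2$, and no reallocation of quadratic terms recovers the paper's constants from it --- the fix is structural: replace the early prox-optimality step with the late strong-convexity step, and replace the Danskin decomposition with the smoothness-based inequality above. With those two substitutions, your plan coincides with the paper's proof.
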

\begin{proof}

We have
\begin{equation*}
\begin{aligned}
\upsilon(\s{x}') = f(\s{x}', \overbar{\s{y}}') & \ge f(\s{x}', \s{y}_t) - \langle \nabla_{\s{y}} f(\s{x}', \overbar{\s{y}}'), \s{y}_t - \overbar{\s{y}}' \rangle - \frac{\ell_f}{2} \|\s{y}_t - \overbar{\s{y}}'\|^2 \\
& = f(\s{x}', \s{y}_t)  - \frac{\ell_f}{2} \|\s{y}_t - \overbar{\s{y}}'\|^2 \\
& \ge f(\s{x}_t, \s{y}_t) + \langle \nabla_{\s{x}} f(\s{x}_t, \s{y}_t), \s{x}' - \s{x}_t \rangle - \frac{\ell_f}{2} \|\s{x}' - \s{x}_t \|^2 - \frac{\ell_f}{2} \|\s{y}_t - \overbar{\s{y}}'\|^2 \\
& \ge \upsilon(\s{x}_t) + \langle \nabla_{\s{x}} f(\s{x}_t, \s{y}_t), \s{x}' - \s{x}_t \rangle - \frac{\ell_f}{2} \|\s{x}' - \s{x}_t \|^2 - \frac{\ell_f}{2} \|\s{y}_t - \overbar{\s{y}}'\|^2.
\end{aligned}
\end{equation*}
Since
\begin{equation*}
\begin{aligned}
\|\s{y}_t - \overbar{\s{y}}'\|^2 & = \|\s{y}_t - \psi(\s{x}')\|^2 \leq 2(\|\s{y}_t - \varphi(\s{x}')\| + \|\varphi(\s{x}') - \psi(\s{x}')\|)^2  \\
& \leq 2(\|\s{y}_t - \varphi(\s{x}')\|^2 + \epsilon) = 2(\|\varphi(\s{x}_t) - \varphi(\s{x}')\|^2 + \epsilon) \leq   2(L_\varphi^2 \|\s{x}_t - \s{x}'\|^2 + \epsilon), 
\end{aligned}
\end{equation*}
where the second inequality is derived by using the Cauchy-Schwarz inequality.
Hence, we have 
\begin{equation} \label{eqn:wc-2}
\upsilon(\s{x}') \ge \upsilon(\s{x}_t) - \ell_f \epsilon + \langle \nabla_{\s{x}} f(\s{x}_t, \s{y}_t), \s{x}' - \s{x}_t \rangle - (\frac{\ell_f + 2\ell_f L_\varphi^2}{2} ) \|\s{x}' - \s{x}_t \|^2 .
\end{equation}
Let $\rho = \ell_f + 2 \ell_f L_{\varphi}^2$. 
Next, we discuss two cases of $\rho \ge \varrho$ and $\rho \leq \varrho$, respectively. 

$\bullet$ If $\rho \ge \varrho$,
then 
let $\overhat{\s{x}}_t = \prox_{\upsilon/{2 \rho}}(\s{x}_t) = \arg\min_{\s{x}} \upsilon(\s{x}) + \rho \| \s{x} - \s{x}_t\|^2$.
We can obtain that
\begin{equation*}
\begin{aligned}
e_{\upsilon/{2 \rho}}(\s{x}_{t+1}) & = \min_{\s{x}} \left\{ \upsilon(\s{x}) + \rho \|\s{x} - \s{x}_{t+1} \|^2  \right\} \\
& \leq \upsilon(\overhat{\s{x}}_{t})+ \rho \|\s{x}_{t+1}-\overhat{\s{x}}_{t}\|^{2} \\
& = \upsilon(\overhat{\s{x}}_{t}) + \rho \|\s{x}_{t} - \eta \nabla_{\s{x}} f(\s{x}_{t}, \s{y}_{t})-\overhat{\s{x}}_{t}\|^{2} \\
& = \upsilon(\overhat{\s{x}}_{t})+ \rho \|\s{x}_{t}-\overhat{\s{x}}_{t} \|^{2} + 2 \eta \rho \left\langle\nabla_{\s{x}} f(\s{x}_{t}, \s{y}_{t}), \overhat{\s{x}}_{t}-\s{x}_{t} \right\rangle + \eta^{2} \rho \|\nabla_{\s{x}} f(\s{x}_{t}, \s{y}_{t}) \|^{2}  \\
& = e_{\upsilon / 2 \rho} (\s{x}_{t}) +2 \eta \rho \left\langle\nabla_{\s{x}} f(\s{x}_{t}, \s{y}_{t}), \overhat{\s{x}}_{t}-\s{x}_{t} \right\rangle+ \eta^{2} \rho \|\nabla_{\s{x}} f(\s{x}_{t}, \s{y}_{t}) \|^{2} \\
& \leq e_{\upsilon / 2 \rho} (\s{x}_{t}) + 2 \eta \rho \left(\upsilon(\overhat{\s{x}}_{t}) - \upsilon(\s{x}_{t}) + {\ell_f \epsilon} + \frac{\rho}{2} \|\s{x}_{t}-\overhat{\s{x}}_{t}\|^{2}\right) + \eta^{2} \rho L_f^{2},
\end{aligned}
\end{equation*}
and the last inequality is derived by Eq.~\eqref{eqn:wc-2}. 
Taking a telescopic sum over $t$, we have
\begin{equation*}
e_{\upsilon/{2 \rho}}(\s{x}_{T}) \leq e_{\upsilon / 2 \rho} (\s{x}_{0}) + 2 \eta \rho \sum_{t=0}^T \left(\upsilon(\overhat{\s{x}}_{t}) - \upsilon(\s{x}_{t}) + {\ell_f \epsilon} + \frac{\rho}{2} \|\s{x}_{t}-\overhat{\s{x}}_{t}\|^{2}\right) + \eta^{2} \rho L_f^{2} T. 
\end{equation*}
Rearranging this, we obtain that
\begin{equation} \label{eqn:expectation}
\begin{aligned}
\frac{1}{T+1} \sum_{t=0}^{T}\left(\upsilon(\s{x}_{t}) - \upsilon(\overhat{\s{x}}_{t}) - \frac{\rho}{2} \|\s{x}_{t}-\overhat{\s{x}}_{t}\|^{2}\right) 
% & \leq \frac{e_{\upsilon / 2 \varrho} (\s{x}_{0}) - e_{\upsilon/{2 \varrho}}(\s{x}_{T})}{2 \eta \varrho T} + \frac{\ell_f \epsilon}{2} + \frac{\eta^{2} L_f^{2}}{2} \\
\leq \frac{e_{\upsilon / 2 \rho} (\s{x}_{0}) - \min_{\s{x}} \upsilon (\s{x})}{2 \eta \rho T} + {\ell_f \epsilon} + \frac{\eta L_f^{2}}{2}.
\end{aligned}
\end{equation}
Since $\upsilon(\s{x})$ is $(\varrho/2)$-weakly convex,
and thus $\upsilon(\s{x}) + \rho \|\s{x} - \s{x}_t\|^2$ is $({\varrho}/{2} )$ strongly convex when $\rho \ge \varrho$.
Therefore, we can obtain that
\begin{equation*} % \label{eqn:g_bound}
\begin{aligned}
&\upsilon(\s{x}_{t})-\upsilon(\overhat{\s{x}}_{t})- \frac{\rho}{2} \|\s{x}_{t}-\overhat{\s{x}}_{t}\|^{2} \\
& \quad = \upsilon(\s{x}_{t})+ \rho \|\s{x}_{t}-\s{x}_{t}\|^{2}-\upsilon(\overhat{\s{x}}_{t})- \rho \|\overhat{\s{x}}_{t}-\s{x}_{t}\|^{2}+\frac{\rho}{2}\|\s{x}_{t}-\overhat{\s{x}}_{t}\|^{2} \\
& \quad = \left(\upsilon(\s{x}_{t})+  \rho  \|\s{x}_{t}-\s{x}_{t}\|^{2}-\min_{\s{x}} \big\{\upsilon(\s{x})+ \rho  \|\s{x}-\s{x}_{t}\|^{2} \big\} \right)+\frac{\rho}{2}\|\s{x}_{t}-\overhat{\s{x}}_{t}\|^{2} \\
&\quad \ge \frac{\rho+\varrho}{2} \|\s{x}_{t}-\overhat{\s{x}}_{t}\|^{2}= \frac{\rho+\varrho}{8 \varrho^2} \|\nabla e_{\upsilon / 2 \rho}(\s{x}_{t})\|^{2},
\end{aligned}
\end{equation*}
where the last equation holds by using Eq.~\eqref{eqn:gradient of envelope}. 
One can prove the result by combining this with Eq.~\eqref{eqn:expectation}.

$\bullet$ If $\rho \leq \varrho$, 
then let $\overhat{\s{x}}_t = \prox_{\upsilon/{2 \varrho}}(\s{x}_t) = \arg\min_{\s{x}} \upsilon(\s{x}) + \varrho \| \s{x} - \s{x}_t\|^2$.
We can obtain that
\begin{equation*}
\begin{aligned}
e_{\upsilon/{2 \varrho}}(\s{x}_{t+1}) & = \min_{\s{x}} \left\{ \upsilon(\s{x}) + \varrho \|\s{x} - \s{x}_{t+1} \|^2  \right\} \\
& \leq \upsilon(\overhat{\s{x}}_{t})+ \varrho \|\s{x}_{t+1}-\overhat{\s{x}}_{t}\|^{2} \\
& = \upsilon(\overhat{\s{x}}_{t}) + \varrho \|\s{x}_{t} - \eta \nabla_{\s{x}} f(\s{x}_{t}, \s{y}_{t})-\overhat{\s{x}}_{t}\|^{2} \\
& = \upsilon(\overhat{\s{x}}_{t})+ \varrho \|\s{x}_{t}-\overhat{\s{x}}_{t} \|^{2} + 2 \eta \varrho \left\langle\nabla_{\s{x}} f(\s{x}_{t}, \s{y}_{t}), \overhat{\s{x}}_{t}-\s{x}_{t} \right\rangle + \eta^{2} \varrho \|\nabla_{\s{x}} f(\s{x}_{t}, \s{y}_{t}) \|^{2}  \\
& = e_{\upsilon / 2 \varrho} (\s{x}_{t}) +2 \eta \varrho \left\langle\nabla_{\s{x}} f(\s{x}_{t}, \s{y}_{t}), \overhat{\s{x}}_{t}-\s{x}_{t} \right\rangle+ \eta^{2} \varrho \|\nabla_{\s{x}} f(\s{x}_{t}, \s{y}_{t}) \|^{2} \\
& \leq e_{\upsilon / 2 \varrho} (\s{x}_{t}) + 2 \eta \varrho \left(\upsilon(\overhat{\s{x}}_{t}) - \upsilon(\s{x}_{t}) + {\ell_f \epsilon} + \frac{\rho}{2} \|\s{x}_{t}-\overhat{\s{x}}_{t}\|^{2}\right) + \eta^{2} \varrho L_f^{2}.
\end{aligned}
\end{equation*}
Taking a telescopic sum over $t$, we have
\begin{equation*}
e_{\upsilon/{2 \varrho}}(\s{x}_{T}) \leq e_{\upsilon / 2 \varrho} (\s{x}_{0}) + 2 \eta \varrho \sum_{t=0}^T \left(\upsilon(\overhat{\s{x}}_{t}) - \upsilon(\s{x}_{t}) + {\ell_f \epsilon} + \frac{\rho}{2} \|\s{x}_{t}-\overhat{\s{x}}_{t}\|^{2}\right) + \eta^{2} \varrho L_f^{2} T. 
\end{equation*}
Rearranging this, we obtain that
\begin{equation} \label{eqn:expectation-2}
\begin{aligned}
\frac{1}{T+1} \sum_{t=0}^T \left(\upsilon(\s{x}_{t}) - \upsilon(\overhat{\s{x}}_{t}) - \frac{\rho}{2} \|\s{x}_{t}-\overhat{\s{x}}_{t}\|^{2}\right) 
% & \leq \frac{e_{\upsilon / 2 \varrho} (\s{x}_{0}) - e_{\upsilon/{2 \varrho}}(\s{x}_{T})}{2 \eta \varrho T} + \frac{\ell_f \epsilon}{2} + \frac{\eta^{2} L_f^{2}}{2} \\
\leq \frac{e_{\upsilon / 2 \varrho} (\s{x}_{0}) - \min_{\s{x}} \upsilon (\s{x})}{2 \eta \varrho T} + {\ell_f \epsilon} + \frac{\eta L_f^{2}}{2}.
\end{aligned}
\end{equation}
Since $\upsilon(\s{x})$ is $(\varrho/2)$-weakly convex,
$\upsilon(\s{x}) + \varrho \|\s{x} - \s{x}_t\|^2$ is $({\varrho}/{2} )$ strongly convex.
Then, we can obtain that
\begin{equation*}
\begin{aligned}
&\upsilon(\s{x}_{t})-\upsilon(\overhat{\s{x}}_{t})- \frac{\rho}{2} \|\s{x}_{t}-\overhat{\s{x}}_{t}\|^{2} \\
& \quad = \upsilon(\s{x}_{t})+ \varrho \|\s{x}_{t}-\s{x}_{t}\|^{2}-\upsilon(\overhat{\s{x}}_{t})- \varrho \|\overhat{\s{x}}_{t}-\s{x}_{t}\|^{2}+(\varrho - \frac{\rho}{2})\|\s{x}_{t}-\overhat{\s{x}}_{t}\|^{2} \\
& \quad = \left(\upsilon(\s{x}_{t})+  \varrho  \|\s{x}_{t}-\s{x}_{t}\|^{2}- \min_{\s{x}} \big\{\upsilon(\s{x})+ \varrho  \|\s{x}-\s{x}_{t}\|^{2} \big\} \right)+(\varrho - \frac{\rho}{2})\|\s{x}_{t}-\overhat{\s{x}}_{t}\|^{2} \\
& \quad \ge (\varrho+\frac{\varrho-\rho}{2}) \|\s{x}_{t}-\overhat{\s{x}}_{t}\|^{2}= \frac{3\varrho-\rho}{8 \varrho^2} \|\nabla e_{\upsilon / 2 \varrho}(\s{x}_{t})\|^{2},
\end{aligned}
\end{equation*}
and we finish the proof with plugging this into Eq.~\eqref{eqn:expectation-2}. \qedhere
% Now we finally prove the result.
\end{proof}
Theorem~\ref{theo:convergent result} can be derived by setting 
\begin{equation*}
\left\{
\begin{array}{ll}
\Delta_0 = e_{\upsilon/2\rho}(\s{x}_0) - \min_{\s{x}}\upsilon(\s{x}), \quad \kappa=\frac{4\varrho^2}{\rho (\rho+\varrho)} \quad \text{if}\quad \rho < \varrho, \\
\Delta_0 = e_{\upsilon/2\varrho}(\s{x}_0) - \min_{\s{x}}\upsilon(\s{x}), \quad \kappa=\frac{4\varrho^2}{\varrho (3\varrho-\rho)} \quad \text{if} \quad \rho \ge \varrho.
\end{array}
\right.
\end{equation*}

Finally, we confirm the assumptions in our theorems. To ensure the Lipschitz continuity of $L(\s{w}, \bm{\delta})$ in our problem, we practically impose an interval $[\bm{\delta}_{\mathrm{lower}}, \bm{\delta}_{\mathrm{upper}}]$. 
Furthermore, 
although we can obtain the closed-form expression of $\varphi(\s{w})$, i.e.,
\begin{equation*}
\varphi(\s{w}) = \frac{1}{N}\sum_{i=1}^N \bm{\mu}_i,
\end{equation*}
it is difficult to compute the closed-form expression of $\psi(\s{w}) = \arg\min_{\bm{\delta}} L(\s{w}, \bm{\delta})$ in our work.
However, for any $\overbar{\bm{\delta}} = \psi(\s{w})$, it holds that
\begin{equation*}
\overbar{\bm{\delta}} = \frac{1}{N} \left(\sum_{i=1}^N \bm{\mu}_i^2 - \frac{\bm{\mu}_i \circ \overbar{\bm{\delta}} \circ \phi(-\frac{\bm{\mu}_i}{\overbar{\bm{\delta}}})}{1-\Phi(-\frac{\bm{\mu}_i}{\overbar{\bm{\delta}}})} \right),
\end{equation*}
where $\circ$ means the Hadamard product. 
Since $\bm{\mu}_i \circ \phi(-\frac{\bm{\mu}_i}{\bm{\delta}})$ is bounded, 
we can obtain that $\psi(\s{w}) - \overbar{\bm{\delta}}$ is also bounded, and thus satisfies the assumption in Theorem~\ref{theo:convergence}.

%%%%%%%%%%%%%%%%%%%%%%%%%%%%%%%%%%%%%%%%%%%%%%%%%%%%%%%%%%%%%%%%%%%%%%%%%%%%%%%%%%%%%%%%%%%%%%%%%%%%%%%%%%%
%%%%%%%%%%%%%%%%%%%%%%%%%%%%%%%%%%%%%%%%%%%%%%%%%%%%%%%%%%%%%%%%%%%%%%%%%%%%%%%%%%%%%%%%%%%%%%%%%%%%%%%%%%%

\section{Additional details for experiments}
\label{sec:setting}

We implemented our approach via the PyTorch DL framework. For PD and DL2, we use the code provided by the respective authors.
The experiments were conducted on a GPU server with two Intel Xeon Gold 5118 CPU@2.30GHz, 400GB RAM, and 9 GeForce RTX 2080 Ti GPUs. The server ran Ubuntu 16.04 with GNU/Linux kernel 4.4.0.

{\bf Handwritten Digit Recognition.}
For this experiment, we used the LeNet-5 architecture, set the batch size to 128, the number of epochs to 60. 
For the baseline, DL2, and our approach, we optimized the loss using Adam optimizer with learning rate \textrm{1e-3}.
For the PD method, we direct follow the hyper-parameters provided in its Github repository.   
For the SL method, we set the weight of constraint loss by 0.5, and optimized the loss using Adam optimizer with learning rate \textrm{5e-4} (which is used in its Github repository).

{\bf Handwritten Formula Recognition.}
For this experiment, we used the LeNet-5 architecture, set the batch size to 128, and fixed the number of epochs to 600. 
For the baseline, DL2, and our approach, we optimized the loss using Adam optimizer with learning rate \textrm{1e-3} and weight decay \textrm{1e-5}.
For the PD method, we direct follow the hyper-parameters provided in its Github repository.   

{\bf Shortest Path Distance Prediction.}
We used the multilayer perceptron with $|V| \times |V|$ input neurons, three hidden layers with 1,000 neurons each, and an output layer of $|V|$ neurons. We used the first node (with the smallest index) as the source node.
The input is the adjacency matrix of the graph and the output is the distance from the source node to all the other nodes. 
We applied ReLU activations for each hidden layer and output layer (to make the prediction non-negative). 
The batch size and the number of epochs were set to 128 and 300, respectively. 
The network was optimized using Adam optimizer with learning rate \textrm{1e-4} and weight decay \textrm{5e-4}. 

{\bf CIAFR100 Image Classification.}
In this task, we set batch size by 128, and the number of epochs by 3,600. 
We trained the baseline model by SGD algorithm with learning rate 0.1, and set the learning rate decay ratio by 0.1 for each 300 epochs. 
For other methods, we used Adam optimizer with learning rate \textrm{5e-4}.

\section{Training Efficiency by injecting logical knowledge}
\begin{figure}[h]
\begin{center}
\subfloat[The training and test curves using $2\%$ labeled data.]{ %[]
\includegraphics[width=.24\columnwidth]{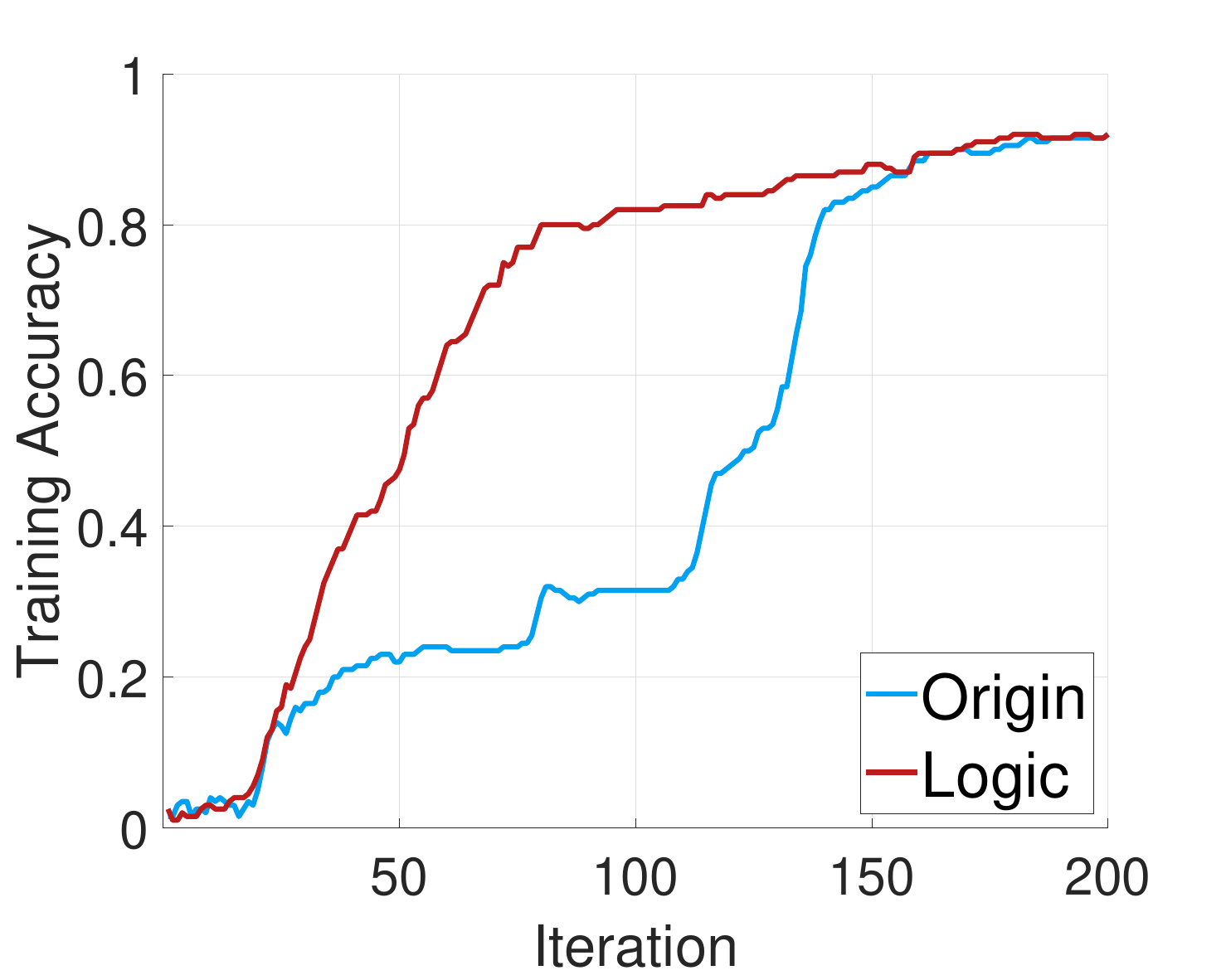}
\includegraphics[width=.24\columnwidth]{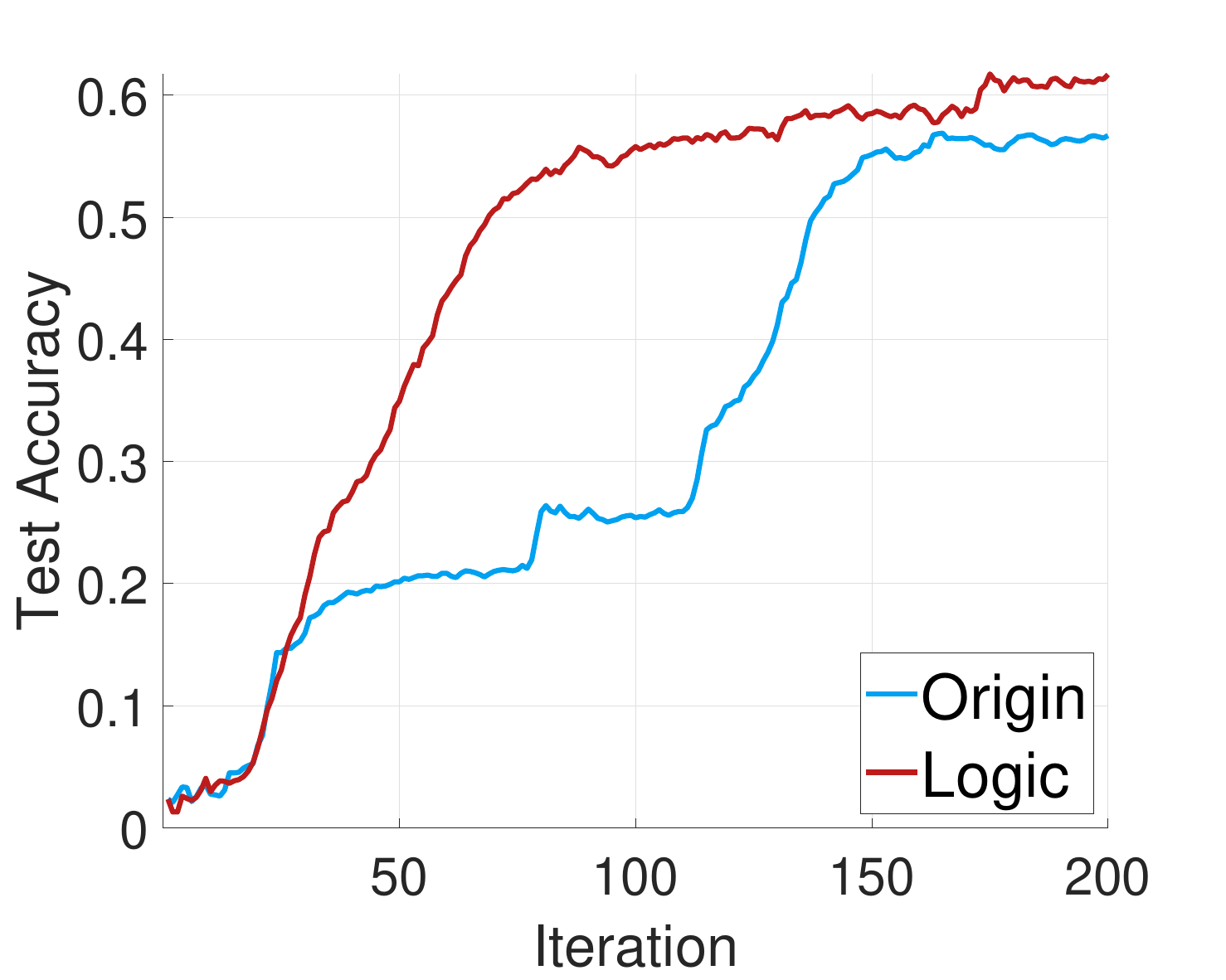}
}  %\newline
\subfloat[The training and test curves using $5\%$ labeled data.]{ %[]
\includegraphics[width=.24\columnwidth]{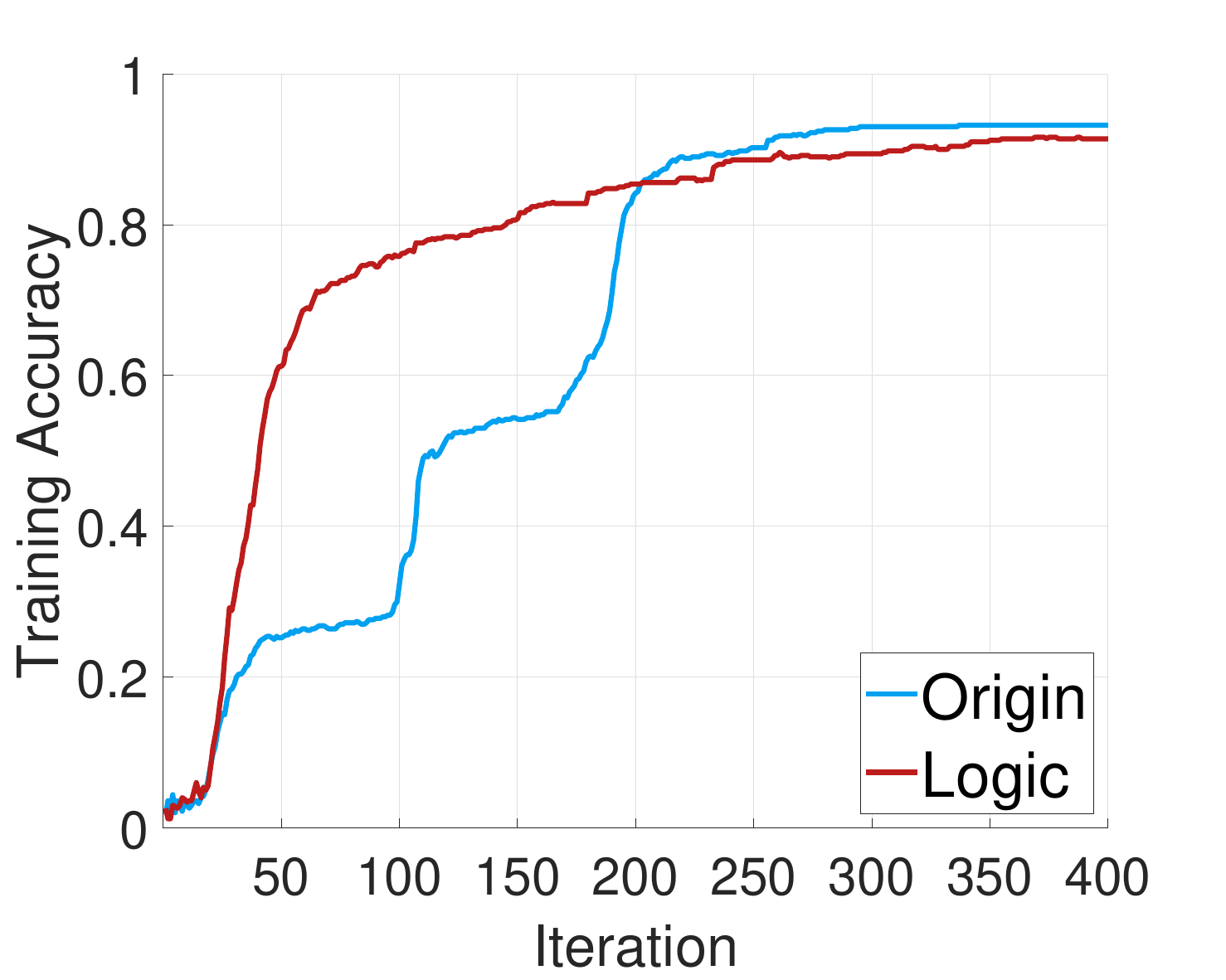}
\includegraphics[width=.24\columnwidth]{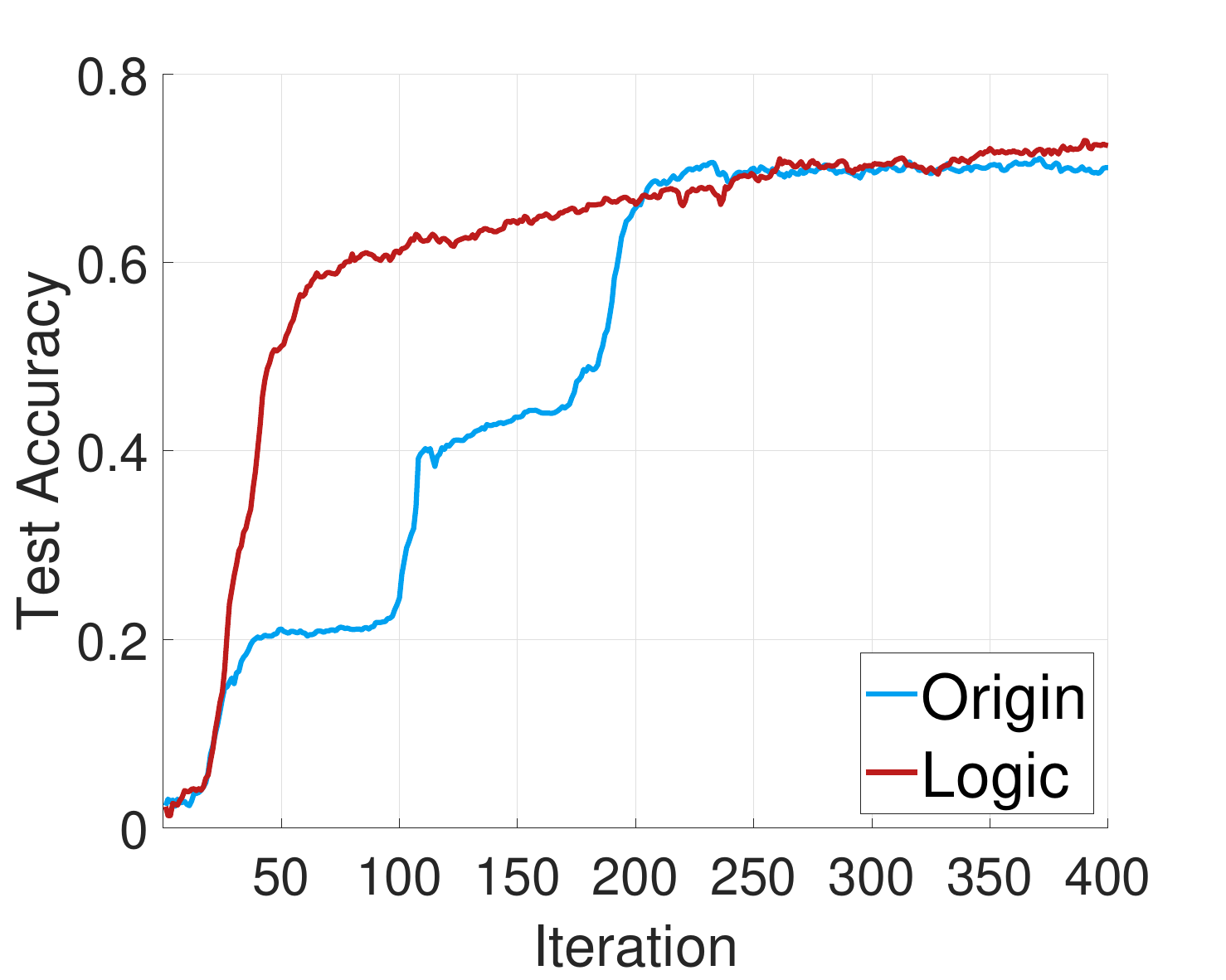}}
\caption{The learning curves of different settings. Our algorithm with logical constraints (the red curves) significantly boosts the training efficiency compared to the plain case (the blue curves).}
\label{fig:learning curves}
\end{center}
\end{figure}

On HWF dataset, we also investigate whether the logical knowledge can boost the training efficiency. 
In details, we use the labeled data only (i.e., $2\%$ and $5\%$ of training data) to train the model with/without the logical constraints. 
Figure~\ref{fig:learning curves} plots of training and test accuracy with different iterations. 
The results illustrate the high efficiency of our logical training algorithm compared with the plain training. 

\section{Results of transfer learning experiment}
\label{sec:transfer}
To show the robustness of the proposed approach in transfer learning, we use the STL10 dataset to evaluate a ResNet18 model trained on the CIFAR10 dataset. 
We set the batch size of 100, and the number of epochs by 300. 
For both baseline and our approach, we remove the data augmentation operators in training process, and optimize the loss by Adam optimizer with learning rate \textrm{1e-3}. 
The model is trained using 3,000 labeled images and 3,000 unlabeled images (only used for our approach). 
We design a similar logical constraint to that in the CIFAR100 experiment. 
For details, we define two superclasses for CIFAR10 dataset, i.e., machines (denoted by $m$) and animals (denoted by $a$), and the constraint is formulated as
\begin{equation*}
\left( p_{m}(\s{x})=0.0\%  \vee  p_{m}(\s{x}) = 100.0\% \right) 
\wedge \left( p_{a}(\s{x})=0.0\%  \vee  p_{a}(\s{x}) = 100.0\% \right). 
\end{equation*}

The results are shown in Table~\ref{tab:cifar10-res}. 
We do observe that the logical rule is more stable compared with model accuracy. 
Moreover, 
although such weak logical constraint only slightly improve the model (class and superclass) accuracy on CIFAR10, this increment is still preserved when domain shift occurs. 

\begin{table}[h!]
\centering
\caption{Results from CIFAR10 to STL10.}
%\resizebox{1\columnwidth}{!}{
\begin{tabular}{crrr}
  \toprule
  Datasets & Class Acc. (\%) & Superclass Acc. (\%) & Sat. (\%) \\  \midrule
  Baseline & 65.8 $\to$ 32.6  & 92.7 $\to$ 76.4 & 90.1 $\to$ 88.4 \\
  Ours & 68.1 $\to$ 35.4 & 93.9 $\to$ 78.1 & 92.6 $\to$ 91.9 \\
  \bottomrule
\end{tabular}
%}
\label{tab:cifar10-res}
\end{table}

\section{Results of training epoch runtime}
For the convergence rate, Theorem~\ref{theo:convergent result} states that our algorithm has the same convergence order with the baseline (stochastic (sub)gradient descent) algorithm. %with respect to the model parameters,
We further show the runtime of each epoch of the CIFAR100 experiment in Table~\ref{tab:runtime}. We can observe that, in each epoch, our method 
%involves vector multiplication/division and Gaussian function, and it 
introduces little extra computational cost.
%Following the reviewer's suggestion, we have conducted experiments on CIFAR100, and the runtime of each epoch is shown below. We can see that the extra computational overhead (`Ours'$-$`Baseline') is very low.
%{\small

\begin{table}[h]
\centering
\caption{Runtime of each training epoch.}
\begin{tabular}{crrrr}
  \toprule
  {Runtime} & {VGG16} & {ResNet50} & DenseNet100 \\
  \midrule
    Baseline & 30.2s & 26.4s & 26.7s \\
%     PD$_1$ & 
%     PD$_2$ & 
%     DL2 & 
    %\midrule
    Ours & 32.5s & 27.0s & 28.1s  \\
  \bottomrule
\end{tabular}
\label{tab:runtime}
\end{table}%}

\section{Results with standard deviation on CIFAR100}
For task 4, we include a detailed results with Standard Deviation as follows. 
\begin{table}[htb]
\centering
\caption{Accuracy of class classification on CIFAR100 dataset.}
\vspace{-1ex}
\begin{tabular}{crrrr}
  \toprule
   {Method} & {VGG16} & {ResNet50} & DenseNet100 \\
  \midrule
    Baseline & 51.0($\pm$0.52) & 46.6($\pm$1.38) & 36.2($\pm$0.61)  \\
    PD$_1$ & 46.7($\pm$0.99) & 52.7($\pm$0.86) & 42.3($\pm$0.67) \\
    PD$_2$ & 46.9($\pm$0.10) & 54.2($\pm$0.58) & 42.4($\pm$0.48) \\
    DL2 & 32.6($\pm$9.66) & 45.3($\pm$1.37) & 40.8($\pm$0.50) \\    
%    DL2 (tuned) &  39.4/55.1 & 37.6/49.3 & 27.8/36.0 \\   
    \midrule
    Ours & {\bf53.4($\pm$0.43)} & {\bf56.0($\pm$1.78)} & {\bf51.4($\pm$0.27)} \\
  \bottomrule
\end{tabular}
\label{tab:cifar100-acc}

% \begin{table}[htb]
% \centering
% \caption{Accuracy of superclass classification on CIFAR100 dataset.}
% \begin{tabular}{crrrr}
%   \toprule
%    {Method} & {VGG16} & {ResNet50} & DenseNet100 \\
%   \midrule
%     Baseline & 64.6($\pm$0.33) & 59.8($\pm$1.73) & 50.2($\pm$0.77)  \\
%     PD$_1$ & 61.2($\pm$0.87) & 65.3($\pm$0.78) & 55.8($\pm$0.86) \\
%     PD$_2$ & 61.1($\pm$0.09) & 67.1($\pm$0.54) & 56.0($\pm$0.40) \\
%     DL2 & 48.9($\pm$10.1) & 58.7($\pm$1.72) & 54.2($\pm$0.59) \\    
% %    DL2 (tuned) &  39.4/55.1 & 37.6/49.3 & 27.8/36.0 \\   
%     \midrule
%     Ours & {\bf68.5($\pm$0.45)} & {\bf69.7($\pm$1.59)} & {\bf66.4($\pm$0.38)} \\
%   \bottomrule
% \end{tabular}
% \label{tab:cifar100-acc}
% \end{table}

\vspace{1em}

\centering
\caption{The constraint satisfaction results (\%) of image classification on the CIFAR100 dataset.}
\vspace{-1ex}
\begin{tabular}{crrrr}
  \toprule
   {Method} & {VGG16} & {ResNet50} & DenseNet100 \\
  \midrule
    Baseline & 63.0($\pm$0.74) & 48.5($\pm$2.37) & 39.7($\pm$1.13)  \\
    PD$_1$ & 57.7($\pm$3.01) & 63.9($\pm$0.62) & 53.5($\pm$0.46)  \\
    PD$_2$ & 57.5($\pm$1.86) & 69.2($\pm$0.90) & 54.9($\pm$0.91) \\
    DL2 & 31.2($\pm$17.9) & 78.4($\pm$1.32) & 62.2($\pm$0.56) \\    
  %  DL2 (tuned) &  38.9 & 55.1 &  52.1 \\   
    \midrule
    Ours & {\bf 81.1($\pm$0.37)} & {\bf 87.2($\pm$0.62)} & {\bf 88.2($\pm$0.27)} \\
  \bottomrule
\end{tabular}
\label{tab:cifar100-cons}
\end{table}

\end{document}